\newtheorem{algorithm}{Algorithm}
\newtheorem{theorem}{Theorem}
\newtheorem{lemma}{Lemma}
\newtheorem{assumption}{A.\!\!}
\newtheorem{definition}{Definition}
\newtheorem{remark}{Comment}[section]
\begin{document}

\title{High-Dimensional $L_2$-Boosting: Rate of Convergence}

\author{\name Ye Luo \email kurtluo@hku.hk\\
       \addr Hong Kong University Business School\\
       The University of Hong Kong\\
       Hong Kong
       \AND
       \name Martin Spindler \email martin.spindler@uni-hamburg.de \\
       \addr Institute for Statistics\\
       University of Hamburg\\
       Germany
       \AND
       \name Jannis Kueck \email jannis.kueck@uni-hamburg.de \\
       \addr Institute for Statistics\\
       University of Hamburg\\
       Germany}

\editor{N.N.}

\maketitle

\begin{abstract}
Boosting is one of the most significant developments in machine learning. This paper studies the rate of convergence of $L_2$Boosting, which is tailored for regression, in a high-dimensional setting. Moreover, we introduce so-called  \textquotedblleft post-Boosting\textquotedblright. This is a post-selection estimator which applies ordinary least squares to the variables selected in the first stage by $L_2$Boosting. Another  variant is \textquotedblleft Orthogonal Boosting\textquotedblright where after each step an orthogonal projection is conducted. We show that both post-$L_2$Boosting and the orthogonal boosting achieve the same rate of convergence as LASSO in a sparse, high-dimensional setting. We show that the rate of convergence of the classical $L_2$Boosting depends on the design matrix described by a sparse eigenvalue constant.
To show the latter results, we derive new approximation results for the pure greedy algorithm, based on analyzing the revisiting behavior of $L_2$Boosting.
We also introduce feasible rules for early stopping, which can be easily implemented and used in applied work. Our results also allow a direct comparison between LASSO and boosting which has been missing from the literature. Finally, we present simulation studies and applications to illustrate the relevance of our theoretical results and to provide insights into the practical aspects of boosting. In these simulation studies, post-$L_2$Boosting clearly outperforms LASSO.
\end{abstract}

\begin{keywords}
$L_2$Boosting, post-$L_2$Boosting, orthogonal $L_2$Boosting, high-dimensional models, LASSO, rate of convergence, greedy algorithm, approximation
\end{keywords}

\section{Introduction}

In this paper we consider $L_2$Boosting algorithms for regression which are coordinatewise greedy algorithms that estimate the target function under $L_2$ loss. Boosting algorithms represent one of the major advances in machine learning and statistics in recent years. Freund and Schapire's AdaBoost algorithm for classification (\cite{freund:1997})  has attracted much attention in the machine learning community as well as in statistics. Many variants of the AdaBoost algorithm have been introduced and proven to be very competitive in terms of prediction accuracy in a variety of applications with a strong resistance to overfitting. Boosting methods were originally proposed as ensemble methods, which rely on the principle of generating multiple predictions and majority
voting (averaging) among the individual classifiers (cf \cite{nr:buhlmann.hothorn:2007}). An important step in the analysis of Boosting algorithms was Breiman's interpretation of Boosting  as a
gradient descent algorithm in function space, inspired
by numerical optimization and statistical estimation (\cite{nr:breiman:1996}, \cite{nr:breiman:1998}). Building on this insight, \cite{nr:friedman.hastie.ea:2000} and \cite{nr:friedman:2001} embedded Boosting algorithms into the framework of statistical estimation and additive basis expansion. This also enabled the application of boosting for regression analysis. Boosting for regression was proposed by \cite{nr:friedman:2001}, and then \cite{nr:buhlmann.yu:2003} defined and introduced $L_2$Boosting. An extensive overview of the development of Boosting and its manifold applications is given in the survey \cite{nr:buhlmann.hothorn:2007}.

In the high-dimensional setting there are two important but unsolved problems on $L_2$Boosting. First, the convergence rate of the $L_2$Boosting has not been thoroughly analyzed. Second, the pattern of the variables selected at each step of $L_2$Boosting is unknown.

In this paper, we show that these two problems are closely related. We establish results on the sequence of variables that are selected by $L_2$Boosting. At any step of $L_2$Boosting, we call this step "revisiting" if the variable chosen in this step has already been selected in previous steps. We analyze the revisiting behavior of $L_2$Boosting, i.e., how often $L_2$Boosting revisits. We then utilize these results to  derive an upper bound of the rate of convergence of the $L_2$Boosting.\footnote{Without analyzing the sequence of variables selected at each step of $L_2$Boosting, only much weaker results on convergence speed of $L_2$Boosting are available based on \cite{DeVore1996} and \cite{Livshitz2003}.} We show that frequency of revisiting, as well as the convergence speed of $L_2$Boosting, depend on the structure of the design matrix, namely on a constant related to the minimal and maximal restricted eigenvalue. Our bounds on convergence rate of $L_2$Boosting are in general slower than that of $LASSO$.

We also introduce in this paper the so--called \textquotedblleft post-Boosting\textquotedblright, and the orthogonal boosting variant.\footnote{Orthogonal boosting has also similarities with forward step-wise regression.}  
For orthogonal boosting see also ~Section 3.1 in \citet{LaiYuan2021}.
We show that both algorithms achieve the same rate of convergence as LASSO in a sparse, high-dimensional setting.

Compared to LASSO, boosting uses a somewhat unusual penalization scheme. The penalization is done by \textquotedblleft early stopping\textquotedblright\ to avoid overfitting in the high-dimensional case. In the low-dimensional case, $L_2$Boosting without stopping converges to the ordinary least squares (OLS) solution. In a high-dimensional setting early stopping is key for avoiding overfitting and for the predictive performance of boosting. We give a new stopping rule that is simple to implement and also works very well in practical settings as demonstrated in the simulation studies. We prove that such a stopping rule achieves the best bound obtained in our theoretical results.

In a deterministic setting, which is when there is no noise or error term in the model, boosting methods are also known as greedy algorithms (the pure greedy algorithm (PGA) and the orthogonal greedy algorithm (OGA)). In signal processing, $L_2$Boosting is essentially the same as the matching pursuit algorithm of \cite{ms:mallatzhang:1993}. We will employ the abbreviations post-BA (post-$L_2$Boosting algorithm) and oBA (orthogonal $L_2$Boosting algorithm) for the stochastic versions we analyze.

The rate of convergence of greedy algorithms has been analyzed in \cite{DeVore1996} and \cite{Livshitz2003}. \cite{Temlyakov:2011} is an excellent survey of recent results on the approximation theory of greedy approximation. To the best of our knowledge, with an additional assumption on the design matrix, we establish the first results on revisiting in the deterministic setting and greatly improve the existing results of \cite{DeVore1996}. These  results, being available in the appendix, are essential for our analysis for $L_2$Boosting, but might also be of interest in their own right.

As mentioned above, Boosting for regression was introduced by \cite{nr:friedman:2001}. $L_2$-Boosting was defined in \cite{nr:buhlmann.yu:2003}. Its numerical convergence, consistency, and statistical rates of convergence of boosting with early stopping in a low-dimensional setting were obtained in \cite{nr:zhang.yu:2005}. Consistency in prediction norm of $L_2$Boosting in a high-dimensional setting was first proved in \cite{nr:buhlmann:2006}. The numerical convergence properties of Boosting in a low-dimensional setting are analyzed in \cite{FGM15}. The orthogonal Boosting algorithm in a statistical setting under different assumptions is analyzed in \cite{ms:inglai:2011}. The rates for the PGA and OGA cases are obtained in \cite{ms:barron:2008}.

In this paper we consider linear basis functions. Classification  and  regression  trees,  and  the  widely  used  neural  networks,  involve  non-linear basis functions. We hope that our results can serve as a starting point for the analysis of non-linear basis functions which is left for future research.

The structure of this paper is as follows: In Section 2 the $L_2$Boosting algorithm (BA) is defined together with its modifications, the post-$L_2$Boosting algorithm (post-BA) and the orthogonalized version (oBA). In Section 3 we present a new approximation result for the pure greedy algorithm (PGA later) and an analysis of the revisiting behavior of the boosting algorithm. In Section 4 we present the main results of our analysis, namely an analysis of the boosting algorithm and some of its variants. The proofs together with some details of the new approximation theory for PGA are provided in the Appendix. Section 5 contains a simulation study that offers some insights into the methods and also provides some guidance for stopping rules in applications. Section 6 discusses two applications and provides concluding observations.


\textbf{Notation:} Let $z$ and $y$ be $n$-dimensional vectors. We define the empirical $L_2$-norm as $\mathbb{E}_n[z]=1/n \sum_{i=1}^n z_i$. Define $\lvert \lvert z\rvert\rvert$ to be the Euclidean norm, and $\lvert \lvert z\rvert\rvert_{2,n}:= \sqrt{\mathbb{E}_n[z^2]}$. Define $<\cdot,\cdot>_n$ to be the inner product defined by: $<z,y>_n=1/n \sum_{i=1}^n z_i y_i$.

For a random variable $X$, $\mathbb{E}[X]$ denotes its expectation. The correlation between the random variables $X$  and $Y$ is denoted by $corr(X,Y)$.

We use the notation $a\vee b=\max\{a,b\}$ and $a \wedge b=\min\{a,b\}$. We also use the notation $a \precsim b$ to mean $a \leq cb$ for some constant $c>0$ that does not depend on $n$; and $a \precsim_P b$ to mean $a=\mathcal{O}_P(b)$. For a set $U$, $supp(U)$ denotes the set of indices of which the corresponding element in $U$ is not zero. Given a vector $\beta \in \mathbb{R}^p$ and a set of indices $T \subset \{1,\ldots,p\}$, we denote by $\beta_T$ the vector in which $\beta_{T_j}=\beta_j$ if $j \in T$, $\beta_{T_j}=0$ if $j \notin T$.

\section{$L_2$-Boosting with componentwise least squares}
To define the boosting algorithm for linear models, we consider the following regression setting:
\begin{equation} y_i=x_i'\beta+\varepsilon_i, \quad i=1,\ldots,n, \end{equation}
with vector $x_i=(x_{i,1},\ldots,x_{i,p_n})$ consisting of $p_n$ predictor variables, $\beta$ a $p_n$-dimensional coefficient vector, and a random, mean-zero error term $\varepsilon_i$, $\mathbb{E}[\varepsilon_i|x_i]=0$. Further assumptions will be employed in the next sections.

We allow the dimension of the predictors $p_n$ to grow with the sample size $n$, and is even larger than the sample size, i.e., $dim(\beta)=p_n\gg n$. But we will impose a sparsity condition. This means that there is a large set of potential variables, but the number of variables which have non-zero coefficients, denoted by $s$, is small compared to the sample size, i.e.~ $s \ll n$. This can be weakened to approximate sparsity, to be defined and explained later. More precise assumptions will also be made later.
In the following, we will drop the dependence of $p_n$ on the sample size and denote it by $p$ if no confusion will arise.

$X$ denotes the $n \times p$ design matrix where the single observations $x_i$ form the rows. $X_j$ denotes the $j$th column of design matrix, and $x_{i,j}$ the $j$th component of the vector $x_i$.
We consider a fixed design for the regressors. We assume that the regressors are standardized with mean zero and variance one, i.e., $\mathbb{E}_n[x_{i,j}]=0$ and $\mathbb{E}_n[x_{i,j}^2]=1$ for $j=1,\ldots,p$,


The basic principle of Boosting can be described as follows. We follow the interpretation of \cite{nr:breiman:1998} and \cite{nr:friedman:2001} of Boosting as a functional gradient descent optimization (minimization) method. The goal is to minimize a loss function, e.g., an $L_2$-loss or the negative log-likelihood function of a model, by an iterative optimization scheme. In each step the (negative) gradient which is used in every step to update the current solution is modelled and estimated by a parametric or nonparametric statistical model, the so-called base learner. The fitted gradient is used for updating the solution of the optimization problem. A strength of boosting, besides the fact that it can be used for different loss functions, is its flexibility with regard to the base learners. We then repeat this procedure until some stopping criterion is met.

The literature has developed many different forms of boosting algorithms. In this paper we consider $L_2$Boosting with componentwise linear least squares, as well as two variants. All three are designed for regression analysis. \textquotedblleft $L_2$\textquotedblright refers to the loss function, which is the typical sum-of-squares of the residuals $Q_n(\beta)=\sum_{i=1}^n (y_i - x_i' \beta)^2$ typical in regression analysis. In this case, the gradient equals the residuals. \textquotedblleft Componentwise linear least squares\textquotedblright refers to the base learners. We fit the gradient (i.e. residuals) against each regressor ($p$ univariate regressions) and select the predictor/variable which correlates most highly with the gradient/residual, i.e., decreases the loss function most, and then update the estimator in this direction. We next update the residuals and repeat the procedure until some stopping criterion is met. We consider $L_2$Boosting and two modifications: the \textquotedblleft classical\textquotedblright one which was introduced in \cite{nr:friedman:2001} and refined in \cite{nr:buhlmann.yu:2003} for regression analysis, an orthogonal variant and post-$L_2$Boosting. As far as we know, post-$L_2$Boosting has not yet been defined and analyzed in the literature. In signal processing and approximation theory, the first two methods are known as the pure greedy algorithm (PGA) and the orthogonal greedy algorithm (OGA) in the deterministic setting, i.e. in a setting without stochastic error terms.

\subsection{$L_2$Boosting}
For $L_2$Boosting with componentwise least squares, the algorithm is given below.

\begin{algorithm}[$L_2$-Boosting]
\begin{enumerate}
    \item Start / Initialization: $\beta^0 = 0$ ($p$-dimensional vector), $f^0=0$, set maximum number of iterations $m_{stop}$ and set iteration index $m$ to $0$.
    \item At the $(m+1)^{th}$ step, calculate the residuals $U_i^m=y_i - x_i' \beta^m$.
    \item For each predictor variable $j=1,\ldots,p$ calculate the correlation with the residuals:
    \begin{equation*} \gamma^m_{j}:=\frac{\sum_{i=1}^n U_i^m x_{i,j}}{\sum_{i=1}^n x_{i,j}^2}=\frac{<U^m,x_j>_n}{\mathbb{E}_n[x_{i,j}^2]}.  \end{equation*}
    Select the variable $j^m$ that is the most correlated with the residuals\footnote{Equivalently, which fits the gradient best in a $L_2$-sense.}, i.e., $$\max_{1\leq j\leq p}|corr(U^{m},x_{j})|.$$
    \item Update the estimator: $\beta^{m+1}:=\beta^{m}+\gamma^m_{j^m} e_{j^m}$ where $e_{j^m}$ is the $j^m$th index vector
and
    $f^{m+1}:=f^{m}+\gamma^m_{j^m}x_{j^m}$
 \item Increase $m$ by one. If $m<m_{stop}$, continue with (2); otherwise stop.
\end{enumerate}
\end{algorithm}

For simplicity, write $\gamma^m$ for the value of $\gamma^m_{j^m}$ at the $m^{th}$ step.

The act of stopping is crucial for boosting algorithms, as stopping too late or never stopping leads to overfitting and therefore some kind of penalization is required. A suitable solution is to stop early, i.e., before overfitting takes place. \textquotedblleft Early stopping\textquotedblright\ can be interpreted as a form of penalization. Similar to LASSO, early stopping might induce a bias through shrinkage. A potential way to decrease the bias is by \textquotedblleft post-Boosting\textquotedblright which is defined in the next section.

In general, during the run of the boosting algorithm, it is possible that the same variable is selected at different steps, which means the variable is revisited. This revisiting behavior is key to the analysis of the rate of convergence of $L_2$Boosting. In the next section we will analyze the revisting properties of boosting in more detail.

\subsection{Post-$L_2$Boosting}

Post-$L_2$Boosting is a post-model selection estimator that applies ordinary least squares (OLS) to the model selected by the first-step, which is $L_2$Boosting. To define this estimator formally, we make the following definitions: $T:=supp(\beta)$ and $\hat{T}:=supp(\beta^{m^*})$, the support of the true model and the support of the model estimated by $L_2$Boosting as described above with stopping at $m^*$. A superscript $C$ denotes the complement of the set with regard to $\{1,\ldots,p\}$. In the context of LASSO, OLS after model selection was analyzed in \cite{belloni:2013}.
Given the above definitions, the post-model selection estimator or OLS post-$L_2$Boosting estimator will take the form
\begin{equation}
\tilde{\beta}= arg min_{\beta \in \mathbb{R}^p}\ Q_n(\beta): \beta_j=0 \quad \mbox{for each} \quad j \in \hat{T}^C.
\end{equation}

\begin{remark}
For boosting algorithms it has been recommended -- supported by simulation studies -- not to update by the full step size $x_{j^m}$ but only a small step $\nu$. The parameter $\nu$ can be interpreted as a shrinkage parameter, or alternatively, as describing the step size when updating the function estimate along the gradient. Small step sizes (or shrinkage) make the boosting algorithm slower to converge and require a larger number of iterations. But often the additional computational cost in turn results in better out-of-sample prediction performance. By default, $\nu$ is usually set to $0.1$. Our analysis in the later sections also extends to a restricted step size $0<\nu<1$.

\end{remark}
\subsection{Orthogonal $L_2$Boosting}
A variant of the Boosting Algorithm is orthogonal Boosting (oBA) or the Orthogonal Greedy Algorithm in its deterministic version. Only the updating step is changed: an orthogonal projection of the response variable is conducted on all the variables which have been selected up to this point. The advantage of this method is that any variable is selected at most once in this procedure, while in the previous version the same variable might be selected at different steps which makes the analysis far more complicated.
More formally, the method can be described as follows by modifying Step (4):
\begin{algorithm}[Orthogonal $L_2$Boosting]
\begin{equation*} (4') \quad \hat{y}^{m+1} \equiv f^{m+1} = P_m y \quad \text{and} \quad U_i^{m+1}=Y_i-\hat{Y}_i^{m+1},\end{equation*}
where $P_m$ denotes the projection of the variable $y$ on the space spanned by first $m$ selected variables (the corresponding regression coefficient is denoted $\beta^{m}_o$.)
\end{algorithm}

Define $X_o^m$ as the matrix which consists only of the columns which correspond to the variables selected in the first $m$ steps, i.e. all $X_{j_k}$, $k=0,1,\ldots,m$.
Then we have:
\begin{eqnarray}
 \beta^{m}_o &=& ({X_o^m}' X_o^m)^{-1} {X_o^m}' y\\
\hat{y}^{m+1}=f_o^{m+1}&=& X_o^m \beta^{m}_o
\end{eqnarray}

\begin{remark}
Orthogonal $L_2$Boosting might be interpreted as post-$L_2$Boosting where the refit takes place after each step.
\end{remark}

\begin{remark}
Both post-Boosting and orthogonal Boosting require, to be well-defined, that the number of selected variables be smaller than the sample size . This is enforced by our stopping rule as we will see later.
\end{remark}

\section{New Approximation Results for the Pure Greedy Algorithm}

In approximation theory a key question is how fast functions can be approximated by greedy algorithms. Approximation theory is concerned with deterministic settings, i.e., the case without noise. Nevertheless, to derive rates for the $L_2$Boosting algorithm in a stochastic setting, the corresponding results for the deterministic part play a key role. For example, the results in \cite{nr:buhlmann:2006} are limited by the result used from approximation theory, namely the rate of convergence of weak relaxed greedy algorithms derived in \cite{Temlyakov2000}. For the pure greedy algorithm \cite{DeVore1996} establish a rate of convergence of $m^{-1/6}$ in the $\ell_2-$norm, where $m$ denotes the number of steps iterated in the PGA. This rate was improved to $m^{-11/62}$ in \cite{Konyagin1999}, but \cite{Livshitz2003} establish a lower bound of $m^{-0.27}$.
The class of functions $\mathcal{F}$ which is considered in those papers is determined by general dictionaries $\mathcal{D}$ and given by
\[
\mathcal{F} = \{ f \in \mathcal{H}: f = \sum_{k \in \Lambda} c_k w_k, w_k \in \mathcal{D}, |\Lambda| < \infty \quad \mbox{and} \quad \sum_{k \in \Lambda} |c_k| \leq M \},
\]
where $M$ is some constant, $\mathcal{H}$ denotes a Hilbert space, and the sequence $(c_k)$ are the coefficients with regard to the dictionary $\mathcal{D}$.

In this section we discuss the approximation bound of the pure greedy algorithm  where we impose an additional but widely used assumption on the Gram matrix $\mathbb{E}_n[x_i x_i']$ in high dimensional statistics to tighten the bounds. First, the assumptions and an initial result describing the revisiting behavior will be given, then a new approximation result based on the revisiting behavior will be presented which is the core of this section. The proofs for this section and a detailed analysis of the revisiting behavior of the algorithm are moved to Appendix A.

\subsection{Assumptions}

For the analysis of the pure greedy algorithm, the following two assumptions are made, which are standard for high-dimensional models.

\begin{assumption}[Exact Sparsity]\label{Sparsity}
$T=supp(\beta)$ and $s=|T|\ll n$.
\end{assumption}

\begin{remark}
The exact sparsity assumption can be weakened to an approximate sparsity condition, in particular in the context of the stochastic version of the pure greedy algorithm ($L_2$Boosting). This means that the set of relevant regressors is small, and the other variables do not have to be exactly zero but must be negligible compared to the estimation error.
\end{remark}

For the second assumption, we make a restricted eigenvalue assumption which is also commonly used in the analysis of LASSO.

Define $\Sigma(s,M):=\{A|dim(A)\leq s\times s, A \textrm{ is any diagonal submatrices of }M\}$, for any square matrix $M$.

We need the following definition.

\begin{definition}
The smallest and largest restricted eigenvalues are defined as
 $$\phi_s(s,M):=\min_{W\in \Sigma(s,M)} \phi_s(W),$$
and
$$\phi_l(s,M):=\max_{W\in \Sigma(s,M)} \phi_l(W).$$
$\phi_s(W)$ and $ \phi_l(W)$ denote the smallest and largest eigenvalue of the matrix $W$.
\end{definition}

\begin{assumption} \textbf{(SE)}\label{SE}
We assume that there exist constant $0<c<1$ and  $C$ such that $0<1-c\leq \phi_s(s',E_n[x_i'x_i])\leq \phi_l(s',E_n[x_i'x_i])\leq C < \infty$ for any $s'\leq M_0$,
where $M_0$ is a sequence such that $M_0\rightarrow \infty$ slowly along with $n$, and $M_0\geq s$.
\end{assumption}

\begin{remark}
This condition is a variant of the so-called \textquotedblleft  sparse eigenvalue condition\textquotedblright, which is used for the analysis of the Lasso estimator. A detailed discussion of this condition is given in \cite{BCH2011:InferenceGauss}. Similar conditions, such as the restricted isometry condition or the restricted eigenvalue condition, have been used for the analysis of the Dantzig Selector (\cite{candes:2007}) or the Lasso estimator (\cite{BickelRitovTsybakov2009}). An extensive overview of different conditions on matrices and how they are related is given by \cite{vandegeer2009}. To assume that $\phi_l(m, E_n[x_i x_i']) > 0$ requires that all empirical Gram submatrices formed by any $m$ components of $x_i$ are positive definite. It is well-known that Condition SE is fulfilled for many designs of
interest.
\end{remark}

More restrictive requirements that $M_0$ should be large enough will be imposed in order to get good convergence rate for the PGA, i.e., $L_2$Boosting without a noise term.

Define $V^m=X\alpha^m$ as the residual for the PGA. $\alpha^m$ is defined as the difference between the true parameter vector $\beta$ and the approximation at the $m^{th}$ step, $\beta^m$, $\alpha^m= \beta - \beta^m$. We would like to explore how fast $V^m$ converges to $0$. In our notation, $||V^{m+1}||^2_{2,n}=||V^m||_{2,n}^2-(\gamma^m)^2$, therefore $||V^m||^2_{2,n}$ is non-increasing in $m$. 

As described in Algorithm 1, the sequence of variables selected in the PGA is denoted by $j^0,j^1,\ldots$. Define $T^m:=T\cup \{j^0,j^1,\ldots,j^{m-1}\}$. Define $q(m):=|T^m|$ as the cardinality of $T^m$, $m=0,1,\ldots$. It is obvious that $q(m)\leq m+s$. 

It is essential to understand how PGA revisits the set of already selected variables. To analyze the revisiting behavior of the PGA, some definitions are needed to fix ideas.

\begin{definition}\label{revisiting:definitions}
We say that the PGA is revisiting at the $m^{th}$ step, if and only if $j^{m-1}\in T^{m-1}$. We define the sequence of labels $\mathcal{A}:=\{A_1,A_2,...\}$ with each entry $A_i$ being either labelled as $R$(revisiting) or $N$(non-revisiting).
\end{definition}

\begin{lemma}\label{Lemma:FR}
Assume that assumptions A.1-A.2 hold. Assume that $m+k<M_0$. Consider the sequence of steps $1,2,...,m$. Denote $\mu_a(c)=[1-(1+\frac{1}{(1-c)^2})^{-\frac{1}{1-c}}]$ for any $c\in (0,1)$.
Then for any $\delta>0$, the number of $R$s in the sequence $\mathcal{A}$ at step $m$, denoted $R(m)$, must satisfy:
$$|R(m)|\geq \frac{1-(1+\delta)\mu_a(c)}{2-(1+\delta)\mu_a(c)}m-\frac{(1+\delta)\mu_a(c)}{2-(1+\delta)\mu_a(c)}q(0).$$
\end{lemma}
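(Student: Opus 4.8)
The plan is to convert the desired lower bound on $|R(m)|$ into an \emph{upper} bound on the number of ``fresh'' coordinates that the PGA selects once and never returns to, and then to control that number by playing the greedy progress estimate against Assumption SE.

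\emph{Step 1: bookkeeping and reduction.} Set $N(m):=m-R(m)$, the number of $N$-labels among $A_1,\dots,A_m$. A non-revisiting step selects $j^{m-1}\notin T^{m-1}=T\cup\{j^0,\dots,j^{m-2}\}$, i.e.\ an index outside $T$ and outside all previously selected indices, so it enlarges $T^m$ by exactly one; a revisiting step leaves $T^m$ unchanged. Hence $q(m)=q(0)+N(m)$, and a short rearrangement shows the inequality to be proved is equivalent to
\[
N(m)-R(m)\ \le\ (1+\delta)\,\mu_a(c)\,q(m).
\]
Next, to each fresh index that is revisited at least once before step $m$ I would assign its \emph{first} revisit; distinct such indices receive distinct $R$-steps, so the number of revisited fresh indices is at most $R(m)$, and therefore $N(m)-R(m)\le N'$, where $N'$ is the number of indices selected for the first time at a non-revisiting step that are then never selected again up to step $m$. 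It thus suffices to prove $N'\le(1+\delta)\,\mu_a(c)\,q(m)$.

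\emph{Step 2: the two analytic ingredients.} The first is the usual greedy progress estimate: since $j^m$ maximizes $|\langle V^m,x_j\rangle_n|$ and the columns are normalized, $\|V^m\|_{2,n}^2=\langle V^m,X\alpha^m\rangle_n\le\|\alpha^m\|_1|\gamma^m|\le\sqrt{q(m)}\,\|\alpha^m\|\,|\gamma^m|$ (as $\mathrm{supp}(\alpha^m)\subseteq T^m$), while Assumption SE, applicable while $q(m)\le M_0$, gives $\|\alpha^m\|^2\le\|V^m\|_{2,n}^2/(1-c)$; combining yields $(\gamma^m)^2\ge\frac{1-c}{q(m)}\|V^m\|_{2,n}^2$, hence
\[
\|V^{m+1}\|_{2,n}^2\ \le\ \Bigl(1-\tfrac{1-c}{q(m)}\Bigr)\|V^m\|_{2,n}^2 .
\]
The second ingredient is a ``permanence'' observation that lets the revisiting count enter: if step $m+1$ is non-revisiting then $j^m\notin T$, so $\beta_{j^m}=\beta^m_{j^m}=0$, whence $\alpha^m_{j^m}=0$ and $\alpha^{m+1}_{j^m}=-\gamma^m$; if $j^m$ is never selected again, this coordinate of $\alpha$ stays equal to $-\gamma^m$ through step $m$. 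Since SE also gives $\|V^{m}\|_{2,n}^2=\|X\alpha^{m}\|_{2,n}^2\ge(1-c)\|\alpha^{m}\|^2$, the squared correlations $(\gamma^{t})^2$ of all never-revisited fresh selections, sitting in pairwise distinct coordinates of $\alpha^m$, are simultaneously trapped inside $\|V^m\|_{2,n}^2$.

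\emph{Step 3: the counting and the obstacle.} Let the never-revisited fresh selections occur at times $t_1<\dots<t_{N'}<m$. Combining permanence, the SE lower bound on $\|V^m\|_{2,n}^2$, the greedy lower bound on $(\gamma^{t_i})^2$, and $q(t_i)\le q(m)$,
\[
\|V^{m}\|_{2,n}^2\ \ge\ (1-c)\sum_{i=1}^{N'}(\gamma^{t_i})^2\ \ge\ \frac{(1-c)^2}{q(m)}\sum_{i=1}^{N'}\|V^{t_i}\|_{2,n}^2 .
\]
The residual norms on the right are, by the progress estimate, geometrically larger than $\|V^{m}\|_{2,n}^2$ as one moves back in time: at least the step $t_i$ itself contracts the residual by $1-\tfrac{1-c}{q(m)}$, so $\|V^{t_i}\|_{2,n}^2\ge(1-\tfrac{1-c}{q(m)})^{-(N'-i)}\|V^{m}\|_{2,n}^2$, and counting \emph{all} intervening steps (while tracking that $q$ is still only of order $q(m)$ throughout, which is where the hypothesis $m+k<M_0$ is used) makes the effective exponent essentially proportional to $q(m)$. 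Summing the geometric series, cancelling $\|V^m\|_{2,n}^2$, and using $(1-\tfrac{1-c}{q(m)})^{q(m)}\to e^{-(1-c)}$ for large $m$ converts the displayed inequality into $N'\le(1+\delta)\,\mu_a(c)\,q(m)$; feeding this back through Step 1 gives the lemma. I expect the main obstacle to be exactly this last extraction: arranging the accumulation of the frozen coordinates against the residual decay so that the resulting constant collapses precisely to $1-(1+\tfrac1{(1-c)^2})^{-1/(1-c)}$ rather than to a weaker expression (a crude version of the argument only yields $N'\le q(m)/(1-c)^2$), and verifying throughout that the sparse-eigenvalue range is never exceeded. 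The bookkeeping in Step 1, the greedy bound, and the permanence observation are the routine parts.
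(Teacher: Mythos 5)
Your Step 1 and Step 2 are correct and coincide with the paper's own route: the identity $q(m)=q(0)+N(m)$, the reduction of the claimed bound to $N(m)-R(m)\le(1+\delta)\mu_a(c)\,q(m)$, the injection of revisited fresh indices into distinct $R$-steps (so $N(m)-R(m)\le N'=|\widetilde N(m)|$), the frozen-coordinate budget $\frac{1}{1-c}\|V^m\|_{2,n}^2\ge\|\alpha^m\|^2\ge\sum_i(\gamma^{t_i})^2$, and the per-step greedy bound $(\gamma^{t})^2\ge\frac{1-c}{q(t)}\|V^{t}\|_{2,n}^2$ are exactly the ingredients the paper recycles from its Lemma on $\mu_e(c)$.

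The genuine gap is Step 3, and it is precisely the step that distinguishes this lemma from the cruder revisiting bound. The calculation you sketch — replace $q(t_i)$ and all intermediate $q(u)$ by $q(m)$, sum a geometric series with fixed ratio $(1-\frac{1-c}{q(m)})^{-1}$, and invoke $(1-\frac{1-c}{q(m)})^{q(m)}\to e^{-(1-c)}$ — cannot produce $\mu_a(c)$. In the binding configuration the never-revisited fresh selections are exactly the last $N'$ steps, so $m-t_i=N'-i+1$: ``counting all intervening steps'' adds nothing, the exponent is not of order $q(m)$, and the fixed-ratio series gives only $r^{-N'}-1\le\frac{1}{1-c}$, i.e. $N'\lesssim\frac{q(m)}{1-c}\log\bigl(1+\frac{1}{1-c}\bigr)$. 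This constant is strictly worse than $\mu_a(c)$ (at $c=0$ it is $\log 2\approx0.69$ versus $0.5$) and exceeds $1$ already for moderate $c$, in which case the target inequality becomes vacuous; so no amount of bookkeeping on top of this relaxation recovers the stated lemma. The missing idea, which is the paper's actual argument, is to exploit that along never-revisited fresh steps $q$ must increment, so in the extremal configuration $q(t_i)=q(m)-(m-t_i)$; using this varying $q$ simultaneously in the step-size bound $(\gamma^{t_i})^2\ge\frac{1-c}{q(t_i)}\|V^{t_i}\|_{2,n}^2$ and in the backward product $\prod_{u}(1-\frac{1-c}{q(u)})^{-1}\approx\bigl(\frac{q(m)}{q(t_i)}\bigr)^{1-c}$ turns the budgeted sum into roughly $(1-c)\,q(m)^{1-c}\sum_{j\le N'}(q(m)-j)^{-(2-c)}\approx\frac{q(m)^{1-c}}{1}\bigl[(q(m)-N')^{c-1}-q(m)^{c-1}\bigr]$, and comparing with the budget $\frac{1}{1-c}$ yields $\bigl(\frac{q(m)}{q(m)-N'}\bigr)^{1-c}\le1+\frac{1+\delta}{(1-c)^2}$, i.e. exactly $N'\le(1+\delta')\mu_a(c)q(m)$ — a power-law computation, not an exponential one (the exponential-type accounting is what gives the weaker $\mu_e(c)=1-e^{-1/(1-c)^2}$ of the appendix lemma). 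One must also justify reducing to this extremal placement of the $t_i$ and trajectory of $q(\cdot)$ (the paper asserts the relevant sum is minimized there); your uniform relaxation sidesteps that issue but pays for it with exactly the loss in the constant, so as written the proposal does not prove the lemma.
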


The lower bound stated in Lemma \ref{Lemma:FR} has room for improvement, e.g., when $c=0$, $|R(m)|/m=1$ as it is shown in Lemma \ref{Lemma:simple results on revisiting} in Appendix A, while we get $1/2$ in Lemma \ref{Lemma:FR} as lower bounds of $|R(m)/m|$ as $m$ becomes large enough. Deriving tight bounds is an interesting question for future research. More detailed properties of the revisiting behavior of $L_2$Boosting are provided in the Appendix A.

\subsection{Approximation bounds on PGA}

With an estimated bound for the proportion of $R$s in the sequence $\mathcal{A}$, we are now able to derive an upper bound for $||V^m||_{2,n}^2$. By Lemma \ref{Lemma:FR}, define $n^*_k:=\frac{m+\mu_a(c)q(k)}{2-\mu_a(c)}$ which is an upper bound of $|q(m+k)-q(k)|$ up to constant converging to $1$ as $q(m)$ goes to infinity. Before we state the main result of this section we present an auxiliary lemma.

\begin{lemma}\label{lemma:appr-ori}
Let $\lambda>0$ be a constant. Let $m=\lambda q(k)$. Consider the steps numbered as $k+1,...,k+m$. Assume that $m+k<M_0$. Define $\zeta(c,\lambda):=\frac{\frac{(1-c)((1-\mu_a(c))\lambda-\mu_a(c))}{2+\lambda}}{\log(\frac{2+\lambda}{2-\mu_a(c)})}+1-c$ for all $\lambda\geq \frac{\mu_a(c)}{1-\mu_a(c)}$.

Then, for any $\delta>0$ and $q(k)>0$ large enough, the following statement holds:
$$||V^{m+k}||_{2,n}^2\leq ||V^k||_{2,n}^2 \left(\frac{q(k)}{q(k)+n^*_k}\right)^{\zeta(c,\lambda)-\delta}.$$
\end{lemma}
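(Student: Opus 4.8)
The plan is to track how the squared residual norm $\|V^j\|_{2,n}^2$ contracts across the block of steps $k+1,\dots,k+m$, and to convert the revisiting lower bound of Lemma~\ref{Lemma:FR} into a quantitative lower bound on the per-step decrement. Recall that at each step $\|V^{j+1}\|_{2,n}^2 = \|V^j\|_{2,n}^2 - (\gamma^j)^2$, so the whole question is: how large is $(\gamma^j)^2$ relative to $\|V^j\|_{2,n}^2$? The key observation is that $\gamma^j$ is the largest absolute correlation of the current residual $V^j = X\alpha^j$ with a dictionary element, and since $V^j$ lies in the column span of the $q(j)$ variables indexed by $T^j$, the restricted eigenvalue assumption (SE) gives a lower bound: writing $V^j$ in terms of the (at most $q(j)$ many) nonzero coordinates of $\alpha^j$, one has $\|V^j\|_{2,n}^2 \le \phi_l(q(j), E_n[x_ix_i'])\,\|\alpha^j\|^2$ and $\max_\ell |\langle V^j, x_\ell\rangle_n| \ge \|X'V^j\|_\infty / \text{(normalization)} \gtrsim \|\alpha^j\|_1^{-1}\,$ type estimates; more precisely, the standard greedy bound gives $(\gamma^j)^2 \ge \|V^j\|_{2,n}^4 / (\phi_l \cdot \|\alpha^j\|_1^2)$, and $\|\alpha^j\|_1^2 \le q(j)\,\|\alpha^j\|^2 \le q(j)\,\|V^j\|_{2,n}^2 / \phi_s$. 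Combining, $(\gamma^j)^2 \ge \frac{\phi_s}{\phi_l}\cdot\frac{\|V^j\|_{2,n}^2}{q(j)} \ge \frac{1-c}{q(j)}\,\|V^j\|_{2,n}^2$ on revisiting steps (where no new variable enters, so $q$ does not grow), and a similar but slightly weaker bound on non-revisiting steps.

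Next I would set up the recursion. Over the block, $q(j)$ increases from $q(k)$ to at most $q(k)+n^*_k$, and by Lemma~\ref{Lemma:FR} at least a fraction $\frac{1-(1+\delta)\mu_a(c)}{2-(1+\delta)\mu_a(c)}$ of the $m$ steps are revisiting. Writing $r_j := \|V^{j}\|_{2,n}^2 / \|V^{j-1}\|_{2,n}^2 \le 1 - \frac{1-c}{q(j)}$ at least on revisiting steps, and $\le 1$ otherwise, one obtains
\[
\frac{\|V^{m+k}\|_{2,n}^2}{\|V^k\|_{2,n}^2} \le \prod_{j \in \text{revisits}} \left(1 - \frac{1-c}{q(j)}\right).
\]
Taking logs and using $\log(1-x) \le -x$, this is bounded by $\exp\!\left(-(1-c)\sum_{j \in \text{revisits}} \frac{1}{q(j)}\right)$. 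The sum is minimized (worst case) when revisits are packed where $q(j)$ is largest, i.e., late in the block; bounding $q(j) \le q(k)+n^*_k$ for all such $j$ and using that there are at least $\frac{1-\mu_a(c)}{2-\mu_a(c)}m$ of them (absorbing $\delta$) gives a bound of the form $\exp\!\left(-(1-c)\cdot\frac{(1-\mu_a(c))m - \mu_a(c)q(k)}{(2-\mu_a(c))(q(k)+n^*_k)}\right)$ — but this overcounts the loss; a sharper argument keeps the sum $\sum 1/q(j)$ as a Riemann sum approximating $\int \frac{dq}{q} = \log\frac{q(k)+n^*_k}{q(k)}$, weighted by the revisiting fraction. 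Substituting $m = \lambda q(k)$, $n^*_k = \frac{m+\mu_a(c)q(k)}{2-\mu_a(c)} = \frac{(\lambda+\mu_a(c))q(k)}{2-\mu_a(c)}$, so $\frac{q(k)+n^*_k}{q(k)} = \frac{2+\lambda}{2-\mu_a(c)}$, and the exponent becomes exactly $(1-c)\,\frac{(1-\mu_a(c))\lambda - \mu_a(c)}{2+\lambda} \big/ \log\!\frac{2+\lambda}{2-\mu_a(c)}$ times $\log\frac{q(k)+n^*_k}{q(k)}$, which is how $\zeta(c,\lambda)$ is defined (the extra additive $1-c$ presumably coming from the contribution of the non-revisiting steps, each of which still contracts by a factor controlled by SE, or from the very last variable's exact fit). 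This yields precisely
\[
\|V^{m+k}\|_{2,n}^2 \le \|V^k\|_{2,n}^2 \left(\frac{q(k)}{q(k)+n^*_k}\right)^{\zeta(c,\lambda)},
\]
and the $-\delta$ in the exponent absorbs all the places where we replaced $(1+\delta)\mu_a(c)$ by $\mu_a(c)$, approximated the Riemann sum, and discarded lower-order terms — valid for $q(k)$ large enough.

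The main obstacle I expect is the passage from the revisiting \emph{count} to the revisiting \emph{weighted sum} $\sum_{j\in\text{revisits}} 1/q(j)$ with the correct constant. Lemma~\ref{Lemma:FR} only says how many revisits occur by step $m$, not \emph{where} in the block they occur, and since $1/q(j)$ is decreasing in $j$, the adversary would place all revisits at the end (large $q$, small decrement) to make the product as large as possible. One must therefore either (i) apply Lemma~\ref{Lemma:FR} not just at the endpoint but at every intermediate step $j$ to get a running lower bound $|R(j)| \ge \frac{1-(1+\delta)\mu_a(c)}{2-(1+\delta)\mu_a(c)}j - (\cdots)q(0)$, which constrains the revisits to be reasonably spread out, and then do an Abel-summation / rearrangement argument to lower-bound $\sum 1/q(j)$ over revisiting $j$ by something like $\frac{1-\mu_a(c)}{2-\mu_a(c)}\log\frac{q(k)+n^*_k}{q(k)}$; or (ii) reparametrize the sum by $q$ rather than by step index, noting each non-revisiting step increments $q$ by one and using the density of revisits between consecutive $q$-increments. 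Getting the constant to land exactly on $\zeta(c,\lambda)$ — in particular reconstructing the clean ``$+\,(1-c)$'' term and the logarithm in the denominator — will require care about which steps contribute the continuous $\int dq/q$ piece versus the ``free'' final contraction, and this bookkeeping is where the proof is most delicate.
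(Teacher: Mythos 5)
Your accounting loses exactly the piece of the exponent you admit you cannot locate. By bounding every non-revisiting step by a factor $1$ and keeping only the revisiting-step contractions, the best you can get is the exponential factor $\exp\bigl(-(1-c)\tfrac{(1-\mu_a(c))\lambda-\mu_a(c)}{2+\lambda}\bigr)$, i.e.\ an exponent of $\zeta(c,\lambda)-(1-c)$, not $\zeta(c,\lambda)-\delta$. The additive $1-c$ in $\zeta(c,\lambda)$ is precisely the contribution of the non-revisiting steps: they also contract by $\bigl(1-\tfrac{1-c}{q(j)}\bigr)$ (Lemma \ref{Lemma_naive_bound}(a) holds at every step, not just revisits), and because $q$ increments by one at each such step, their product telescopes to $\prod_{i=1}^{n_0}\bigl(1-\tfrac{1-c}{q(k)+i-1}\bigr)\approx\bigl(\tfrac{q(k)}{q(k)+n_0}\bigr)^{1-c}$. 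This is also where your ``Riemann sum approximating $\int dq/q$'' idea actually lives: on revisiting steps $q(j)$ is constant, so the sum $\sum_{j\in\text{revisits}}1/q(j)$ is not a discretization of $\int dq/q$; the logarithm comes from the $N$-block, which your proposal discarded. The sentence ``the extra additive $1-c$ presumably coming from \dots'' is therefore not a gloss over routine bookkeeping but the missing half of the proof.

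The placement worry, and the machinery you propose for it (applying Lemma \ref{Lemma:FR} at every intermediate step plus an Abel-summation rearrangement), is a detour the paper avoids. Since every step contracts by $\bigl(1-\tfrac{1-c}{q(j)}\bigr)$ and $q$ grows only at non-revisits, the worst-case \emph{ordering} is simply all $N$s first and all $R$s last (Lemma \ref{Lemma:simple results on revisiting}(3)); no knowledge of where the revisits fall is needed. The resulting bound $\prod_{i=1}^{n_0}\bigl(1-\tfrac{1-c}{q(k)+i-1}\bigr)\bigl(1-\tfrac{1-c}{q(k)+n_0}\bigr)^{m-n_0}$ is increasing in the number $n_0$ of non-revisits, so one substitutes the maximal value $n_0\le(1+\delta)n^*_k$ allowed by Lemma \ref{Lemma:FR}. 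With $m=\lambda q(k)$ and $q(k)+n^*_k=\tfrac{2+\lambda}{2-\mu_a(c)}q(k)$, the $N$-block gives $\bigl(\tfrac{q(k)}{q(k)+n^*_k}\bigr)^{1-c}$ and the $R$-block gives $\exp\bigl(-(1-c)\tfrac{(1-\mu_a(c))\lambda-\mu_a(c)}{2+\lambda}\bigr)$; rewriting the exponential as a power of $\tfrac{q(k)}{q(k)+n^*_k}$ produces the logarithm in the denominator of $\zeta(c,\lambda)$, and the two factors together give exactly $\bigl(\tfrac{q(k)}{q(k)+n^*_k}\bigr)^{\zeta(c,\lambda)-\delta}$ for $q(k)$ large. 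So the fix is not sharper control of the revisit locations but keeping the $N$-step contractions and the monotonicity-in-$n_0$ observation.
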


Based on Lemma \ref{lemma:appr-ori}, we are able to develop our main results on approximation theory of pure greedy algorithm under $L_2$ loss and Assumptions \ref{Sparsity} and \ref{SE}.

\begin{theorem}[Approximation Theory of PGA based on revisiting]\label{theorem:appr}
Define $\zeta^*(c):=\max_{\lambda\geq \frac{\mu_a(c)}{1-\mu_a(c)}} \zeta(c,\lambda)$ as a function of $c$. Then, for any $\delta>0$ and $m<M_0$, there exists a constant $C>0$ so that $||V^m||_{2,n}^2/||V^0||_{2,n}^2\leq C(\frac{s}{m+s})^{\zeta^*(c)-\delta}$ for $m$ large enough.
\end{theorem}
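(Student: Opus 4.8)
The plan is to iterate Lemma~\ref{lemma:appr-ori} along a geometrically growing sequence of blocks, convert the resulting geometric‑in‑the‑number‑of‑blocks decay of $||V^m||_{2,n}^2$ into the claimed polynomial‑in‑$m$ rate, and only at the end optimize the free aspect ratio $\lambda$ so that $\zeta(c,\lambda)$ is replaced by $\zeta^*(c)$. Two facts are used throughout: $m\mapsto ||V^m||_{2,n}^2$ is non‑increasing, so it suffices to establish the bound along a subsequence of step indices; and, since $\zeta^*(c)=\max_\lambda\zeta(c,\lambda)$ over the admissible range $\lambda\ge\mu_a(c)/(1-\mu_a(c))$, for every $\delta>0$ there is a single admissible $\lambda$ with $\zeta(c,\lambda)\ge\zeta^*(c)-\delta$, which we fix.

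I would first record an elementary per‑step estimate. In the deterministic setting $V^m=X\alpha^m$ with $\alpha^m$ supported on $T^m$, so $||V^m||_{2,n}^2=\sum_{j\in T^m}\alpha^m_j\langle X_j,V^m\rangle_n$; combining Cauchy--Schwarz with $|\gamma^m|=\max_j|\langle X_j,V^m\rangle_n|$ and $||\alpha^m||^2\le ||V^m||_{2,n}^2/\phi_s(q(m),E_n[x_i'x_i])\le ||V^m||_{2,n}^2/(1-c)$ (legitimate once $q(m)\le M_0$, by Assumption~\ref{SE}) yields $(\gamma^m)^2\ge\frac{1-c}{q(m)}||V^m||_{2,n}^2$, hence
$$||V^{m+1}||_{2,n}^2\le\Big(1-\frac{1-c}{q(m)}\Big)||V^m||_{2,n}^2 .$$
Together with $q(m)\le m+s$ this already gives the crude bound $||V^m||_{2,n}^2/||V^0||_{2,n}^2\precsim(s/(m+s))^{1-c}$ (the theorem will improve the exponent from $1-c$ to $\zeta^*(c)$), and it shows that if $q$ never exceeds a fixed threshold $Q$ along the run, then the decay is geometric, hence faster than any polynomial. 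So it remains to treat the steps past the first index $k_0$ at which $q$ exceeds the threshold required by Lemma~\ref{lemma:appr-ori}; on $[0,k_0]$ we use $||V^{k_0}||_{2,n}^2\le ||V^0||_{2,n}^2$ (or, if $k_0$ happens to be large, the geometric estimate above).

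For the core step, with $\delta$ and $\lambda$ fixed, set $k_{i+1}:=k_i+\lambda q(k_i)$. The key algebraic observation is that with $m=\lambda q(k)$ one has $n^*_k=q(k)\,\frac{\lambda+\mu_a(c)}{2-\mu_a(c)}$, so
$$\frac{q(k)}{q(k)+n^*_k}=\frac{2-\mu_a(c)}{2+\lambda}=:r<1$$
is a constant independent of $k$. Hence Lemma~\ref{lemma:appr-ori} gives $||V^{k_{i+1}}||_{2,n}^2\le r^{\zeta(c,\lambda)-\delta}||V^{k_i}||_{2,n}^2\le r^{\zeta^*(c)-2\delta}||V^{k_i}||_{2,n}^2$ (using $r<1$ and $\zeta(c,\lambda)\ge\zeta^*(c)-\delta$), while Lemma~\ref{Lemma:FR}, applied to the block of $\lambda q(k_i)$ steps restarted at $k_i$, bounds the number of fresh variables in that block by $q(k_i)\,\frac{\lambda+(1+\delta)\mu_a(c)}{2-(1+\delta)\mu_a(c)}$ and therefore gives $q(k_{i+1})\le\rho\,q(k_i)$ with $\rho:=\frac{2+\lambda}{2-(1+\delta)\mu_a(c)}$, a constant only slightly larger than $1/r$. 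Iterating over $I$ blocks, $||V^{k_I}||_{2,n}^2\le r^{I(\zeta^*(c)-2\delta)}||V^{k_0}||_{2,n}^2$ and $q(k_I)\le q(k_0)\rho^I$, so $k_I=k_0+\lambda\sum_{i<I}q(k_i)\le C'\rho^I$ for a constant $C'$ depending only on $\lambda,\delta,k_0,s$; equivalently $r^{-I}\succsim(k_I/C')^{\log(1/r)/\log\rho}$.

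Substituting back, $||V^{k_I}||_{2,n}^2/||V^0||_{2,n}^2\precsim(C'/k_I)^{(\zeta^*(c)-2\delta)\log(1/r)/\log\rho}$; since $\rho\to1/r$ as $\delta\to0$, the ratio $\log(1/r)/\log\rho$ tends to $1$, so for $\delta$ small the exponent exceeds $\zeta^*(c)-\delta'$ for any prescribed $\delta'>0$, i.e.\ $||V^{k_I}||_{2,n}^2/||V^0||_{2,n}^2\precsim(1/k_I)^{\zeta^*(c)-\delta'}$ for $k_I$ large. Finally, for an arbitrary sufficiently large $m$ pick $I$ with $k_I\le m<k_{I+1}=k_I+\lambda q(k_I)$; then $q(k_I)\le k_I+s$ forces $m+s\le(1+\lambda)(k_I+s)$, so $(s/(k_I+s))^{\zeta^*(c)-\delta'}\precsim(s/(m+s))^{\zeta^*(c)-\delta'}$, and monotonicity of $||V^m||_{2,n}^2$ gives $||V^m||_{2,n}^2/||V^0||_{2,n}^2\le C(s/(m+s))^{\zeta^*(c)-\delta'}$, with $C$ also absorbing the initial segment $[0,k_0]$ and the purely geometric case; relabelling $\delta'$ as $\delta$ finishes the argument. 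The main obstacle is exactly this bookkeeping of three nested slacks --- the $\delta$ in Lemma~\ref{Lemma:FR}, the $\delta$ in Lemma~\ref{lemma:appr-ori}, and the gap between $\rho$ and $1/r$ --- together with verifying that, because only $I\asymp\log m$ blocks are needed to reach step $m$, each slack costs at most an arbitrarily small amount in the final exponent; by contrast, keeping every block endpoint below $M_0$ is automatic since $k_I\le m<M_0$.
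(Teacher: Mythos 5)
Your proposal is correct and takes essentially the same route as the paper's own proof: fix a (near-)maximizing $\lambda$, split the iterations into blocks of length $\lambda q(k_i)$, apply Lemma~\ref{lemma:appr-ori} blockwise while controlling the growth of the active set via Lemma~\ref{Lemma:FR}, and pass to arbitrary $m$ by monotonicity of $\|V^m\|_{2,n}^2$ and bounded block ratios. The only differences are cosmetic: the paper telescopes exactly along proxy sequences $m_i,n_i$ and handles small $q(0)$ by padding $\beta$ with infinitesimal coefficients, whereas you convert constant per-block ratios through the comparison $\log(1/r)/\log\rho\to 1$ and treat the initial segment with the geometric per-step bound.
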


\begin{remark}
Our results stated in Theorem \ref{theorem:appr} depend on the lower bound of $|R(m)|/m$, which is the proportion of the $R$s in the first $m$ terms in the sequence $\mathcal{A}$. We conjecture that the convergence rate of PGA is close to exponential as $c\rightarrow 0$. Denote the actual proportion of $R$ in the sequence $\mathcal{A}$ by $\psi(c)$, i.e., $|R(m)|\geq \psi(c)m-\psi_1(c)q(0)$, where $\psi(c),\psi_1(c)$ are some constants depending on $c$. If $\psi(c)\rightarrow 1$, it is easy to show that $||V^{m}||^2_{2,n}\precsim ||V^0||^2_{2,n} \left(\frac{s}{s+m}\right)^\zeta$, based on the proof of Theorem 1, for any arbitrarily large $\zeta$. In general, further improvements of the convergence convergence rate of PGA can be achieved by improving the lower bounds of $|R(m)|/m$.

Table \ref{c_and_zeta} gives different values of the SE constant $c$ for the corresponding values of $\zeta^*$.

\begin{table}[ht]
\caption{Relation between $c$ and $\zeta$}
 \label{c_and_zeta}
\begin{tabular}{ll}
  \hline \hline
    $c$ & $\zeta^*(c)$ \\
    \hline
    $0.0$ & $1.19$\\
    $0.1$ & $1.04$\\
    $0.2$ & $0.89$\\
    $0.3$ & $0.76$\\
    $0.5$ & $0.63$\\
    $0.6$ & $0.51$\\
    $0.7$ & $0.40$\\
    \hline \hline
\end{tabular}
 \end{table}

The convergence rate of PGA and hence of $L_2$Boosting is affected by the frequency of revisiting. Because different values of $c$ impose different lower bounds on the frequencies of revisiting, thus different values of $c$ imply a different convergence rate of the process in our framework.
\end{remark}

\section{Main Results}

In this section we discuss the main results regarding the $L_2$Boosting procedure (BA), post-$L_2$Boosting (post-BA) and the orthogonal procedure (oBA) in a high-dimensional setting.

We analyze the linear regression model introduced in a high-dimensional setting, which was introduced in Section 2.

\subsection{$L_2$Boosting with Componentwise Least Squares}

First, we analyze the classical $L_2$Boosting algorithm with componentwise least squares. For this purpose, the approximation results which we derived in the previous section are key. While in the previous section the stochastic component was absent, in this section it is explicitly considered.

The following definitions will be helpful for the analysis:
$U^{m}$ denotes the residuals at the $m^{th}$ iteration, $U^{m}=Y-X\beta^{m}$. $\beta^{m}$ is the estimator at the $m^{th}$ iteration. We define the difference between the true and the estimated vector as $\alpha^{m}:=\beta-\beta^m$. The prediction error is given by $V^m=X\alpha^m$.

For the Boosting algorithm in the high-dimensional setting it is essential to determine when to stop, i.e.~the stopping criterion. In the low-dimensional case, stopping time is not important: the value of the objective function decreases and converges to the traditional OLS solution exponentially fast, as described in B\"uhlmann and Yu (2006). In the high-dimensional case, such fast convergence rates are usually not available: the residual $\varepsilon$ can be explained by $n$ linearly independent variables $x_j$. Thus selecting more terms only leads to overfitting. Early stopping is comparable to the penalization in LASSO, which prevents one from choosing too many variables and hence overfitting. Similarly to LASSO, a sparse structure will be needed for analysis.

At each step, we minimize $||U^m||_{2,n}^2$ along the \textquotedblleft most greedy\textquotedblright variable $X_{j^m}$. The next assumption is on the residual / stochastic error term $\varepsilon$ and encompasses many statistical models which are common in applied work.

\begin{assumption}
With probability greater than or equal $1-\alpha$, we have, $sup_{1\leq j\leq p} |<X_j,\varepsilon>_n|\leq 2\sigma\sqrt{\frac{\log(2p/\alpha)}{n}}:=\lambda_n$.
\end{assumption}
\begin{remark}
The previous assumption is, e.g., implied if the error terms are i.i.d. $N(0,\sigma^2)$ random variables. This in turn can be generalized / weakened to cases of non-normality by self-normalized random vector theory (\cite{delapena}) or the approach introduced in \cite{CCK:2014}.
\end{remark}

Set $\sigma_n^2:=\mathbb{E}_n[\varepsilon^2]$. Recall that $||U^{m+1}||_{2,n}^2=||U^m||_{2,n}^2-(\gamma^m_j)^2$, where $|\gamma_j^m|=\max_{1\leq j\leq p}|<X_j,U^m>_n|=\max_{1\leq j\leq p}|<X_j,V^m>_n+<X_j,\varepsilon>_n|$. The lemma below establishes the main result of convergence rate of $L_2$Boosting.

\begin{lemma}\label{Lemma:Bounds}
Suppose assumptions A.1--A.3 hold and $\frac{s\log(p)}{n} \rightarrow 0$.
Assume $M_0$ is large enough so that $\log(M_0/s)+(\xi+\frac{1}{1+\zeta^*(c)})\log(\frac{s\log(p)}{n||V^0||_{2,n}^2})>0$ for some $\xi>0$. Write ${m}^*+1$ for the first time $||V^m||_{2,n}\leq \eta\sqrt{m+s}\lambda_n$, where $\eta$ is a constant large enough. Then, for any $\delta>0$, with probability $\geq 1-\alpha$,
\begin{itemize}
\item[(1)] it holds \begin{align}\label{eq:lemma9-m}
m^*\precsim s \left(\frac{s\log(p)}{n ||V^0||_{2,n}^2}\right)^{\frac{-1}{1+\zeta^*(c)-\delta}} \quad \mbox{and}\quad m^*<M_0;
\end{align}
\item[(2)] the prediction error $||V^{m^*+1}||$ satisfies:
\begin{align}\label{eq:lemma9}
 ||V^{m^*+1}||_{2,n}^2\precsim_p ||V^0||_{2,n}^\frac{2}{1+\zeta^*(c)-\delta}\left(\frac{s\log(p)}{n}\right)^{\frac{\zeta^*(c)-\delta}{1+\zeta^*(c)-\delta}}.
\end{align}
\end{itemize}
\end{lemma}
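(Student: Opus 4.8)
The plan is to show that, on the probability-$(1-\alpha)$ event of Assumption~A.3 on which $\sup_{1\le j\le p}|\langle X_j,\varepsilon\rangle_n|\le\lambda_n$, the $L_2$Boosting trajectory $\{\|V^m\|_{2,n}^2\}$ obeys \emph{up to the stopping time} essentially the same recursion as the deterministic pure greedy algorithm of Section~3, so that Theorem~\ref{theorem:appr} applies with its exponent shifted by an arbitrarily small $\delta$; statements~(1) and~(2) then drop out of the stopping inequality by elementary algebra. All quantities below are deterministic on that event.

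\emph{The noise is negligible before stopping.} Since $V^{m+1}=V^m-\gamma^m X_{j^m}$, $\|X_{j^m}\|_{2,n}^2=1$ and $\gamma^m=\langle X_{j^m},V^m\rangle_n+\langle X_{j^m},\varepsilon\rangle_n$, one has the exact identity $\|V^{m+1}\|_{2,n}^2=\|V^m\|_{2,n}^2-(\gamma^m)^2+2\gamma^m\langle X_{j^m},\varepsilon\rangle_n$, so the true per-step decrease satisfies $\Delta^m:=\|V^m\|_{2,n}^2-\|V^{m+1}\|_{2,n}^2\ge(\gamma^m)^2-2|\gamma^m|\lambda_n$. Assumption~\ref{SE} then controls the greedy slope: with $\tilde\gamma^m:=\max_j|\langle X_j,V^m\rangle_n|$ and $V^m=X\alpha^m$, $supp(\alpha^m)\subseteq T^m$, $|T^m|=q(m)$, Cauchy--Schwarz and $\phi_s(q(m),E_n[x_ix_i'])\ge 1-c$ give
\[
\|V^m\|_{2,n}^2=\langle V^m,X\alpha^m\rangle_n\le\|\alpha^m\|_1\,\tilde\gamma^m\le\frac{\sqrt{q(m)}}{\sqrt{1-c}}\,\|V^m\|_{2,n}\,\tilde\gamma^m ,
\]
hence $\tilde\gamma^m\ge\sqrt{1-c}\,\|V^m\|_{2,n}/\sqrt{q(m)}$, while trivially $|\gamma^m|\ge\tilde\gamma^m-\lambda_n$. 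For $m\le m^*$ the stopping rule has not triggered, so $\|V^m\|_{2,n}>\eta\sqrt{m+s}\,\lambda_n\ge\eta\sqrt{q(m)}\,\lambda_n$ (using $q(m)\le m+s$), whence $\lambda_n\le(\eta\sqrt{1-c})^{-1}\tilde\gamma^m$. Taking $\eta$ large (depending on $c$ and the target $\delta$) makes $\lambda_n$ an arbitrarily small fraction of $\tilde\gamma^m$, and a short computation then yields, for any prescribed $\delta'>0$ and all $m\le m^*$, $\Delta^m\ge(1-\delta')(\tilde\gamma^m)^2\ge(1-\delta')(1-c)\|V^m\|_{2,n}^2/q(m)$.

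\emph{Transfer of the approximation theory.} With this (only $\delta'$-perturbed) recursion, the machinery of Section~3 applies to $\{\|V^m\|_{2,n}^2\}$ for $m\le m^*$ in the range where Assumption~\ref{SE} is available: the revisiting lower bound of Lemma~\ref{Lemma:FR} is driven only by the SE slope estimate above and by the telescoping budget $\sum_{k<m}\Delta^k=\|V^0\|_{2,n}^2-\|V^m\|_{2,n}^2\le\|V^0\|_{2,n}^2$, both insensitive to the $\delta'$-perturbation, so the same count holds up to a factor $1+o(1)$; consequently $q(m)$ stays of order $m/(2-\mu_a(c))+O(s)$, and Lemma~\ref{lemma:appr-ori} together with Theorem~\ref{theorem:appr} go through with $\zeta^*(c)$ replaced by $\zeta^*(c)-\delta$, giving $\|V^m\|_{2,n}^2\precsim\|V^0\|_{2,n}^2(s/(m+s))^{\zeta^*(c)-\delta}$ for $m\le m^*$ with $m<M_0$. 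I expect this transfer to be the main obstacle: concretely, one must re-run the Appendix~A proof of Lemma~\ref{Lemma:FR} with $\Delta^m$ in place of the noiseless step and check that all the perturbation parameters are absorbed into the single $\delta$; the remaining steps are routine.

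\emph{Conclusion.} For $m\le m^*$ the stopping inequality fails, so $\eta^2(m+s)\lambda_n^2<\|V^m\|_{2,n}^2\precsim\|V^0\|_{2,n}^2(s/(m+s))^{\zeta^*(c)-\delta}$; solving for $m+s$ and inserting $\lambda_n^2\precsim\log p/n$ gives $m^*\le m^*+s\precsim s(\tfrac{s\log p}{n\|V^0\|_{2,n}^2})^{-1/(1+\zeta^*(c)-\delta)}$, and since the right-hand side dominates $s$ this is \eqref{eq:lemma9-m} apart from $m^*<M_0$. The latter follows by a short bootstrap: the rate above is valid whenever $m<M_0$, so if $m^*\ge M_0$ then taking $m=M_0-1$ forces $M_0\le M_0-1+s\precsim s(\tfrac{s\log p}{n\|V^0\|_{2,n}^2})^{-1/(1+\zeta^*(c)-\delta)}$, contradicting the hypothesis on $M_0$ once $\delta$ is small enough that $1/(1+\zeta^*(c)-\delta)<\xi+1/(1+\zeta^*(c))$ (here one uses $s\log p/(n\|V^0\|_{2,n}^2)\to0$, as is implicit in that hypothesis). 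Finally, by the definition of $m^*$, $\|V^{m^*+1}\|_{2,n}^2\le\eta^2(m^*+1+s)\lambda_n^2\precsim(m^*+s)\lambda_n^2$; substituting the bound on $m^*$ and $\lambda_n^2\precsim\log p/n$ and collecting exponents yields $\|V^{m^*+1}\|_{2,n}^2\precsim\|V^0\|_{2,n}^{2/(1+\zeta^*(c)-\delta)}(s\log p/n)^{(\zeta^*(c)-\delta)/(1+\zeta^*(c)-\delta)}$, which is \eqref{eq:lemma9}; since all of this was carried out on the probability-$(1-\alpha)$ event, the ``$\precsim_P$'' is immediate.
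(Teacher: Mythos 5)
Your proposal follows essentially the same route as the paper's own proof: on the event $\sup_j|\langle X_j,\varepsilon\rangle_n|\le\lambda_n$, show that before stopping the per-step decrease of $\|V^m\|_{2,n}^2$ obeys the deterministic greedy recursion with $1-c$ replaced by a constant arbitrarily close to it (the paper's $1-c-\psi$, $\psi=4\sqrt{1-c}/\eta$), re-run the revisiting argument of Lemma \ref{Lemma:FR} with this perturbed constant to get $\|V^m\|_{2,n}^2\precsim\|V^0\|_{2,n}^2(s/(m+s))^{\zeta^*(c)-\delta}$, and then combine with the failure of the stopping inequality and the hypothesis on $M_0$ to bound $m^*$, verify $m^*<M_0$, and read off the bound on $\|V^{m^*+1}\|_{2,n}^2$. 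The transfer step you flag as the main obstacle is handled in the paper exactly as you sketch it, so the argument is correct and matches the paper's proof.
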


\begin{remark}
Lemma \ref{Lemma:Bounds} shows that the convergence rate of the $L_2$Boosting depends on the value of $c$. For different values of $c$, the lower bound of the proportion of revisiting (``R'') in the sequence $\mathcal{A}$ should be different. Such lower bounds on the frequency of revisiting will naturally determine the upper bound for the deterministic component, which affects our results on the rate of convergence of $L_2$Boosting. As $\zeta^*(c)\rightarrow \infty$, the statement (2) implies the usual LASSO rate of convergence.
\end{remark}

The bound of the approximation error $||V^m||_{2,n}^2$ stated in inequality (\ref{eq:lemma9}) is obtained under an infeasible stopping criteria. Below we establish another result which employs the same convergence rate but with a feasible stopping criterion which can be implemented in empirical studies.

\begin{theorem}\label{main}
Suppose all conditions stated in Lemma \ref{Lemma:Bounds} hold.
Let $c_u>4$ be a constant. Let $m^*_1+1$ be the first time such that $\frac{||U^m||^2_{2,n}}{||U^{m-1}||^2_{2,n}} > 1-c_u\log(p)/n$. Then, with probability at least $1-\alpha$,

$||V^{m_1^*}||_{2,n}^2\precsim ||V^0||_{2,n}^\frac{2}{1+\zeta^*(c)-\delta}(\frac{s\log(p)}{n})^{\frac{\zeta^*(c)-\delta}{1+\zeta^*(c)-\delta}}$.
\end{theorem}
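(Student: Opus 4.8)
The plan is to wedge the feasible stopping time $m_1^*$ between the infeasible stopping time $m^*$ of Lemma~\ref{Lemma:Bounds} and a constant multiple of $m^*+s$, and then to read the bound on $||V^{m_1^*}||_{2,n}^2$ off the ratio test at $m_1^*$. The workhorse is a pair of two-sided estimates, valid on the event of Assumption~A.3 whenever $q(m)\le M_0$. First, $\sqrt{(1-c)/q(m)}\,||V^m||_{2,n}-\lambda_n\le|\gamma^m|\le ||V^m||_{2,n}+\lambda_n$: the lower bound follows from $||V^m||_{2,n}^2=<V^m,X\alpha^m>_n\le||\alpha^m||_1(|\gamma^m|+\lambda_n)$ together with $||\alpha^m||_1\le\sqrt{q(m)/(1-c)}\,||V^m||_{2,n}$, which holds by Assumption~\ref{SE} since $\mathrm{supp}(\alpha^m)\subseteq T^m$, and the upper bound from Cauchy--Schwarz and $||X_j||_{2,n}=1$. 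Second, $\sigma_n^2-\tfrac{q(m)}{1-c}\lambda_n^2\le ||U^m||_{2,n}^2\le 2||V^m||_{2,n}^2+\sigma_n^2+\tfrac{q(m)}{1-c}\lambda_n^2$: here one uses $T\subseteq T^m$, so $X\beta$ lies in $\mathrm{span}\{X_k:k\in T^m\}$, whence $||U^m||_{2,n}^2\ge||(I-P_{T^m})\varepsilon||_{2,n}^2=\sigma_n^2-||P_{T^m}\varepsilon||_{2,n}^2$ with $P_{T^m}$ the orthogonal projection onto that span, and $||P_{T^m}\varepsilon||_{2,n}^2\le\tfrac{q(m)}{1-c}\lambda_n^2$ again by Assumption~\ref{SE}. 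Because $\tfrac{s\log p}{n}\to0$ and $M_0$ grows slowly, $q(m)\le m+s\le M_0+s=o(n/\log p)$ makes the $\tfrac{q(m)}{1-c}\lambda_n^2$ terms $o(1)$ throughout the range of steps we consider, and $\lambda_n^2\asymp\log p/n$.

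Next I would show $m^*\le m_1^*$: the feasible rule cannot fire while the approximation error still dominates. For $m\le m^*$ one has $||V^m||_{2,n}>\eta\sqrt{m+s}\,\lambda_n\ge\eta\sqrt{q(m)}\,\lambda_n$, so the lower estimate for $|\gamma^m|$ gives both $(\gamma^m)^2\ge\tfrac{(1-c)\eta^2}{4}\lambda_n^2$ and $(\gamma^m)^2\ge\tfrac{1-c}{4q(m)}||V^m||_{2,n}^2$ (for $\eta$ a sufficiently large constant). Comparing the first of these with the $\sigma_n^2$-part and the second with the $||V^m||_{2,n}^2$-part of the upper estimate $c_u\tfrac{\log p}{n}||U^m||_{2,n}^2\le c_u\tfrac{\log p}{n}(2||V^m||_{2,n}^2+\sigma_n^2+o(1))$, and using $q(m)=o(n/\log p)$ and $\eta$ large, one gets $(\gamma^m)^2>c_u\tfrac{\log p}{n}||U^m||_{2,n}^2$; since $||U^{m+1}||_{2,n}^2=||U^m||_{2,n}^2-(\gamma^m)^2$, this is exactly $||U^{m+1}||_{2,n}^2/||U^m||_{2,n}^2\le1-c_u\log p/n$, so the ratio test does not fire at step $m+1$.

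The crux is the reverse bound $m_1^*\precsim m^*+s$, which in particular gives $m_1^*<M_0$. The mechanism is that every non-stopping step past $m^*$ removes a fixed amount of residual energy: $||U^m||_{2,n}^2-||U^{m+1}||_{2,n}^2=(\gamma^m)^2\ge c_u\tfrac{\log p}{n}||U^m||_{2,n}^2\ge c_u(1-o(1))\sigma_n^2\tfrac{\log p}{n}$, while $||U^m||_{2,n}^2\ge\sigma_n^2-o(1)$ is pinned from below by the second estimate; since $||U^{m^*+1}||_{2,n}^2\le\sigma_n^2+O((m^*+s)\log p/n)$ by that estimate and the definition of $m^*$, only $O(m^*+s)$ further non-stopping steps can occur (a short bootstrap keeps $q(m)\precsim m^*+s$ on this stretch, so the $o(1)$'s remain uniform). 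This is where the hypothesis $c_u>4$ is used essentially: $c_u>4$ is asymptotically equivalent to $\sqrt{c_u\sigma_n^2\log p/n}>\lambda_n=\sup_{j}|<X_j,\varepsilon>_n|$, so a non-stopping step has $|\gamma^m|>\lambda_n$, which through the first estimate forces $||V^m||_{2,n}^2\gtrsim\log p/n$; equivalently, the ratio test fires as soon as $||V^m||_{2,n}^2\lesssim\log p/n$, and the energy accounting reaches that regime within $O(m^*+s)$ steps of $m^*$. Feeding in $m^*\precsim s\,r^{-1/(1+\zeta^*(c)-\delta)}$ from Lemma~\ref{Lemma:Bounds}(1), with $r:=s\log p/(n||V^0||_{2,n}^2)$, and the lower bound on $M_0$ assumed there, yields $m_1^*<M_0$ and $q(m_1^*)\le m_1^*+s\precsim m^*+s$.

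Finally, at $m_1^*$ the ratio test has fired, $(\gamma^{m_1^*})^2<c_u\tfrac{\log p}{n}||U^{m_1^*}||_{2,n}^2$; inserting the lower estimate for $|\gamma^{m_1^*}|$ and the bound $||U^{m_1^*}||_{2,n}^2\le2||V^{m_1^*}||_{2,n}^2+\sigma_n^2+o(1)$, then moving $2c_u\tfrac{\log p}{n}||V^{m_1^*}||_{2,n}^2$ to the left (legitimate since $c_u q(m_1^*)\log p/n=o(1)$), gives $||V^{m_1^*}||_{2,n}^2\precsim q(m_1^*)\tfrac{\log p}{n}\precsim(m^*+s)\tfrac{\log p}{n}$. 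Using $s\log p/n=r||V^0||_{2,n}^2$ to simplify, $(m^*+s)\log p/n\precsim||V^0||_{2,n}^2\,r^{(\zeta^*(c)-\delta)/(1+\zeta^*(c)-\delta)}=||V^0||_{2,n}^{2/(1+\zeta^*(c)-\delta)}\bigl(s\log p/n\bigr)^{(\zeta^*(c)-\delta)/(1+\zeta^*(c)-\delta)}$, which is the asserted rate. I expect the third step to be the main obstacle: pinning $m_1^*$ down to within a constant multiple of $m^*$ requires showing that the stochastic greedy algorithm cannot keep making progress of order $\sigma_n^2\log p/n$ per step for many more than $m^*$ steps once the deterministic signal is exhausted — i.e. that its saturation/revisiting behaviour after $m^*$ mirrors the infeasible stopping analysis — and the constants must be tracked carefully enough that $c_u>4$, rather than some larger universal constant, is what makes the counting close.
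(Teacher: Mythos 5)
Your overall architecture (compare the feasible time $m_1^*$ with the infeasible $m^*$ of Lemma \ref{Lemma:Bounds}, then read the rate off the firing condition) starts like the paper, but your treatment of the key regime $m_1^*\ge m^*$ is genuinely different, and this is where there is a real gap. You propose to prove $m_1^*\precsim m^*+s$ by energy accounting: each non-firing step removes $(\gamma^m)^2\ge c_u\tfrac{\log p}{n}\|U^m\|_{2,n}^2\approx \tfrac{c_u}{4}\lambda_n^2$ (since $\lambda_n^2=4\sigma^2\log(2p/\alpha)/n$ and $\sigma_n^2\approx\sigma^2$), while $\|U^m\|_{2,n}^2$ is pinned above the floor $\sigma_n^2-\tfrac{q(m)}{1-c}\lambda_n^2$. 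But that floor itself can sink by up to $\tfrac{1}{1-c}\lambda_n^2$ per step as $q(m)$ grows, so after $k$ extra steps the accounting gives only $k\bigl(\tfrac{c_u}{4}-\tfrac{1}{1-c}-o(1)\bigr)\precsim m^*+s$. This closes only when $c_u>4/(1-c)$; for $c$ bounded away from zero and $c_u$ just above $4$ (the regime the theorem allows), you obtain no bound on $m_1^*$, hence no bound on $q(m_1^*)$, and your final read-off $\|V^{m_1^*}\|_{2,n}^2\precsim q(m_1^*)\tfrac{\log p}{n}$ collapses. Your own closing remark ("a non-firing step forces $\|V^m\|_{2,n}^2\gtrsim\log p/n$") does not rescue this: knowing the signal is not yet exhausted at each non-firing step does not bound how many such steps occur. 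A secondary point: your claim $m^*\le m_1^*$ needs $(m^*+s)\log p/n$ to be small with a small enough constant, which is not implied by the stated assumptions; the paper never needs this direction.

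The paper's proof avoids bounding $m_1^*$ altogether. It splits on $m_1^*<m^*$ (handled essentially by your final computation, using that $\|V^m\|_{2,n}$ is still decreasing there) versus $m_1^*\ge m^*$. In the latter case it shows that as long as the test has not fired and $\|U^m\|_{2,n}^2>(1-\delta)\sigma_n^2$, each step satisfies $|\gamma^m|>2\lambda_n$ (this is where $c_u>4$ enters), so $\|V^m\|_{2,n}^2$ keeps decreasing and $\|V^{m_1^*}\|_{2,n}^2\le\|V^{m^*}\|_{2,n}^2$, which already has the claimed rate; and it rules out the residual dropping below $(1-\delta)\sigma_n^2$ before firing by a contradiction: that would force $|<V^{m_2},\varepsilon>_n|\ge\delta'\sigma_n^2$, whereas $|<V^{m_2},\varepsilon>_n|\le\|V^{m_2}\|_{2,n}\sigma_n\to0$ because $\|V^{m_2}\|_{2,n}\le\|V^{m^*}\|_{2,n}$. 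If you want to keep your step-counting strategy, you would either have to strengthen the hypothesis to $c_u>4/(1-c)$ (or a larger constant), or find a floor for $\|U^m\|_{2,n}^2$ whose per-step erosion is strictly smaller than the guaranteed per-step removal; otherwise the monotonicity-plus-contradiction argument is the way to close the case $m_1^*\ge m^*$.
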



\begin{remark}
As we have already seen in the deterministic case, the rate of convergence depends on the constant $c$. In Table \ref{rates} we give for different values of $c$ the corresponding rates setting $\delta$ equal to zero, so that the rates  can be interpreted as upper bounds.

\begin{table}[ht]
\caption{Relation between $c$ and $\frac{\zeta^*(c)}{1+\zeta^*(c)}$}
\label{rates}
\begin{tabular}{ll}
  \hline \hline
    $c$ & rate \\
    \hline
    $0.0$ & $0.54$\\
    $0.1$ & $0.51$\\
    $0.2$ & $0.47$\\
    $0.3$ & $0.43$\\
    $0.5$ & $0.39$\\
    $0.6$ & $0.34$\\
    $0.7$ & $0.29$\\
    \hline \hline
\end{tabular}
 \end{table}
\end{remark}

\begin{remark}
It is also important to have an estimator for the variance of the error term $\sigma^2$, denoted by $\hat\sigma^2_{n,m}$. A consistent estimation of the variance is given by $||U^m||^2_{2,n}$ at the stopping time $m=m^*$.
\end{remark}

\subsection{Orthogonal $L_2$Boosting in a high-dimensional setting with bounded restricted eigenvalue assumptions}

In this section we analyze orthogonal $L_2$Boosting. For the orthogonal case, we obtain a faster rate of convergence than with the variant analyzed in the section before. We make use of similar notation as in the previous subsection: $U^m_o$ denotes the residual and $V^m_o$ the prediction error, formally defined below. Again, define $\beta^m_o$ as the parameter estimate after the $m^{th}$ iteration.

The orthogonal Boosting Algorithm was introduced in the previous section. For completeness we give here the full version with some additional notation which will be required in a later analysis.





\begin{algorithm}[Orthogonal $L_2$Boosting]
\begin{enumerate}
\item Initialization: Set $\beta^0_o=0$, $f^0_o=0$, $U^0_o=Y$ and the iteration index $m=0$.

\item Define $X^m_o$ as the matrix of all $X_{j_k}$, $k=0,1,\ldots,m$ and $P^m_o$ as the projection matrix given $X^m_o$.

\item Let $j_m$ be the maximizer of the following: $\max_{1\leq j\leq p}\rho^2(X_j,U^m_o)$.  Then, $f^{m+1}_o=P^m_o Y$ with corresponding regression projection coefficient $\beta^{m+1}_o$. 

\item Calculate the residual $U^m_o=Y-X\beta^m_o=(I-P^m_o) Y:=M_{P^m_o}Y$ and $V^m_o=M_{P^m_o}X\beta$.
\item Increase $m$ by one. If some stopping criterion is reached, then stop; else continue with Step 2.
\end{enumerate}
\end{algorithm}
It is easy to see that:
\begin{equation}\label{eq: ineq}
||U^{m+1}_o||_{2,n}\leq ||U^{m}_o||_{2,n}
\end{equation}

The benefit of the oBA method, compared to $L_2$Boosting, is that once a variable $X_j$ is selected, the procedure will never select this variable again. This means that every variable is selected at most once.

For any square matrix $W$, we denote by $\phi_s(W)$ and $\phi_l(W)$ the smallest and largest eigenvalues of $W$.





Denote by $T^m$ the set of variables selected at the $m^{th}$ iteration. Write $S^m:=T-T^m$. We know that $|T^m|= m$ by construction. Set $S_{c}^m=T^m-S^m$.

\begin{lemma}[lower bound of the remainder]\label{lemma:lowerboundoga}
Suppose assumptions A.1-A.3 hold. For any $m$ such that $|T^m|<s\eta$, $||V^m_o||_{2,n}\geq c_1 ||X\beta_{S^m}||_{2,n}$, for some constant $c_1>0$.
If $S^m=\emptyset$, then $||V^m_o||_{2,n}=0$.
\end{lemma}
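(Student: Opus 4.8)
The plan is to recognize that the orthogonal residual $V^m_o$ is nothing but the orthogonal projection, onto the complement of the selected columns, of the \emph{fixed} vector $X\beta_{S^m}$, and then to read the lower bound off Assumption~\ref{SE}. Concretely, since $S^m=T\setminus T^m$ with $T=supp(\beta)$, I would write $\beta=\beta_{T\cap T^m}+\beta_{S^m}$; every column $X_j$ with $j\in T\cap T^m\subseteq T^m$ lies in the range of $X^m_o$, so $M_{P^m_o}X\beta_{T\cap T^m}=0$ and hence
\[
V^m_o=M_{P^m_o}X\beta=M_{P^m_o}X\beta_{S^m}.
\]
In particular, if $S^m=\emptyset$ then $\beta_{S^m}=0$ and $V^m_o=0$, which settles the last assertion.

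Next, by the variational characterization of the orthogonal projection,
\[
\|V^m_o\|_{2,n}^2=\big\|M_{P^m_o}X\beta_{S^m}\big\|_{2,n}^2=\min_{\theta}\Big\|X\beta_{S^m}-\sum_{j\in T^m}\theta_jX_j\Big\|_{2,n}^2.
\]
For any $\theta$, the vector inside the norm is a linear combination of the columns indexed by $J:=S^m\cup T^m$ whose coefficient block on $S^m$ equals $\beta_{S^m}$ and whose block on $T^m$ equals $-\theta$ (here I use $S^m\cap T^m=\emptyset$, so these two blocks do not interfere). Since $|T^m|=m<s\eta$ and $|S^m|\le|T|=s$, we have $|J|\le s(1+\eta)$, and provided $M_0$ is taken large enough that $s(1+\eta)\le M_0$ (part of the standing requirements on $M_0$), Assumption~\ref{SE} applies to the principal submatrix $G:=(\En[x_ix_i'])_{J,J}$ and yields $\phi_s(G)\ge 1-c$. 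Writing $\tilde\theta$ for the coefficient vector described above, we obtain, for \emph{every} $\theta$,
\[
\Big\|X\beta_{S^m}-\sum_{j\in T^m}\theta_jX_j\Big\|_{2,n}^2=\tilde\theta'G\tilde\theta\ \ge\ (1-c)\|\tilde\theta\|^2\ \ge\ (1-c)\|\beta_{S^m}\|^2 ,
\]
so that $\|V^m_o\|_{2,n}^2\ge(1-c)\|\beta_{S^m}\|^2$.

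To turn this into the stated inequality I would produce the matching upper bound on $\|X\beta_{S^m}\|_{2,n}$: again by Assumption~\ref{SE}, $\|X\beta_{S^m}\|_{2,n}^2=\beta_{S^m}'(\En[x_ix_i'])_{S^m,S^m}\beta_{S^m}\le C\|\beta_{S^m}\|^2$. Combining the two displays gives $\|V^m_o\|_{2,n}^2\ge\frac{1-c}{C}\,\|X\beta_{S^m}\|_{2,n}^2$, i.e.\ the claim with $c_1=\sqrt{(1-c)/C}$.

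I do not expect a genuinely hard step here; once the decomposition $V^m_o=M_{P^m_o}X\beta_{S^m}$ is in place the argument is essentially one line of linear algebra. The only points requiring care are the bookkeeping that $S^m$ and $T^m$ are disjoint (so that the $S^m$-block of $\tilde\theta$ is genuinely pinned at $\beta_{S^m}$ regardless of $\theta$), and the verification that $|J|\le M_0$, which is exactly why the hypothesis $|T^m|<s\eta$ (together with a lower bound on $M_0$ relative to $s$) is imposed. Note that Assumption~A.3 does not enter this deterministic inequality at all; it is listed because the selected set $T^m$ depends on the noise, and A.3 is what later guarantees that the regime $|T^m|<s\eta$ is the relevant one.
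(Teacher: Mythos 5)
Your proposal is correct and follows essentially the same route as the paper: the paper's (very terse) proof likewise writes the orthogonal residual as $X$ applied to a coefficient vector supported on $S^m\cup T^m$ whose $S^m$-block is pinned at $\beta_{S^m}$, invokes the sparse eigenvalue condition to get $\|V^m_o\|_{2,n}^2\geq(1-c)\|\beta_{S^m}\|_2^2$, and then uses the upper restricted eigenvalue to conclude $\|V^m_o\|_{2,n}^2\geq\frac{1-c}{C}\|X\beta_{S^m}\|_{2,n}^2$. Your write-up simply makes explicit the projection identity $V^m_o=M_{P^m_o}X\beta_{S^m}$ and the size bookkeeping $|S^m\cup T^m|\leq M_0$ that the paper leaves implicit.
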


The above lemma essentially says that if $S^m$ is non-empty, then there is still room for significant improvement in the value $||V^m||_{2,n}^2$. The next lemma is key and shows how rapidly the estimates decay. It is obvious that $||U^m_o||_{2,n}$ and $||V^m_o||_{2,n}$ are both decaying sequences. Before we state this lemma, we introduce an additional assumption.

\begin{assumption}
\label{betamin}
$\min_{j\in T}|\beta_j|\geq J$ and $\max_{j\in T}|\beta_j|\leq J'$ for some constants $J>0$ and $J'<\infty$.
\end{assumption}

\begin{remark}
The first part of assumption A.\ref{betamin} is known as a \textquotedblleft beta-min\textquotedblright assumption as it restricts the size of the non-zero coefficients. It can be relaxed so that the coefficients $\beta_j$ are a decreasing sequence in absolute value.
\end{remark}

\begin{lemma}[upper bound of the remainder]\label{OGA_roc}
Suppose assumptions A.1-A.4 hold. Assume that $\sqrt{s}\lambda_n\rightarrow 0$. Let $m^*$ be the first time that $||U^{m^*}||_{2,n}^2<\sigma^2+2K\sigma s\lambda_n^2$. Then, $m^*<Ks$ and $||V^m_o||_{2,n}^2\precsim {s\log(p)/n}$ with probability going to 1.
\end{lemma}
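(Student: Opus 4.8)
The plan is to use the no-revisiting structure of oBA together with Lemma~\ref{lemma:lowerboundoga} and the beta-min bound A.\ref{betamin} to show that \emph{each} oBA step removes at least a fixed positive amount of residual variance as long as some relevant variable is still missing. Since $\|U^0_o\|_{2,n}^2=\|Y\|_{2,n}^2$ is only of order $s$, this forces all of $T$ to be selected --- and the stopping rule to fire --- within $O(s)$ steps, after which the usual quadratic-inequality manipulation delivers the rate. Throughout, all estimates are taken on the probability-$(1-\alpha)$ event of A.3.

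First I would record elementary facts. Writing $U^m_o=V^m_o+M_{P^m_o}\varepsilon$ and noting $V^m_o=M_{P^m_o}X\beta=M_{P^m_o}X\beta_{S^m}$ (the columns of already-selected relevant variables lie in the range of $X^m_o$), one obtains $\langle X\beta_{S^m},V^m_o\rangle_n=\|V^m_o\|_{2,n}^2$ and $\|V^m_o\|_{2,n}\le\|X\beta_{S^m}\|_{2,n}$. From \textbf{(SE)}, provided $m+s\le M_0$ (the usual ``$M_0$ large enough'' requirement, needed only in the range $m\lesssim s$), one also gets the noise bounds $\|P^m_o\varepsilon\|_{2,n}^2\le (m+1)\lambda_n^2/(1-c)$, $|\langle X_j,M_{P^m_o}\varepsilon\rangle_n|\le\lambda_n\bigl(1+\sqrt{(m+1)/(1-c)}\bigr)$, and $|\langle V^m_o,\varepsilon\rangle_n|\le\sqrt{(m+s)/(1-c)}\,\lambda_n\,\|V^m_o\|_{2,n}$ (the last because $V^m_o=X\tilde\alpha$ with $\tilde\alpha$ supported on a set of size $\le m+s$ and $\|\tilde\alpha\|_2\le\|V^m_o\|_{2,n}/\sqrt{1-c}$).

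The heart of the argument is the claim that, whenever $S^m\neq\emptyset$ and $m<Ks$, the step gain $\|U^m_o\|_{2,n}^2-\|U^{m+1}_o\|_{2,n}^2$ is bounded below by a positive constant $c_{10}$ not depending on $n$. Averaging $\langle X_j,V^m_o\rangle_n$ against $\beta_{S^m}$ and using $\langle X\beta_{S^m},V^m_o\rangle_n=\|V^m_o\|_{2,n}^2$,
\[
\max_{j\in S^m}\bigl|\langle X_j,V^m_o\rangle_n\bigr|\ \ge\ \frac{\|V^m_o\|_{2,n}^2}{\|\beta_{S^m}\|_1}\ \ge\ \frac{c_1^2(1-c)J^2\,|S^m|}{J'\,|S^m|}\ =\ \frac{c_1^2(1-c)J^2}{J'}\ =:\ c_9\ >\ 0 ,
\]
where the numerator is bounded below by Lemma~\ref{lemma:lowerboundoga}, \textbf{(SE)} and $\min_{j\in T}|\beta_j|\ge J$, and the denominator by $\max_{j\in T}|\beta_j|\le J'$; the crucial point is that $|S^m|$ cancels, so this bound is genuinely constant (this is precisely what the naive greedy bound $\gtrsim\|V^m_o\|_{2,n}^2/(m+s)$ fails to see). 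Subtracting the noise term, which for $m<Ks$ is $O(\sqrt{s}\,\lambda_n)=o(1)$ by the hypothesis $\sqrt{s}\lambda_n\to0$, yields $\max_{1\le j\le p}|\langle X_j,U^m_o\rangle_n|\ge c_9/2$ for $n$ large. Since oBA picks the variable $j_m$ maximizing $\rho^2(X_j,U^m_o)$, equivalently maximizing $\langle X_j,U^m_o\rangle_n^2$, the added-variable (Frisch--Waugh) identity expresses $\|U^m_o\|_{2,n}^2-\|U^{m+1}_o\|_{2,n}^2$ as $\langle X_{j_m},U^m_o\rangle_n^2$ divided by the squared prediction-norm of $X_{j_m}$ orthogonalized against the previously selected columns, which is at most $\|X_{j_m}\|_{2,n}^2=1$; hence the gain is $\ge\langle X_{j_m},U^m_o\rangle_n^2=\max_j\langle X_j,U^m_o\rangle_n^2\ge c_9^2/4=:c_{10}$. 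I expect this constant-decay estimate --- the cancellation of $|S^m|$ in particular --- to be the main obstacle.

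Granting the claim, $m^*<Ks$ follows by counting. If $S^m=\emptyset$ then $V^m_o=0$ (Lemma~\ref{lemma:lowerboundoga}), so $\|U^m_o\|_{2,n}^2=\|M_{P^m_o}\varepsilon\|_{2,n}^2\le\sigma_n^2<\sigma^2+2K\sigma s\lambda_n^2$ for $n$ large (absorbing the lower-order fluctuation of $\sigma_n^2$ around $\sigma^2$), so the rule stops; contrapositively, not having stopped at step $m$ forces $S^m\neq\emptyset$. Hence if $m^*\ge Ks$ the claim produces a total decrease $\ge Ks\,c_{10}$ over the first $Ks$ steps, contradicting $\|U^0_o\|_{2,n}^2=\|Y\|_{2,n}^2\le 2\|X\beta\|_{2,n}^2+2\sigma_n^2\le 2C(J')^2 s+2\sigma_n^2\le c_8 s$ once $K$ is chosen larger than $c_8/c_{10}$ (and, without loss, $K\le\eta$, so that Lemma~\ref{lemma:lowerboundoga} applies throughout). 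Finally, inserting $m=m^*<Ks$ into $\|U^{m^*}_o\|_{2,n}^2\ge\|V^{m^*}_o\|_{2,n}^2-2\sqrt{(K+1)s/(1-c)}\,\lambda_n\|V^{m^*}_o\|_{2,n}+\sigma_n^2-(Ks+1)\lambda_n^2/(1-c)$ and combining with $\|U^{m^*}_o\|_{2,n}^2<\sigma^2+2K\sigma s\lambda_n^2$ gives a quadratic inequality in $\|V^{m^*}_o\|_{2,n}$ whose solution is $\|V^{m^*}_o\|_{2,n}\precsim\sqrt{s}\,\lambda_n$, i.e. $\|V^{m^*}_o\|_{2,n}^2\precsim s\log(p)/n$. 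All of this holds on the A.3 event intersected with events of probability tending to $1$ (the law of large numbers for $\sigma_n^2$ and $\|Y\|_{2,n}^2$, and $\sqrt{s}\lambda_n\to0$), which gives the ``with probability going to $1$'' conclusion. (In fact $\|V^{m^*}_o\|_{2,n}\to0$, while $S^{m^*}\neq\emptyset$ would force $\|V^{m^*}_o\|_{2,n}^2\ge c_1^2(1-c)J^2$, so $S^{m^*}=\emptyset$ with probability $\to1$; the $s\log(p)/n$ bound is the robust version, which survives the relaxation to approximate sparsity.)
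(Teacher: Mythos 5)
Your proposal is correct and follows essentially the same route as the paper's proof: use Lemma \ref{lemma:lowerboundoga} together with the identity $\langle X\beta_{S^m},V^m_o\rangle_n=\|V^m_o\|_{2,n}^2$ and the beta-min/beta-max bounds of A.\ref{betamin} to show a constant per-step decrease of the residual sum of squares as long as some relevant variable is unselected, conclude by a counting argument (total budget of order $s$) that $T$ is exhausted and the rule fires within $Ks$ steps, and then compare $\|U^{m^*}_o\|_{2,n}^2$ with the noise level to extract $\|V^{m^*}_o\|_{2,n}^2\precsim s\log(p)/n$. The only differences are cosmetic (you track the decrease of $\|U^m_o\|_{2,n}^2$ via the added-variable identity and obtain the constant $c_1^2(1-c)J^2/J'$ by an $\ell_1$-weighted average, where the paper tracks $\|V^m_o\|_{2,n}^2$ and extracts a coordinate $j^*$ with $|\langle V^m_o,X_{j^*}\rangle_n|\geq(1-c)J$), and your handling of the projected noise terms is, if anything, slightly more explicit than the paper's.
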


Although in general $L_2$Boosting may have a slower convergence rate than LASSO, oBA reaches the rate of LASSO (under some additional conditions). The same technique used in Lemma 6 also holds for the post-$L_2$Boosting case. Basically, we can prove, using similar arguments, that $T^{Ks}\supset T$ when $K$ is a large enough constant. Thus, post-$L_2$Boosting enjoys the LASSO convergence rate under assumptions A.1--A.4. We state this in the next section formally.

\subsection{Post-$L_2$Boosting with Componentwise Least Squares}
In many cases, penalization estimators like LASSO introduce some bias by shrinkage. The idea of \textquotedblleft post-estimators\textquotedblright\ is to estimate the final model by ordinary least squares including all the variables which were selected in the first step. We introduced post-$L_2$Boosting in Section 2. Now we give the convergence rate for this procedure. Surprisingly, it improves upon the rate of convergence of $L_2$Boosting and reaches the rate of LASSO (under stronger assumptions). The proof of the result follows the idea of the proof for Lemma \ref{OGA_roc}.


\begin{lemma}[Post-$L_2$Boosting]\label{lemma:post-boosting}
Suppose assumptions A.1-A.4 hold. Assume that $\sqrt{s}\lambda_n\rightarrow 0$. Let $m^*=Ks$ be the stopping time with $K$ a large enough constant.
Let $S^m=T^{m}-T^0$ be the set of variables selected at steps $1,2,....,m$. Then, $T^0\subset S^{m^*}$ with probability going to 1, i.e., all variables in $T^0$ have been revisited, and
$||P_{X_{T^{m^*}}}Y-X\beta||_{2,n}^2\leq CKs\lambda_n^2 \precsim {s\log(p)/n} $.
\end{lemma}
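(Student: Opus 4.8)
\emph{Proof plan.} I would split the statement into (i) the claim that the selected set $\hat T := T^{m^*}$ (with $m^*=Ks$) contains the true support $T$ with probability tending to one, and (ii) the least--squares bound conditional on $T\subset\hat T$. Part (ii) is routine: $|\hat T|\le m^*=Ks<n$ eventually (as $s\log p/n\to 0$) and $(K+1)s\le M_0$ once $M_0$ is large enough relative to $s$ (as in Lemma~\ref{Lemma:Bounds}), so the projection is well defined, and since $T\subset\hat T$ forces $X\beta$ into the column span of $X_{\hat T}$ we get $P_{X_{\hat T}}Y-X\beta=P_{X_{\hat T}}\varepsilon$. On the event of Assumption~A.3, Assumption~\ref{SE} then yields
\[
\|P_{X_{\hat T}}Y - X\beta\|_{2,n}^2 = \|P_{X_{\hat T}}\varepsilon\|_{2,n}^2 \le \frac{1}{\phi_s(|\hat T|,\mathbb{E}_n[x_ix_i'])}\sum_{j\in\hat T}\langle X_j,\varepsilon\rangle_n^2 \le \frac{|\hat T|}{1-c}\,\lambda_n^2 \le \frac{Ks}{1-c}\,\lambda_n^2 \precsim \frac{s\log p}{n},
\]
which is the asserted bound (the $1-\alpha$ probability coming from A.3). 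So the real content is (i): that every variable of $T$ has been selected by step $m^*=Ks$; the assertion $T^0\subset S^{m^*}$ in the statement is this same statement $T\subset T^{m^*}$.

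For (i) I would follow the argument behind Lemma~\ref{OGA_roc}. Set $S^m:=T\setminus T^m$. The first ingredient is a lower bound on the prediction error while $S^m\neq\emptyset$, the analogue of Lemma~\ref{lemma:lowerboundoga}: since $\beta^m$ is supported on $T^m$ we have $\alpha^m_j=\beta_j$ for $j\in S^m$, so by Assumptions~\ref{SE} and \ref{betamin}, and since $|\mathrm{supp}(\alpha^m)|\le q(m)\le(K+1)s\le M_0$,
\[
\|V^m\|_{2,n}^2 = \|X\alpha^m\|_{2,n}^2 \ge (1-c)\|\alpha^m\|_2^2 \ge (1-c)\|\beta_{S^m}\|_2^2 \ge (1-c)J^2|S^m|,
\]
and a matching upper bound $\|V^m\|_{2,n}^2\le C(J')^2|S^m|$ holds whenever $T\setminus S^m\subseteq T^m$, so $\|V^m\|_{2,n}^2$ is of order $|S^m|$. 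The second ingredient is a per-step lower bound on the step size: on the event of A.3, $|\gamma^m|\ge\max_j|\langle X_j,V^m\rangle_n|-\lambda_n$, and a greedy estimate with Assumption~\ref{SE} gives $\max_j|\langle X_j,V^m\rangle_n|\ge\|V^m\|_{2,n}^2/\|\alpha^m\|_1$. Feeding these into $\|U^{m+1}\|_{2,n}^2=\|U^m\|_{2,n}^2-(\gamma^m)^2$, together with the budget $\|U^0\|_{2,n}^2-\sigma_n^2=\|X\beta\|_{2,n}^2+2\langle X\beta,\varepsilon\rangle_n\precsim s$ (by A.2 and A.4), bounds the number of steps at which $S^m\neq\emptyset$; I would conclude this number is $O(s)$, hence $S^m=\emptyset$, i.e.\ $T\subset T^{m^*}$, for some $m\le m^*=Ks$ with $K$ a large enough constant.

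\textbf{The main obstacle} is making that last count genuinely linear in $s$. For the orthogonal variant (oBA), $V^m_o$ is orthogonal to the span of the already-selected columns $X_{T^m}$, so $\|V^m_o\|_{2,n}^2=\sum_{j\in S^m}\beta_j\langle V^m_o,X_j\rangle_n$ and the greedy estimate localizes to the uncaptured true variables, giving $\max_{j\in S^m}|\langle X_j,V^m_o\rangle_n|\ge\|V^m_o\|_{2,n}^2/(J'|S^m|)\ge c_1^2(1-c)J^2/J'$, a constant; then $\|U^m_o\|_{2,n}^2$ decreases by a fixed amount at every step while $S^m\neq\emptyset$, so the initial budget $\precsim s$ forces $T\subset T^{m}$ within $O(s)$ steps. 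Componentwise boosting has no such orthogonality --- $\langle X_j,V^m\rangle_n$ need not vanish for $j\in T^m$ --- so one must control the interference of the already-selected block $\alpha^m_{T^m}$ (equivalently, bound $\|\alpha^m\|_1$, or the part of $\|V^m\|_{2,n}^2$ supported on $T^m$) in order to keep $(\gamma^m)^2$ bounded below rather than merely $\gtrsim\|V^m\|_{2,n}^2/s$. This is precisely where the revisiting analysis of Section~3 (Lemma~\ref{Lemma:FR} and its refinements in Appendix~A) and the sparse-eigenvalue Assumption~\ref{SE} enter, and it is the step I expect to require the most care; the OLS bound of part (ii) and the lower bound on $\|V^m\|_{2,n}$ are comparatively routine.
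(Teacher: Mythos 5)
Your part (ii) is correct and is exactly what the paper dismisses with ``the rest of the results simply follow'': once $T\subset T^{m^*}$ and $|T^{m^*}|\le (K+1)s$ is below $M_0$ and $n$, one has $P_{X_{T^{m^*}}}Y-X\beta=P_{X_{T^{m^*}}}\varepsilon$ and, on the event of A.3, $\|P_{X_{T^{m^*}}}\varepsilon\|_{2,n}^2\le |T^{m^*}|\lambda_n^2/(1-c)\precsim s\log p/n$, as you write. The problem is part (i), which is the entire content of Lemma \ref{lemma:post-boosting} and which you explicitly leave open. Your generic greedy estimate only gives $(\gamma^m)^2\gtrsim \|V^m\|_{2,n}^2/\|\alpha^m\|_1^2$, i.e.\ a decrease of order $\|V^m\|_{2,n}^2/s$ per step, and feeding that into the budget $\|U^0\|_{2,n}^2\precsim s$ does not force $S^m=\emptyset$ within $Ks$ steps for a fixed constant $K$; you correctly diagnose this as ``the main obstacle'' but do not close it, and your suggestion that the Section 3 revisiting machinery (Lemma \ref{Lemma:FR}) is the missing ingredient is not how the paper proceeds. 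As written, the proposal is therefore an incomplete proof.

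The paper's argument is a short contradiction that never uses the generic greedy bound: suppose some $j\in T^0$ is never selected in steps $1,\dots,Ks$. Then $\alpha^m_j=\beta_j$ at every such step, and the paper bounds the candidate step size at this particular coordinate, $|\gamma^m_j|=|\langle U^m,X_j\rangle_n|\ge |\langle V^m,X_j\rangle_n|-\lambda_n\ge \sqrt{1-c}\,|\beta_j|-\lambda_n\ge \sqrt{1-c}\,J(1-o(1))$, invoking A.2 and A.4. Since the algorithm maximizes $|\langle U^m,X_{j'}\rangle_n|$ over all $j'$, the realized step satisfies $|\gamma^m|\ge|\gamma^m_j|$, so each of the $Ks$ steps decreases $\|U^m\|_{2,n}^2$ by at least $(1-c)J^2(1-\delta)^2$, while $\|U^0\|_{2,n}^2\le (1+c)s(J')^2$; this forces $K\le (1+c)(J')^2/\bigl((1-c)J^2(1-\delta)^2\bigr)$, a contradiction once $K>(1+c)(J')^2/\bigl((1-c)J^2\bigr)$, so every $j\in T^0$ is selected by step $Ks$ with probability tending to one. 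This per-step \emph{constant} drop at the untouched true coordinate is the single ingredient your sketch lacks. Note that the inequality $|\langle V^m,X_j\rangle_n|\ge\sqrt{1-c}\,|\beta_j|$ is precisely the control of interference from the already-selected block that you flagged as the crux (the paper asserts it from A.2 with essentially no detail), so your instinct about where the care is needed is sound; but to have a proof you would need to supply that bound (or the paper's contradiction device), not appeal to the revisiting results of Section 3.
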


\begin{remark}
Our procedure is particularly sensitive to the starting value of the algorithm, at least in the non-asymptotic case. This might be exploited for screening and model selection: the procedure commences from different starting values until stopping. Then the intersection of all selected variables for each run is taken. This procedure might establish a sure screening property.
\end{remark}







\section{Simulation Study}
In this section we present the results of our simulation study. The goal of this exercise is to illustrate the relevance of our theoretical results for providing insights into the functionality of boosting and the practical aspects of boosting. In particular, we demonstrate that the stopping rules for early stopping we propose work reasonably well in the simulations and give guidance for practical applications. Moreover, the comparison with LASSO might also be of interest. First, we start with an illustrative example, later we present further results, in particular for different designs and settings.

\subsection{Illustrative Example}
The goal of this section is to give an illustration of the different stopping criteria. We employ the following data generating process (dgp):\footnote{In order to allow comparability the dgp is adopted from \cite{nr:buhlmann:2006}.}

\begin{equation}
y = 5 x_1 + 2 x_2 + 1 x_3 + 0 x_4 + \ldots + 0 x_{10} + \varepsilon,
\end{equation}

with $\varepsilon \sim N(0,2^2)$, $X=(X_1,\ldots, X_{10}) \sim N_10(0, I_{10})$, $I_{10}$ denoting the identity matrix of size $10$.
To evaluate the methods and in particular the stopping criteria we conduct an analysis of both in-sample and out-of-sample mean squared error (MSE). For the out-of-sample analysis we draw a new observation for evaluation and calculation of the MSE. For the in-sample analysis we also repeat the procedure and form the average over all repetitions. In both cases we employ $60$ repetitions. The sample size is $n=20$. Hence we have $20$ observations to estimate $10$ parameters.

The results are presented in Figures 5.1 and 5.2. Both show how MSE depends on the number of steps of the boosting algorithm. We see that MSE first decreases with more steps, reaches its minimum and then starts to increase again due to overfitting. In both graphs the solution of the $L_2$Boosting algorithm convergences to the OLS solution. We also indicate the MSE of LASSO estimators as horizontal lines (with cross-validated choice of the penalty parameter and data-driven choice of the penalization parameter.) In order to find a feasible stopping criterion we have to rely on the in-sample analysis. Figure \ref{ins_MSE} reveals that the stopping criterion we introduced in the sections before performs very well and even better than stopping based on a corrected  AIC values which has been proposed in the literature as stopping criterion for boosting. The average stopping steps of our criterion and the corrected AIC-based criterion (AICc) are presented by the vertical lines. On average our criterion stops earlier than the AICc based one. As our criterion performs better then the AICc we will not report AICc results in the following subsection. For the post-estimator similar patterns arise and are omitted.
\begin{figure}[htp]
\label{oos_MSE}
\centering{
\includegraphics[scale=0.35]{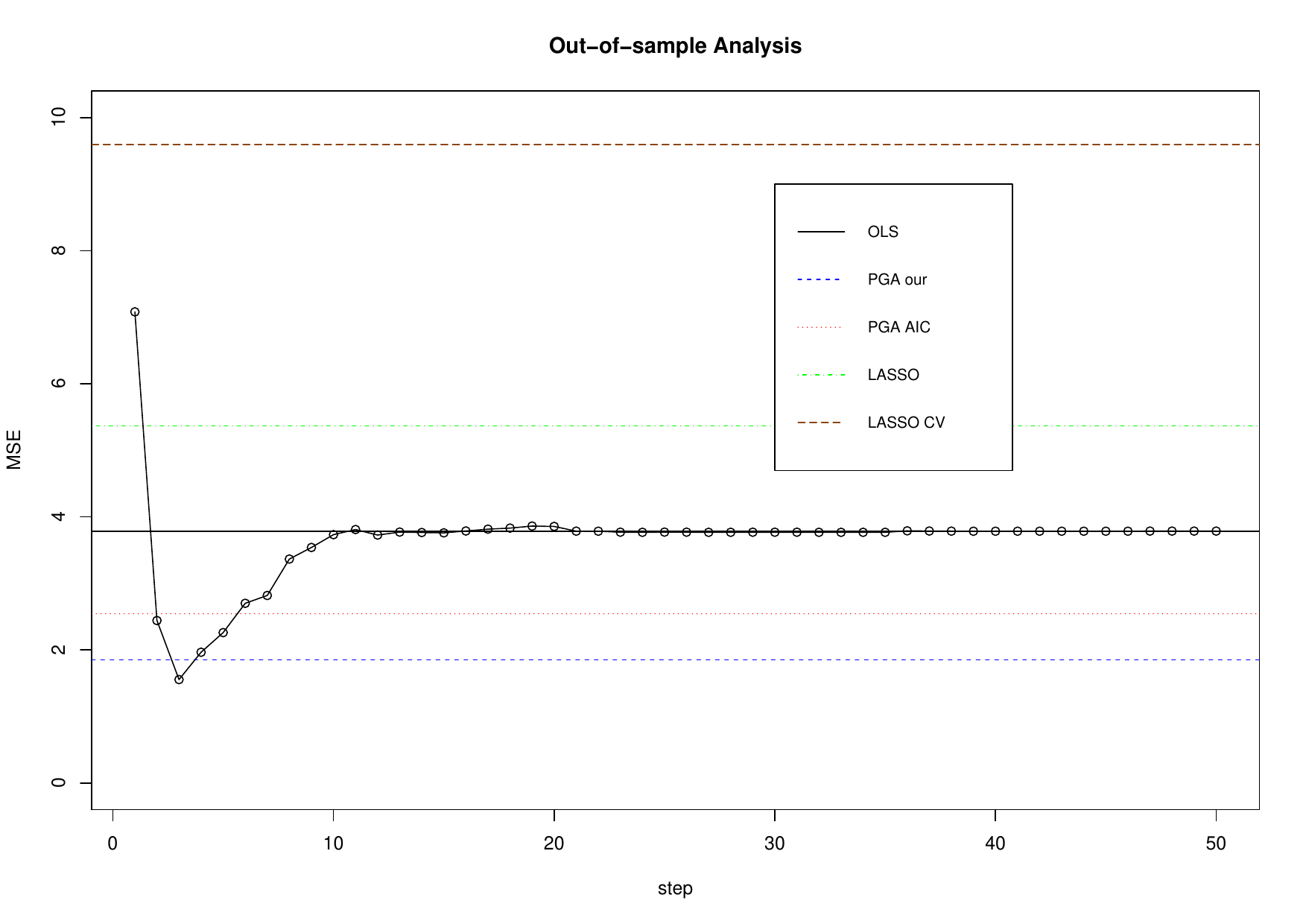}} 
\caption{Figure 5.1 shows the out-of-sample MSE of the $L_2$Boosting algorithm depending on the number of steps. The horizontal lines show the MSE of OLS and LASSO estimates.}
\end{figure}

\begin{figure}[htp]
\label{ins_MSE}
\centering{
\includegraphics[scale=0.35]{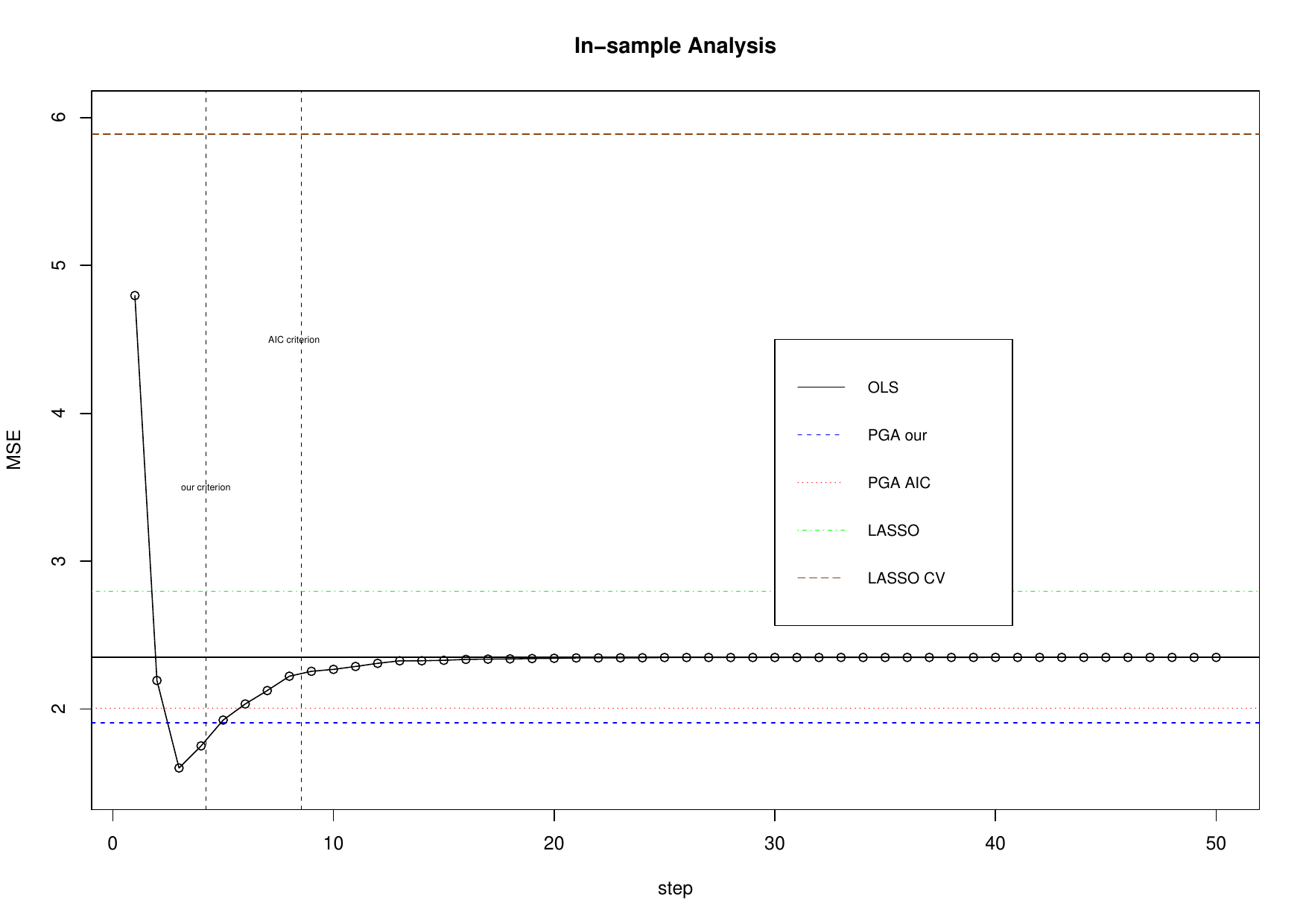}} 
\caption{Figure 5.2 shows the in-sample MSE of the $L_2$Boosting algorithm depending on the number of steps. The horizontal lines show the MSE of OLS and LASSO estimates.}
\end{figure}

\subsection{Further Results}
In this section we present results for different designs and settings so give a more detailed comparison of the methods.

We consider the linear model
\begin{equation}
y = \sum_{j=1}^p \beta_j x_j  + \varepsilon,
\end{equation}
with $\varepsilon$ standard normal distributed and i.i.d.
For the coefficient vector $\beta$ we consider two designs. First, we consider a sparse design, i.e., the first $s$ elements of $\beta$ are set equal to one, all other components to zero ($\beta=(1,\ldots,1,0,\ldots,0)$). Then we consider a polynomial design in which the $j$th coefficient given by $1/j$, i.e. $\beta=(1,1/2,1/3, \ldots, 1/p)$.

 For the design matrix $X$ we consider two different settings: an \textquotedblleft orthogonal\textquotedblright\ setting and a \textquotedblleft correlated\textquotedblright\ setting. In the former setting the entries of $X$ are drawn as i.i.d. draws from a standard normal distribution.
In the correlated design, the $x_i$ (rows of $X$) are distributed according to a multivariate normal distribution where the correlations are given by a Toeplitz matrix with factor $0.5$ and alternating signs.




We have the following settings:

\begin{itemize}
\item $X$: \textquotedblleft orthogonal\textquotedblright\ or \textquotedblleft correlated\textquotedblright
\item coefficient vector $\beta$: sparse design or polynomial decaying design
\item $n=100, 200, 400$
\item $p=100, 200$
\item $s=10$
\item $K=2$
\item out-of-sample prediction size $n_1=50$
\item number of repetitions $R=500$
\end{itemize}

We consider the following estimators: $L_2$Boosting with componentwise least squares, orthogonal $L_2$Boosting and LASSO. For Boosting and LASSO, we also consider the post-selection estimators (\textquotedblleft post\textquotedblright). For LASSO we consider a data-driven regressor-dependent choice for the penalization parameter (\cite{BCCH:2012}) and cross validation. Although cross validation is very popular, it does not rely on established theoretical results and therefore we prefer a comparison with the formal penalty choice developed in \cite{BCCH:2012}. For Boosting we consider three stopping rules: \textquotedblleft oracle\textquotedblright, \textquotedblleft Ks\textquotedblright, and a \textquotedblleft data-dependent\textquotedblright stopping criterion which stops if
$\frac{||U^m||_{2,n}^2}{||U^{m-1}||_{2,n}^2} = \frac{\hat{\sigma}_{m,n}^2}{\hat{\sigma}_{m-1,n}^2} > (1-C\log(p)/n))$ for some constant $C$. This means stopping, if the ratio of the estimated variances does not improve upon a certain amount any more.

The $Ks$-rule stops after $K \times s$ variables have been selected where $K$ is a constant. As $s$ is unknown, the rule is not directly applicable. The oracle rule stops when the mean-squared-error (MSE), defined below, is minimized, which is also not feasible in practical applications.

The simulations were performed in R (\cite{R:2014}). For LASSO estimation the packages \cite{hdm} and \cite{glmnet:2010} (for cross validation) were used. The Boosting procedures were implemented by the authors and the code is available upon request.

To evaluate the performance of the estimators we use the MSE criterion. We estimate the models on the same data sets and use the estimators to predict $50$ observations out-of-sample. The (out-of-sample) MSE is defined as
\begin{equation}
MSE = \mathbb{E} [(f(X)- f^m(X))^2]= \mathbb{E} [( X'(\beta - \beta^m))^2],
\end{equation}
where $m$ denotes the iteration at which we stop, depending on the employed stopping rule.
The MSE is estimated by
\begin{equation}
\hat{MSE} =  1/n_1 \sum_{i}^{n_1} [(f(x_i)- f^m(x_i))^2]= 1/n_1\sum_{i}^{n_1} [( x_i'(\beta - \beta^m))^2]
\label{eq:}
\end{equation}
for the out-of-sample predictions.

The results of the simulation study are shown in Tables 3 -- 10.

\begin{table}[ht]
\tiny
\centering
\caption{Simulation results: sparse, iid design (Boosting)}
\vspace{0.2cm}
\begin{tabular}{rrrrrrrrrrrrrrrrr}
  \hline 
  n & p & BA-oracle & BA-Ks & BA-our & p-BA-oracle & p-BA-Ks & p-BA-our & oBA-oracle & oBA-Ks & oBA-our & \\ 
  \hline
  100 & 100 & 0.44 & 0.69 & 0.66 & 0.12 & 0.58 & 0.43 & 0.12 & 0.82 & 0.54  \\ 
    100 & 200 & 0.48 & 0.85 & 1.28 & 0.14 & 0.77 & 1.65 & 0.12 & 1.00 & 0.60   \\ 
    200 & 100 & 0.15 & 0.29 & 0.26 & 0.05 & 0.25 & 0.21 & 0.05 & 0.34 & 0.20 \\ 
    200 & 200 & 0.20 & 0.41 & 0.35 & 0.06 & 0.31 & 0.21 & 0.06 & 0.41 & 0.24  \\ 
    400 & 100 & 0.07 & 0.13 & 0.10 & 0.03 & 0.11 & 0.08 & 0.03 & 0.14 & 0.09  \\ 
    400 & 200 & 0.09 & 0.19 & 0.16 & 0.03 & 0.16 & 0.12 & 0.02 & 0.21 & 0.14  \\ 
   \hline
\end{tabular}
\end{table}

\begin{table}[ht]
\tiny
\centering
\caption{Simulation results: sparse, iid design (Lasso)}
\vspace{0.2cm}
\begin{tabular}{rrrrrrrrrrrrrrrrr}
  \hline 
  n & p &  LASSO  & p LASSO  & Lasso-CV & p-Lasso-CV \\ 
  \hline
  100 & 100  & 0.88 & 0.70 & 0.54 & 0.94 \\ 
    100 & 200 &  1.02 & 1.30 & 0.72 & 1.02 \\ 
    200 & 100 &  0.29 & 0.28 & 0.22 & 0.43 \\ 
    200 & 200 &  0.37 & 0.39 & 0.30 & 0.44 \\ 
    400 & 100 &   0.13 & 0.11 & 0.09 & 0.18 \\ 
    400 & 200 &  0.16 & 0.20 & 0.14 & 0.27 \\ 
   \hline
\end{tabular}
\end{table}

\begin{table}[ht]
\tiny
\centering
\caption{Simulation results: sparse, correlated design (Boosting)}
\vspace{0.2cm}
\begin{tabular}{rrrrrrrrrrrrrrrrrr}
\hline
   n & p & BA-oracle & BA-Ks & BA-our & p-BA-oracle & p-BA-Ks & p-BA-our & oBA-oracle & oBA-Ks & oBA-our  \\ 
  \hline
  100 & 100 & 1.40 & 1.70 & 1.90 & 0.55 & 1.02 & 1.31 & 0.44 & 0.96 & 1.36  \\ 
    100 & 200 & 3.02 & 2.80 & 2.85 & 1.65 & 2.29 & 2.48 & 1.25 & 1.44 & 1.96  \\ 
    200 & 100 & 0.41 & 0.48 & 0.54 & 0.07 & 0.12 & 0.16 & 0.07 & 0.35 & 0.24  \\ 
    200 & 200 & 0.53 & 0.60 & 0.63 & 0.06 & 0.15 & 0.19 & 0.06 & 0.42 & 0.25  \\ 
    400 & 100 & 0.16 & 0.28 & 0.19 & 0.02 & 0.04 & 0.08 & 0.02 & 0.14 & 0.09  \\ 
    400 & 200 & 0.17 & 0.23 & 0.21 & 0.03 & 0.04 & 0.10 & 0.03 & 0.17 & 0.10 \\ 
   \hline
\end{tabular}
\end{table}

\begin{table}[ht]
\tiny
\centering
\caption{Simulation results: sparse, correlated design (Lasso)}
\vspace{0.2cm}
\begin{tabular}{rrrrrrrrrrrrrrrrrr}
\hline
   n & p & LASSO & p-LASSO  & Lasso-CV & p-Lasso-CV \\ 
  \hline
  100 & 100 &    2.63 & 1.35 & 0.97 & 1.37 \\ 
    100 & 200 &  2.96 & 2.04 & 1.63 & 2.38 \\ 
    200 & 100 & 1.10 & 0.23 & 0.33 & 0.57 \\ 
    200 & 200 & 1.64 & 0.38 & 0.49 & 0.87 \\ 
    400 & 100 & 0.38 & 0.10 & 0.13 & 0.23 \\ 
    400 & 200 &  0.36 & 0.15 & 0.16 & 0.31 \\ 
   \hline
\end{tabular}
\end{table}

\begin{table}[ht]
\tiny
\centering
\caption{Simulation results: polynomial, iid design (Boosting)}
\vspace{0.2cm}
\begin{tabular}{rrrrrrrrrrrrrrrrrr}
  \hline
  n & p & BA-oracle & BA-Ks & BA-our & p-BA-oracle & p-BA-Ks & p-BA-our & oBA-oracle & oBA-Ks & oBA-our  \\ 
  \hline
  100 & 100 & 0.37 & 0.81 & 0.58 & 0.36 & 1.09 & 0.64 & 0.37 & 1.23 & 0.73  \\ 
    100 & 200 & 0.44 & 1.04 & 1.38 & 0.43 & 1.32 & 1.85 & 0.45 & 1.39 & 0.74  \\ 
    200 & 100 & 0.26 & 0.40 & 0.34 & 0.26 & 0.47 & 0.37 & 0.26 & 0.49 & 0.39  \\ 
    200 & 200 & 0.27 & 0.54 & 0.39 & 0.27 & 0.63 & 0.42 & 0.28 & 0.61 & 0.44  \\ 
    400 & 100 & 0.17 & 0.19 & 0.19 & 0.17 & 0.21 & 0.20 & 0.18 & 0.22 & 0.20  \\ 
    400 & 200 & 0.18 & 0.30 & 0.26 & 0.18 & 0.33 & 0.28 & 0.17 & 0.34 & 0.28 \\ 
   \hline
\end{tabular}
\end{table}

\begin{table}[ht]
\tiny
\centering
\caption{Simulation results: polynomial, iid design (Lasso)}
\vspace{0.2cm}
\begin{tabular}{rrrrrrrrrrrrrrrrrr}
  \hline
  n & p & LASSO & p-LASSO  & Lasso-CV & p-Lasso-CV \\ 
  \hline
  100 & 100 &  0.43 & 0.83 & 0.45 & 0.84 \\ 
    100 & 200 &   0.50 & 1.06 & 0.54 & 0.76 \\ 
    200 & 100 &   0.30 & 0.34 & 0.26 & 0.47 \\ 
    200 & 200 & 0.34 & 0.52 & 0.33 & 0.52 \\ 
    400 & 100 &  0.19 & 0.19 & 0.15 & 0.24 \\ 
    400 & 200 &  0.21 & 0.31 & 0.20 & 0.38 \\ 
   \hline
\end{tabular}
\end{table}

\begin{table}[ht]
\tiny
\centering
\caption{Simulation results: polynomial, correlated design (Boosting)}
\vspace{0.2cm}
\begin{tabular}{rrrrrrrrrrrrrrrrr}
  \hline
  n & p & BA-oracle & BA-Ks & BA-our & p-BA-oracle & p-BA-Ks & p-BA-our & oBA-oracle & oBA-Ks & oBA-our  \\ 
  \hline
  100 & 100 & 0.23 & 0.68 & 0.46 & 0.22 & 0.91 & 0.49 & 0.22 & 1.22 & 0.51  \\ 
    100 & 200 & 0.26 & 0.87 & 1.02 & 0.24 & 1.10 & 1.42 & 0.24 & 1.46 & 0.66  \\ 
    200 & 100 & 0.19 & 0.37 & 0.28 & 0.15 & 0.44 & 0.26 & 0.14 & 0.49 & 0.24  \\ 
    200 & 200 & 0.22 & 0.49 & 0.35 & 0.20 & 0.56 & 0.34 & 0.20 & 0.61 & 0.34  \\ 
    400 & 100 & 0.13 & 0.20 & 0.17 & 0.10 & 0.20 & 0.15 & 0.09 & 0.20 & 0.15 \\ 
    400 & 200 & 0.14 & 0.23 & 0.20 & 0.11 & 0.24 & 0.17 & 0.11 & 0.28 & 0.17  \\ 
   \hline
\end{tabular}
\end{table}

\begin{table}[ht]
\tiny
\centering
\caption{Simulation results: polynomial, correlated design (Lasso)}
\vspace{0.2cm}
\begin{tabular}{rrrrrrrrrrrrrrrrr}
  \hline
  n & p   & LASSO & p-LASSO &  Lasso-CV & p-Lasso-CV \\ 
  \hline
  100 & 100 &    0.33 & 0.53 & 0.33 & 0.55 \\ 
    100 & 200 &  0.34 & 0.93 & 0.36 & 0.55 \\ 
    200 & 100 &  0.27 & 0.31 & 0.23 & 0.41 \\ 
    200 & 200 &  0.28 & 0.47 & 0.29 & 0.46 \\ 
    400 & 100 &   0.17 & 0.18 & 0.14 & 0.24 \\ 
    400 & 200 &   0.16 & 0.24 & 0.15 & 0.29 \\ 
   \hline
\end{tabular}
\end{table}

As expected, the oracle-based estimator dominates in all cases except in the correlated, sparse setting with more parameters than observations. Our stopping criterion gives very good results, on par with the infeasible $Ks$-rule. Not surprisingly, given our results, both post-Boosting and orthogonal Boosting outperform the standard $L_2$Boosting in most cases. A comparison of post- and orthogonal Boosting does not provide a clear answer with advantages on both sides. It is interesting to see that the post-LASSO increases upon LASSO, but there are some exceptions, probably driven by overfitting. Cross validation works very well in many constellations. An important point of the simulation study is to compare Boosting and LASSO. It seems that in the polynomial decaying setting, Boosting (orthogonal Boosting with our stopping rule) dominates post-LASSO. This also seems true in the iid, sparse setting. In the correlated, sparse setting they are on par. Summing up, it seems that Boosting is a serious contender for LASSO.


\begin{remark}
It seems that in very high-dimensional settings, i.e. when the number of signals $s$ is bigger than the sample size $n$, (e.g. $n=20$, $p=50$, $s=30$) boosting performs quite well and outperforms LASSO which seems to break down. This case is not covered by our setting, but it is an interesting topic for future research and shows one of the advantages of boosting.
\end{remark}

\section{Applications}
In this section we present applications from different fields to illustrate the boosting algorithms. We present applications to demonstrate how the methods work when applied to real data sets and, then compare these methods to related methods, i.e. LASSO. The focus is on making predictions which is an important task in many instances.

\subsection{Application: Riboflavin production}
 Thos application  involves genetic data and analyzes the production of riboflavin. First, we describe the data set, then we present the results.

\subsubsection{Data set}
The data set has been provided by DSM (Kaiserburg, Switzerland) and was made publicly available for academic research in \cite{BKM:2014} (Supplemental Material).

The real-valued response / dependent variable is the logarithm of the riboflavin production rate. The (co-)variables measure the logarithm of the expression level of $4,088$ genes ($p=4,088$), which are normalized. This means that the covariables are standardized to have variance $1$, and the dependent variable and the resources are \textquotedblleft de-meaned\textquotedblright ,which is equivalent to including an unpenalized intercept. The data set consists of $n=71$ observations which were hybridized repeatedly during a fed-batch fermentation process in which different engineered strains and strains grown under different fermentation conditions were analyzed. For further details we refer to \cite{BKM:2014}, their Supplemental Material, and the references therein.

\subsubsection{Results}
We analyzed a data set about the production of riboflavin (vitamin $B_2$) with B.~subtilis.
We split the data set randomly into two samples: a training set and a testing set. We estimated models with different methods on the training set and then used the testing set to calculate out-of-sample mean squared errors (MSE) in order to evaluate the predictive accuracy.
The size of the training set was $60$ and the remaining $11$ observations were used for forecasting. The table below shows the MSE errors for different methods discussed in the previous sections.

\begin{table}[ht!]
\centering
\tiny
\caption{Results Riboflavin Production -- out-of-sample MSE}
\begin{tabular}{rrrrrrrr}
\hline\hline
            BA-Ks &  BA-our & p-BA-Ks & p-BA-our &  oBA-Ks  & oBA-our  &LASSO &p-LASSO \\ \hline
 $0.4669$& $0.3641$ & $0.4385$ & $0.1237$ & $0.4246$ & $0.1080$ & $0.1687$ & $0.1539$  \\
\hline\hline
\end{tabular}
\end{table}

All calculations were peformed in R (\cite{R:2014}) with the package \cite{hdm} and our own code. Replication files are available upon request.

The results show that, again, post- and orthogonal $L_2$Boosting give comparable results. They both outperform LASSO and post-LASSO in this application.

\subsection{Predicting Test Scores}
\subsubsection{Data Set}
Here the task is to predict the final score in the subjects Mathematics and Portugese in secondary education. This is relevant, e.g., to identify students which need additional support to master the material. The data contains both student grades and  demographic, social and school related features and it was collected by using school reports and questionnaires. Two datasets are provided regarding the performance in two distinct subjects: Mathematics and Portuguese. The data set is made available at the UCI Machine Learning Repository and was contributed by Paulo Cortez. The main reference for the data set is \cite{corsil2008}.

\subsubsection{Results}
We employed five-fold CV to evaluate the predictive performance of the data set. The results remain stable when choosing a different number of folds. The data sets contain, for both test results, $33$ variables, which are used as predictors. The data set for the Mathematics test scores contains  $395$  observations, the sample size for Portuguese is $649$. The results confirm our theoretical derivations that boosting is comparable to Lasso.

\begin{table}[ht!]
\centering
\tiny
\label{Studentperf}
\caption{Prediction of education -- out-of-sample MSE}
\begin{tabular}{rrrrrr}
\hline\hline
     subject   &    BA &  post-BA & oBA & Lasso &  post-Lasso \\ \hline
Mathematics & $19.1$ & $19.3$  & $19.3$    & $18.4$  & $18.4$ \\
Protugese &  $8.0$ &$7.9$  &$7.9$ &$7.8$ &$7.8$\\

\hline\hline
\end{tabular}
\end{table}

\section{Conclusion}

Although boosting algorithms are widely used in industry, the analysis of their properties in high-dimensional settings has been quite challenging. In this paper the rate of convergence for the $L_2$Boosting algorithm and variants are derived, which has been a long-standing open problem until now.

\appendix
\section{ A new approximation theory for PGA}

\subsection{New results on approximation theory of PGA}
\label{Appendix_Results_Rev}
In this section of the appendix we introduce preparatory results for a new approximation theory based on revisiting. These results are useful to prove Lemma \ref{Lemma:FR}. The proofs of these lemmas are provided in the next section.

For any $m_1\geq m$, define $L(m,m_1)=||V^{m_1}||_{2,n}^2/||V^{m}||_{2,n}^2\leq 1$. For any integers $q_1>q$, define $\Delta(q,q_1):=\Pi_{j=0}^{q_1-q-1}(1-\frac{1-c}{q+j})$ with some constant $c$. It is easy to see that for any $k_1>k$, $\Delta(k,k_1)/(k/k_1)^{1-c}>1$ and $\Delta(k,k_1)/(k/k_1)^{1-c}\rightarrow 1$ as $k\rightarrow \infty$.

First of all, we can establish the following naive bounds on $L(m,m_1)$.

\begin{lemma}\label{Lemma_naive_bound}
Suppose $||V^m||>0$, $m+1,m_1<M_0$

(a) For any $m$, $L(m,m+1)\leq 1-\frac{1-c}{q(m)}$.

(b) For any $m\geq 0$, $m_1> m$,
$L(m,m_1)\leq \Delta(q(m),q(m)+m_1-m)$.
\end{lemma}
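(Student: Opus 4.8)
The plan is to derive both inequalities from a single lower bound on the one-step decrement $(\gamma^m)^2$. Because the regressors are standardized, $\mathbb{E}_n[x_{i,j}^2]=1$, so $\gamma^m=\max_{1\le j\le p}|<X_j,V^m>_n|$ and $||V^{m+1}||_{2,n}^2=||V^m||_{2,n}^2-(\gamma^m)^2$; thus everything reduces to showing
\[
(\gamma^m)^2\ \ge\ \frac{1-c}{q(m)}\,||V^m||_{2,n}^2 .
\]
To prove this I would assume $||V^m||_{2,n}>0$ (the claim is trivial otherwise) and write $V^m=X\alpha^m$ with $\alpha^m=\beta-\beta^m$; by construction $\mathrm{supp}(\alpha^m)\subseteq T^m$, a set of cardinality $q(m)$. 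Expanding $||V^m||_{2,n}^2=<X\alpha^m,V^m>_n=\sum_{j\in T^m}\alpha^m_j<X_j,V^m>_n$ and applying Cauchy--Schwarz over the $q(m)$ active coordinates gives
\[
||V^m||_{2,n}^2\ \le\ ||\alpha^m||_2\Big(\sum_{j\in T^m}<X_j,V^m>_n^2\Big)^{1/2}\ \le\ ||\alpha^m||_2\,\sqrt{q(m)}\,\gamma^m ,
\]
since $\gamma^m$ is the maximum of $|<X_j,V^m>_n|$ over \emph{all} $j$, hence at least the maximum over $j\in T^m$. Then, as long as $q(m)\le M_0$, Assumption SE yields $||V^m||_{2,n}^2=(\alpha^m)'\mathbb{E}_n[x_ix_i']\alpha^m\ge\phi_s\big(q(m),\mathbb{E}_n[x_ix_i']\big)\,||\alpha^m||_2^2\ge(1-c)\,||\alpha^m||_2^2$, so $||\alpha^m||_2\le ||V^m||_{2,n}/\sqrt{1-c}$. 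Substituting and simplifying gives the displayed bound on $(\gamma^m)^2$, whence $L(m,m+1)=1-(\gamma^m)^2/||V^m||_{2,n}^2\le 1-\frac{1-c}{q(m)}$, which is part (a).

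For part (b) I would simply telescope. Writing $L(m,m_1)=\prod_{l=m}^{m_1-1}L(l,l+1)$ and applying part (a) to each factor gives $L(m,m_1)\le\prod_{l=m}^{m_1-1}\big(1-\frac{1-c}{q(l)}\big)$. Since each PGA step enlarges the active set by at most one index, $q(l)\le q(m)+(l-m)$ for $l\ge m$, hence $1-\frac{1-c}{q(l)}\le 1-\frac{1-c}{q(m)+(l-m)}$; reindexing $j=l-m$ turns the product into $\prod_{j=0}^{m_1-m-1}\big(1-\frac{1-c}{q(m)+j}\big)=\Delta\big(q(m),q(m)+m_1-m\big)$, which is exactly (b).

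No step here is genuinely difficult. The only points that require care are keeping every Gram submatrix invoked within the range $q(\cdot)\le M_0$ where Assumption SE is available (this holds in the regime $m_1<M_0$ under consideration, using $q(\cdot)\le\cdot+s$), and observing that the index realizing $\gamma^m$ need not lie in $T^m$, which only strengthens the lower bound on $(\gamma^m)^2$ used above.
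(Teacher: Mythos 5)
Your proof is correct and takes essentially the same route as the paper: lower-bound the one-step decrement by writing $\|V^m\|_{2,n}^2=\sum_{j\in T^m}\alpha^m_j\langle X_j,V^m\rangle_n$, applying Cauchy--Schwarz over the at most $q(m)$ active coordinates, and using the restricted-eigenvalue bound $\|\alpha^m\|_2\le \|V^m\|_{2,n}/\sqrt{1-c}$, then telescoping part (a) with $q(l)\le q(m)+(l-m)$ to get part (b). The only cosmetic difference is that you apply Cauchy--Schwarz directly to the vector of inner products, whereas the paper first bounds each inner product by the maximal one and then bounds the $\ell_1$-norm of $\alpha^m$ by $\sqrt{q(m)}\,\|\alpha^m\|_2$; the resulting inequality is identical.
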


The bound of $L(m,m_1)$ established in Lemma \ref{Lemma_naive_bound} is loose. To obtain better results on the convergence rate of $||V^m||^2$, the revisiting behavior of the PGA has to be analyzed in more detail. The  revisiting behavior of PGA addresses the question when and how often  variables are selected again which have been already selected before. When PGA chooses too many new variables, it leads on average to slower convergence rates and vice versa. The next results primarily focus on analyzing the revisiting behavior of the PGA.

The following lemma summarizes a few basic facts of the sequence of $A_i$, $i\geq 1$.

\begin{lemma}\label{Lemma:simple results on revisiting}
Suppose $m,m_1<M_0$. Suppose further that conditions A.1 and A.2 are satisfied.

(1) If $E_n[X_i'X_i]$ is diagonal matrix, i.e., $c=0$, then there are only $R$s in the sequence $\mathcal{A}$.

(2) Define $N(m):=\{k|A_k=N,1\leq k\leq m\}$, the index set for the non-revisiting steps, and $R(m):=\{k|A_k=R,1\leq k\leq m\}$, the index set for the revisiting steps. Then $|R(m)|+|N(m)|=m$, $q(m)=|N(m)|+q(0)$, and $J_N(m):=\{j^k|k\in N(m)\}$ has cardinality equal to $|N(m)|$.

(3) $L(0,m)\leq \Pi_{i=1}^{|N(m)|}(1-\frac{1-c}{q(0)+i-1})\times (1-\frac{1-c}{q(m)})^{|R(m)|}$, i.e., the sequence to maximize the upper bound of $L(0,m)$ stated above is $NN...NRR...R$. Consequently, the sequence $\{A_{m+1},...,A_{m_1}\}$ to maximize the upper bound of $L(m,m_1)$ for general $m_1>m$ is also $NN...NRR...R$.
\end{lemma}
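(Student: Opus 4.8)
I would treat the three parts in order, all three relying on the quantitative step bound of Lemma~\ref{Lemma_naive_bound}(a). For (1), observe that if $E_n[x_ix_i']$ is diagonal then, by the variance-one normalization, it equals the identity; hence for the PGA residual $V^m=X\alpha^m$ we have $\langle V^m,X_j\rangle_n=\alpha^m_j$ and $\gamma^m=\alpha^m_{j^m}$, so step $m{+}1$ selects a coordinate of maximal modulus of $\alpha^m$ and the update $\alpha^{m+1}=\alpha^m-\alpha^m_{j^m}e_{j^m}$ merely zeroes that coordinate. Since $\alpha^0=\beta$ is supported on $T$, an induction gives $\mathrm{supp}(\alpha^m)\subseteq T$ for all $m$, so every selected variable lies in $T\subseteq T^m$ and therefore every step is a revisit by Definition~\ref{revisiting:definitions}; equivalently, the algorithm simply peels off the entries of $\beta$ in decreasing order of $|\beta_j|$ and reaches $\alpha^m=0$ after $s$ steps, all of them $R$'s. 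For (2), $|R(m)|+|N(m)|=m$ is immediate since each of the steps $1,\dots,m$ carries exactly one label; from $T^{k}=T^{k-1}\cup\{j^{k-1}\}$ we get $q(k)=q(k-1)$ at a revisiting step and $q(k)=q(k-1)+1$ at a non-revisiting step, and summing over $k\le m$ yields $q(m)=q(0)+|N(m)|$ as well as the monotonicity of $q$; finally, at a non-revisiting step the chosen variable lies outside $T^{k-1}$, hence differs from every earlier choice, so the variables picked at distinct non-revisiting steps are pairwise distinct and $|J_N(m)|=|N(m)|$.

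For (3) I would telescope $L(0,m)=\prod_{k=0}^{m-1}L(k,k+1)$ (if $\|V^k\|=0$ for some $k<m$ the left-hand side vanishes and there is nothing to prove) and bound each factor via Lemma~\ref{Lemma_naive_bound}(a) by $1-\tfrac{1-c}{q(k)}$. It then remains to reorganize $\prod_{k=0}^{m-1}\bigl(1-\tfrac{1-c}{q(k)}\bigr)$ using (2): since $q$ increases by one exactly at the non-revisiting steps, the $i$-th non-revisiting step contributes the factor $1-\tfrac{1-c}{q(0)+i-1}$ irrespective of its position in the sequence, giving $\prod_{i=1}^{|N(m)|}\bigl(1-\tfrac{1-c}{q(0)+i-1}\bigr)$; and each of the $|R(m)|$ revisiting steps has $q(k)\le q(m)$ by monotonicity, hence contributes a factor $\le 1-\tfrac{1-c}{q(m)}$, which yields the claimed bound. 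The same computation shows that the upper bound $\prod_k\bigl(1-\tfrac{1-c}{q(k)}\bigr)$ is largest precisely when all non-revisiting steps precede all revisiting steps --- then every revisiting step already sees the terminal value $q(m)$ --- so that $NN\cdots NRR\cdots R$ is the extremal order; the final assertion of the lemma follows by rerunning the identical argument on the block of steps $m{+}1,\dots,m_1$, with $q(0)$ and $q(m)$ replaced by $q(m)$ and $q(m_1)$ respectively.

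The only genuinely delicate point is the reorganization in (3): one must observe that, because $q$ changes only at non-revisiting steps, the product of the $N$-factors is order-independent while each $R$-factor is dominated by the terminal factor $1-\tfrac{1-c}{q(m)}$, and this is exactly what pins down $NN\cdots NRR\cdots R$ as the worst ordering. Parts (1) and (2) are essentially bookkeeping, and nothing beyond Lemma~\ref{Lemma_naive_bound}(a) is needed for any of the three.
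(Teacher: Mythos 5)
Your proof is correct; in fact the paper omits the proof of this lemma entirely (calling it ``obvious''), and your argument --- the identity-Gram observation for (1), the bookkeeping $q(k+1)=q(k)+\mathbf{1}\{A_{k+1}=N\}$ for (2), and the telescoping of Lemma~\ref{Lemma_naive_bound}(a) in (3), noting that the $i$-th non-revisiting step always contributes the factor $1-\tfrac{1-c}{q(0)+i-1}$ while each revisiting factor is dominated by $1-\tfrac{1-c}{q(m)}$ --- is exactly the filling-in the surrounding development intends. The one delicate point (order-independence of the $N$-factors versus domination of the $R$-factors, which pins down $NN\cdots NRR\cdots R$ as the extremal ordering) is handled correctly, so nothing is missing.
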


The proof of this lemma is obvious and hence omitted. Much more involved are the following results, for characterizing the revisiting behavior.

\begin{lemma}
\label{Lemma:naive bound1}
Suppose conditions A.1 and A.2 are satisfied. Suppose further that there is a consecutive subsequence of $N$s in the sequence $\mathcal{A}$ starting at position $m$ with length $k$. Assume that $m+k<M_0$.Then for any $\delta>0$, there exists an absolute constant $Q(\delta)>0$ such that for any $q(m)>Q(\delta)$, the length of such a sequence cannot be longer than $\left(((1+\delta)\frac{(2-c)(1+c)}{(2+c)(1-c)})^{\frac{1}{1-c}}-1\right)q(m)$.
\end{lemma}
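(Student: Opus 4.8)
The plan is to exploit the naive bound from Lemma \ref{Lemma_naive_bound}(b) together with a complementary \emph{lower} bound on the decay of $\|V^m\|_{2,n}^2$ across a block of non-revisiting steps. First I would set up the contradiction: suppose there is a consecutive run of $N$s of length $k$ starting at step $m$, with $k > \big(((1+\delta)\tfrac{(2-c)(1+c)}{(2+c)(1-c)})^{1/(1-c)}-1\big)q(m)$. Along this run the cardinality grows by exactly one per step, so $q(m+j)=q(m)+j$ for $0\le j\le k$, and Lemma \ref{Lemma_naive_bound}(b) gives the upper bound
\[
L(m,m+k)\le \Delta(q(m),q(m)+k)=\prod_{j=0}^{k-1}\Big(1-\frac{1-c}{q(m)+j}\Big)\sim \Big(\frac{q(m)}{q(m)+k}\Big)^{1-c},
\]
where the asymptotic equivalence holds for $q(m)>Q(\delta)$ large with multiplicative error controlled by $\delta$ (this is exactly the remark following the definition of $\Delta$, that $\Delta(k,k_1)/(k/k_1)^{1-c}\to1$). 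So $L(m,m+k)\lesssim (1+k/q(m))^{-(1-c)}$, and the hypothesized lower bound on $k$ forces $L(m,m+k)$ to be smaller than $\tfrac{(2+c)(1-c)}{(2-c)(1+c)}\cdot(1+\delta)^{-(1-c)}$ up to the $\delta$-slack — i.e. $\|V^{m+k}\|_{2,n}^2$ has dropped by a definite factor.

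The second ingredient, which is where the real work lies, is a \emph{matching lower bound} showing that $k$ consecutive non-revisiting steps \emph{cannot} decrease $\|V^m\|_{2,n}^2$ by that much. The idea is that when step $j$ ($m\le j<m+k$) picks a brand-new coordinate $j^j\notin T^j$, the greedy gain is $(\gamma^j)^2=\langle X_{j^j},V^j\rangle_n^2$, and because $V^j=X\alpha^j$ lies in the span of the at most $q(j)=q(m)+(j-m)$ active columns, the restricted-eigenvalue condition A.\ref{SE} controls how large any single correlation $\langle X_{j},V^j\rangle_n^2$ can be relative to $\|V^j\|_{2,n}^2$: roughly $\langle X_{j^j},V^j\rangle_n^2 \le \tfrac{C}{\phi_s}\cdot\|V^j\|_{2,n}^2/q(j)$ does \emph{not} hold in general, so instead one argues in the other direction. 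Concretely, since $X_{j^j}$ is a \emph{new} column, write $V^j$ in an orthonormalized basis of the current support; the component of $X_{j^j}$ orthogonal to that support has norm bounded below by $\sqrt{1-c}$ and above by $\sqrt{C}$ by A.\ref{SE} applied to a submatrix of size $q(j)+1\le m+k<M_0$. This gives $(\gamma^j)^2 \le (1 - (1-c)\cdot(\text{something})) \cdot$ type bounds, and summing $\|V^{m+k}\|_{2,n}^2 = \|V^m\|_{2,n}^2 - \sum_{j=m}^{m+k-1}(\gamma^j)^2$ one obtains $L(m,m+k)\ge \prod(1-\tfrac{c'}{q(m)+j})$ for an appropriate $c'$ tied to the $(2+c),(2-c),(1+c)$ combination — i.e. $L(m,m+k)\gtrsim (1+k/q(m))^{-c''}$ with $c''$ strictly controlled. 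Balancing this lower bound against the upper bound from the first paragraph is exactly what produces the constant $((1+\delta)\tfrac{(2-c)(1+c)}{(2+c)(1-c)})^{1/(1-c)}-1$, and a long enough run of $N$s violates the inequality, giving the contradiction.

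The main obstacle is the second ingredient: getting the \emph{sharp} lower bound on the decay during a pure-$N$ block, with the precise constant $\tfrac{(2-c)(1+c)}{(2+c)(1-c)}$ coming out. This requires carefully tracking how the greedy choice of a new coordinate interacts with the Gram submatrix — in particular, bounding $\langle X_{j^j},V^j\rangle_n$ both from the fact that $j^j$ is the \emph{maximizer} over all $p$ coordinates (so it beats every \emph{old} coordinate, forcing $V^j$ to be nearly orthogonal to the old active set, which is impossible since $V^j$ lives in that span unless $\|V^j\|$ is already small) and from the restricted eigenvalue bounds on the enlarged support. I expect the bookkeeping to mirror the proof of Lemma \ref{Lemma:FR}, with the $(1+\delta)$ slack absorbing all the lower-order terms once $q(m)>Q(\delta)$; the rest — the two $\Delta$-type product estimates and the final algebra isolating $k$ — is routine.
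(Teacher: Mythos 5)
There is a genuine gap at exactly the point you flag as ``where the real work lies.'' Your first and last steps match the paper: the upper bound $L(m,m+k)\leq \Delta(q(m),q(m)+k)\leq (1+\delta)\bigl(\tfrac{q(m)}{q(m)+k}\bigr)^{1-c}$ for $q(m)$ large, and the final algebra isolating $k$, are both correct and are what the paper does. But the middle ingredient the paper actually needs is a \emph{constant} lower bound $L(m,m+k)\geq \tfrac{(2+c)(1-c)}{(2-c)(1+c)}$, valid uniformly in $k$ and $q(m)$, and your sketch does not produce it. Your proposed mechanism --- bounding each greedy gain $(\gamma^j)^2=\langle X_{j^j},V^j\rangle_n^2$ by examining the component of the new column $X_{j^j}$ orthogonal to the current active span, and then multiplying per-step factors $\bigl(1-\tfrac{c'}{q(m)+j}\bigr)$ --- runs into the problem that no per-step upper bound of the form $(\gamma^j)^2\leq \tfrac{c'}{q(j)}\|V^j\|_{2,n}^2$ is available: the greedy selection only gives the \emph{lower} bound $(\gamma^j)^2\geq \tfrac{1-c}{q(j)}\|V^j\|_{2,n}^2$, and a single step can in principle remove a large fraction of $\|V^j\|_{2,n}^2$. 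So the ``matching lower bound'' you gesture at is not obtainable this way, and in particular the specific constant $\tfrac{(2-c)(1+c)}{(2+c)(1-c)}$ cannot come out of it.

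The paper's argument is a global identity rather than a per-step estimate. In a pure-$N$ block every selected index $j^j$ lies outside $T^j$, hence has true coefficient $\beta_{j^j}=0$ and current coefficient $\beta^j_{j^j}=0$; therefore each update adds a fresh coordinate of size $\gamma^j$ to the error vector $\alpha$, giving exactly $\|\alpha^{m+k}\|^2=\|\alpha^m\|^2+\sum_{j=m}^{m+k-1}(\gamma^j)^2$. Combining with the energy identity $\|V^m\|_{2,n}^2=\|V^{m+k}\|_{2,n}^2+\sum_{j=m}^{m+k-1}(\gamma^j)^2$ yields $\|V^m\|_{2,n}^2-\|V^{m+k}\|_{2,n}^2=\|\alpha^{m+k}\|^2-\|\alpha^m\|^2$, and then the two-sided restricted eigenvalue bounds ($\|\alpha^{m+k}\|^2\leq \tfrac{1}{1-c}\|V^{m+k}\|_{2,n}^2$, $\|\alpha^{m}\|^2\geq \tfrac{1}{1+c}\|V^{m}\|_{2,n}^2$) give $\tfrac{2+c}{1+c}\|V^m\|_{2,n}^2\leq \tfrac{2-c}{1-c}\|V^{m+k}\|_{2,n}^2$, i.e.\ the constant lower bound on $L(m,m+k)$. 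That is the idea missing from your proposal; once you have it, your first paragraph plus the concluding algebra completes the proof exactly as in the paper.
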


Lemma \ref{Lemma:naive bound1} establishes some properties of the $N$ sequence. Next, we formulate a lemma which characterizes a lower bound of the proportions of $R$s.

\begin{lemma}\label{Lemma:R}
Assume that assumptions A.1--A.2 hold. Assume that $m<M_0$. Consider the sequence of steps $1,2,...,m$. Set $\mu_e(c)=(1-\exp(-1/(1-c)^2))$ for any $c\in (0,1)$.
Then, the number of $R$s in the sequence $\mathcal{A}$ satisfies:
$$|R(m)|\geq \frac{1-\mu_e(c)}{2-\mu_e(c)}m-\frac{\mu_e(c)}{2-\mu_e(c)}q(0).$$
\end{lemma}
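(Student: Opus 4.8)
The plan is to recast the claimed lower bound on $|R(m)|$ as an upper bound on the number of non-revisiting steps. By Lemma~\ref{Lemma:simple results on revisiting}(2) we have $|R(m)| + |N(m)| = m$ and $q(m) = |N(m)| + q(0)$; substituting these into the asserted inequality and rearranging shows that it is equivalent to
\[
(1 - \mu_e(c))\,|N(m)| \;\le\; |R(m)| + \mu_e(c)\,q(0),
\]
i.e. to the statement that the non-revisiting steps can exceed the revisiting ones by at most the factor $1/(1-\mu_e(c)) = \exp(1/(1-c)^2)$, after crediting the $q(0)$ variables present at the start. So it suffices to establish this ``bounded-excess'' inequality.

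For that I would split $1,\dots,m$ at the revisiting steps into the maximal runs of consecutive $N$-labels: there are at most $|R(m)|+1$ of them, with lengths $\ell_0,\dots,\ell_{|R(m)|}$ summing to $|N(m)|$, and the $i$-th run begins where the cardinality equals $Q_i := q(0) + \ell_0 + \dots + \ell_{i-1}$ (the intervening revisiting steps do not change $q$). Lemma~\ref{Lemma:naive bound1} bounds a single run, $\ell_i < (\kappa(c,\delta)-1)\,Q_i$ once $Q_i$ is large, where $\kappa(c,\delta):=\bigl((1+\delta)\tfrac{(2-c)(1+c)}{(2+c)(1-c)}\bigr)^{1/(1-c)}$; but the naive telescoping $Q_{i+1} \le \kappa(c,\delta)\,Q_i$ that this yields is too weak — it only gives a logarithmic-in-$m$ lower bound on $|R(m)|$ — so the real work is an amortized argument. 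To run it I would track the potential $\|V^k\|_{2,n}^2$ together with the decay estimate of Lemma~\ref{Lemma_naive_bound}: each step multiplies it by at most $1-\tfrac{1-c}{q(k)}$, so a run carrying the cardinality from $Q$ to $Q'$ multiplies the potential by at most (essentially) $(Q/Q')^{1-c}$, whereas a revisiting step can only shrink it further. By Lemma~\ref{Lemma:simple results on revisiting}(3) the worst case for the bounded-excess inequality is the configuration $N\cdots N\,R\cdots R$; evaluating the forced potential drop over the longest $N$-run that Lemma~\ref{Lemma:naive bound1} permits, balanced against the total available potential $\|V^0\|_{2,n}^2$ and the number of revisits, is what produces the explicit threshold $\mu_e(c) = 1 - \exp(-1/(1-c)^2)$. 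Rearranging the resulting inequality then gives the statement of the lemma.

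The main obstacle is exactly this amortization: converting the per-run multiplicative control of Lemma~\ref{Lemma:naive bound1} into an additive, linear-in-$m$ bound on $|N(m)| = \sum_i \ell_i$. The mechanism is that a long $N$-run drives $q$ up, which makes every subsequent factor $1-\tfrac{1-c}{q}$ close to $1$, so continuing with only $N$-steps becomes almost free in potential but expensive in the accounting; only a sufficient density of revisiting steps keeps the cardinality from running away, and one should also bring in the floor $\|V^k\|_{2,n}^2 \ge c'\,\|X\beta_{S^k}\|_{2,n}^2$ (with $c'>0$ a constant and $S^k$ the not-yet-selected true variables), which follows from Assumption~\ref{SE}, to pin the potential from below while $T$ has not been exhausted. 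Making all the constants line up so that the excess ratio comes out exactly $\exp(1/(1-c)^2)$, uniformly in $m < M_0$ and up to an arbitrary $\delta$-loss absorbed into the threshold $Q(\delta)$ of Lemma~\ref{Lemma:naive bound1}, is the delicate part of the argument.
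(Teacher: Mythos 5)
There is a genuine gap: you have correctly reduced the lemma to an upper bound on $|N(m)|$ of the form $|N(m)|\le\frac{1}{2-\mu_e(c)}\,(m+\mu_e(c)q(0))$, but the mechanism you propose for proving it — amortizing the per-run length control of Lemma~\ref{Lemma:naive bound1} against the multiplicative decay factors $1-\frac{1-c}{q(k)}$ of Lemma~\ref{Lemma_naive_bound} and the ``total available potential'' $\|V^0\|_{2,n}^2$ — cannot deliver a linear-in-$m$ bound, and you acknowledge yourself that this amortization is the main obstacle without actually carrying it out. The decay factors apply to revisiting and non-revisiting steps alike, so they do not distinguish the two labels, and there is no useful lower bound on the terminal potential in the noiseless PGA to balance against; concretely, in the scenario ``long $N$-run, one $R$, long $N$-run, one $R$, \dots'' nothing in Lemmas~\ref{Lemma_naive_bound}, \ref{Lemma:simple results on revisiting}(3) and \ref{Lemma:naive bound1} is violated, and telescoping the run-length bound only yields $|R(m)|\gtrsim\log\bigl(q(m)/q(0)\bigr)$, exactly the logarithmic bound you note is too weak. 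Also, invoking Lemma~\ref{Lemma:simple results on revisiting}(3) as identifying the ``worst case'' is not legitimate here: that statement concerns the ordering maximizing the upper bound on $L(0,m)$, not the ordering extremal for the excess $|N(m)|-|R(m)|$. Finally, the constant $\mu_e(c)=1-\exp(-1/(1-c)^2)$ cannot come out of Lemma~\ref{Lemma:naive bound1}, whose constant is $\bigl(\tfrac{(2-c)(1+c)}{(2+c)(1-c)}\bigr)^{1/(1-c)}$.

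The idea your proposal is missing is the coefficient-space accounting that the paper uses. Introduce $\widetilde N(m)$, the steps whose selected variable lies outside $T^0$ and is visited exactly once in steps $1,\dots,m$; elementary counting gives $|\widetilde N(m)|\ge 2|N(m)|-m$ (this is how $|R(m)|$ enters). For each $l\in\widetilde N(m)$ the error vector retains $\alpha^m_{j^l}=-\gamma^{l-1}$, so $\sum_{l\in\widetilde N(m)}(\gamma^{l-1})^2\le\|\alpha^m\|^2\le\frac{1}{1-c}\|V^m\|_{2,n}^2$ by Assumption~\ref{SE}, while each term satisfies $(\gamma^{l-1})^2\ge\frac{1-c}{q(l-1)}\|V^{l-1}\|_{2,n}^2\ge\frac{1-c}{q(l-1)}\|V^m\|_{2,n}^2$ by monotonicity and Lemma~\ref{Lemma_naive_bound}. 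Since the $q(l-1)$ are distinct integers at most $q(0)+|N(m)|$, the sum of their reciprocals is at least $\log\bigl(\tfrac{q(0)+|N(m)|}{q(0)+|N(m)|-|\widetilde N(m)|}\bigr)$, and comparing this harmonic lower bound with the budget $\tfrac{1}{(1-c)^2}$ is precisely what yields $|\widetilde N(m)|\le\mu_e(c)\,(q(0)+|N(m)|)$ and hence the stated bound. Without this $\ell_2$-mass argument on once-visited non-support coordinates, the combinatorial and potential-decay facts you cite do not force a positive density of revisits.
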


Lemma \ref{Lemma:R} illustrates that for any $1>c>0$, the $R$ spots occupy at least some significant proportion of the sequence $\mathcal{A}$, with the lower bound of the proportion depending on $c$. In fact, such a result holds for arbitrary consecutive sequence $A_m,A_{m+1},...,A_{m+k}$, as long as $m+k<M_0$. In the main text, we further extend results stated in Lemma \ref{Lemma:R}.

\subsection{Proofs of Lemmas in Appendix \ref{Appendix_Results_Rev}}

\begin{proof}[Proof of Lemma \ref{Lemma_naive_bound}]

By definition, $||V^m||^2=\sum_{j\in T^m} \alpha_j^m <V^m,X_j>= \sum_{j\in T^m} \alpha_j^m||V^m||corr(V^m,X_j)$. Define $\rho_{j^m}:=|\gamma^m_{j^m}|/||V^m||=|corr(V^m,X_{j^m})|$. Therefore, $\rho_{j^m}|\sum_{j\in T^m}\alpha^m_j|\geq ||V^m||$, i.e., $\rho_{j^m}^2 |\sum_{j\in T^m}\alpha^m_j|^2\geq ||V^m||^2$.

By the Cauchy-Schwarz inequality, $|\sum_{j\in T^m}\alpha^m_j|^2\leq q(m)||\alpha^m||^2$. Therefore, $\rho_{j^m}^2\geq \frac{1-c}{q(m)}$.

So $||V^{m+1}||_{2,n}^2=||V^m||_{2,n}^2(1-\rho_{j^m}^2)\geq ||V^m||^2(1-\frac{1-c}{q(m)})$, i.e., $L(m,m+1)\leq 1-\frac{1-c}{q(m)}$. The second statement follows from statement (a) and the fact that $q(m'+1)\leq q(m')+1$ for any $m'\geq 0$.
\end{proof}

\begin{proof}[Proof of Lemma \ref{Lemma:naive bound1}]

Denote the length of such a sequence of $N$s as $k$. Then by definition of non-revisiting, $||\alpha^{m+k}||^2=||\alpha^m||^2+\sum_{j=m}^{m+k-1}(\gamma^j)^2$.

We know that $||V^m||_{2,n}^2=||V^{m+k}||_{2,n}^2+\sum_{j=m}^{m+k-1}(\gamma^j)^2,$ therefore,
$$||V^m||_{2,n}^2=||V^{m+k}||_{2,n}^2+||\alpha^{m+k}||^2-||\alpha^m||^2.$$
Applying Assumption A.2, $(2+c)/(1+c)||V^m||_{2,n}^2\leq (2-c)/(1-c)||V^{m+k}||_{2,n}^2$. Consequently, $L(m,m+k)\geq \frac{(2+c)(1-c)}{(2-c)(1+c)}$, with the right-hand side of the inequality being a constant that only depends on $c$. When $c=0$, the constant equals $1$, which implies that $k$ has to be $0$, i.e., there are no $N$s in the sequence $\mathcal{A}$.

For any $\delta>0$, there exists $Q>0$ such that for any $q(m)>Q$, $L(m,m+k)\leq \Delta(q(m),q(m)+k)\leq (1+\delta)(\frac{q(m)}{q(m)+k})^{1-c}$. It follows that $(\frac{q(m)}{q(m)+k})^{1-c}\geq \frac{1}{1+\delta}\frac{(2+c)(1-c)}{(2-c)(1+c)}$, i.e., $k\leq \left(((1+\delta)\frac{(2-c)(1+c)}{(2+c)(1-c)})^{\frac{1}{1-c}}-1\right)q(m)$.
\end{proof}

\begin{proof}[Proof of Lemma \ref{Lemma:R}]

Define $\widetilde{N}(m)$ as: $$\{l:j^l\notin T^0,j^l \textrm{ is only visited once within steps 1,2,...,m}\}.$$ So it is easy to see that $\widetilde{N}(m)\subset N(m)$ and $|\widetilde{N}(m)|\geq 2|N(m)|-m$. Therefore, for any $j^l$ with $l\in \widetilde{N}(m)$, $\alpha^{m}_{j^l}=-\gamma^l$.

If $|R(m)|\geq m/2$, then we already have the results stated in this lemma. Otherwise, $\widetilde{N}(m)$ is non-empty. Therefore, $||\alpha^{m}||^2\geq \sum_{l\in \widetilde{N}(m)} (\gamma^{l-1})^2$. By the sparse eigenvalue condition A.2,
\begin{equation}\label{eq:R bound}
\frac{1}{1-c}||V^{m}||_{2,n}^2\geq||\alpha^{m}||^2\geq \sum_{l\in \widetilde{N}(m)} (\gamma^{l-1})^2.
\end{equation}

Note that by Lemma \ref{Lemma:naive bound1}, $(\gamma^{l-1})^2=||V^{l-1}||_{2,n}^2-||V^l||\geq \frac{1-c}{q(l-1)}||V^{l-1}||^2$. Therefore, $(\gamma^{l-1})^2\geq \frac{1-c}{q(l-1)}||V^m||_{2,n}^2$. Plugging the above inequality back into (\ref{eq:R bound}), we get:

$$\frac{1}{1-c}||V^{m}||_{2,n}^2\geq (1-c)\sum_{l\in \widetilde{N}(m)}\frac{1}{q(l-1)}||V^m||^2.$$

Since these $q(l-1)$, $l\in \widetilde{N}(m)$, are different integers with their maximum being less than or equal to $q(m)=q(0)+|N(m)|$. Therefore, $\sum_{l\in \widetilde{N}(m)} \frac{1}{q(l-1)}\geq \sum_{l=1}^{|\widetilde{N}(m)|}\frac{1}{q(0)+|N(m)|-l}\geq \log((q(0)+|N(m)|)/(q(0)+|N(m)|-|\widetilde{N}(m)|))$.

The above inequality implies that $\exp(1/(1-c)^2)\geq (q(0)+|N(m)|)/(q(0)+|N(m)|-|\widetilde{N}(m)|)$, i.e., $|\widetilde{N}(m)|\leq (1-\exp(-1/(1-c)^2))(q(0)+|N(m)|)$.

Set $\mu_e(c)=(1-\exp(-1/(1-c)^2))\in (0,1)$ when $c\in [0,1)$.

Since we know that $|\widetilde{N}(m)|\geq 2|N(m)|-m$, we immediately have: $|N(m)|\leq \frac{1}{2-\mu_e(c)}(m+\mu_e(c)q(0))$, and $|R(m)|\geq \frac{1-\mu_e(c)}{2-\mu_e(c)}m-\frac{\mu_e(c)}{2-\mu_e(c)}q(0)$.
\end{proof}

\section{Proofs for Section 3}

\begin{proof}[Proof of Lemma \ref{Lemma:FR}]

First of all, WLOG, we can assume that $q(0)$ exceeds a large enough constant $Q(\delta)$. Otherwise, we can consider the true parameter $\beta$ contains some infinitesimal components such that $q(0)>Q(\delta)$.

Let's revisit inequality (\ref{eq:R bound}). $\sum_{l\in \widetilde{N}(m)} (\gamma^{l-1})^2\geq\sum_{l\in \widetilde{N}(m)}||V^{l-1}||^2\frac{1-c}{q(l-1)}$.
The right-hand side reaches its minimum when $\widetilde{N}(m)=\{m-|\widetilde{N}(m)|+1,m-|\widetilde{N}(m)|+2,...,m\}$, and for the step $m-|\widetilde{N}(m)|+l$, $q(m-|\widetilde{N}(m)|+l-1)=q(m)-|\widetilde{N}(m)|+l-1$, $l=1,2,...,|\widetilde{N}(m)|$. We know that $||V^{l-1}||_{2,n}^2/||V^m||_{2,n}^2\geq 1/L(l-1,m)$, while $L(m-l,m)\rightarrow (\frac{q(m)-l-1}{q(m)-1})^{1-c}$ as $q(m)-m+l\rightarrow \infty$. So for any $\delta>0$, and $q(0)$ large enough, $(1+\delta)\sum_{l\in \widetilde{N}(m)}||V^{l-1}||_{2,n}^2\frac{1-c}{q(l-1)}\geq ||V^m||_{2,n}^2(1-c)\sum_{l=1}^{|\widetilde{N}(m)|}\frac{1}{q(m)-l}\times (\frac{q(m)-1}{q(m)-l-1})^{1-c}\geq \frac{1}{1-c}q(m)^{1-c}((q(m)-|\widetilde{N}(m)|)^{c-1}-q(m)^{c-1})$.

Combining the above inequality with (\ref{eq:R bound}), we get:
$\frac{1+\delta}{1-c}\geq (1-c)q(m)^{1-c}((q(m)-|\widetilde{N}(m)|)^{c-1}-q(m)^{c-1})$, i.e, $|\widetilde{N}(m)|\leq q(m)[1-(1+\frac{1+\delta}{(1-c)^2})^\frac{-1}{1-c}]\leq (1+\delta')\mu_a(c)q(m)$, for some $\delta'>0$, with $\delta'\rightarrow 0$ as $\delta\rightarrow 0$. The rest of the arguments follow the proof stated for Lemma \ref{Lemma:R}.

Hence, the results stated in Lemma \ref{Lemma:FR} hold.
\end{proof}

\begin{proof}[Proof of Lemma \ref{lemma:appr-ori}]

Without loss of generality, we can assume that $k=0$. We can also assume that $||V^0||_{2,n}^2>0$, because otherwise $||V^0||_{2,n}^2=||V^{m}||_{2,n}^2=0$ so that the conclusion already holds. Set $n_0=|N(m)|\leq (1+\delta)\frac{m+\mu_a(c)q(0)}{2-\mu_a(c)}$ for some $\delta>0,$ when $q(0)$ is large enough.

Then, it easy to see that $||V^{m}||_{2,n}^2/||V^0||_{2,n}^2\leq \Pi_{i=1}^{n_0}(1-\frac{1-c}{q(0)+i-1}) (1-\frac{1-c}{q(0)+n_0})^{(m-n_0)}$, and the right hand reaches its maximum, when $n_0=(1+\delta)n^*_0$ with $n^*_0:=\frac{m+\mu_a(c)q(0)}{2-\mu_a(c)}$. When $q(0)$ is large enough, we know that there exists a $\delta>0$ such that $$\Pi_{i=1}^{n^*_0}(1-\frac{1-c}{q(0)+i-1})\leq (1+\delta)(\frac{q(0)}{q(0)+n^*_0})^{1-c}=(1+\delta)(\frac{2-\mu_a(c)}{2+\lambda})^{1-c}$$ and 
\begin{align*}
(1-\frac{1-c}{q(0)+n^*_0})^{m-n^*_0}&\leq (1+\delta)(1-\frac{1-c}{q(0)\frac{\lambda+2}{2-\mu_a(c)}})^{q(0)\frac{(1-\mu_a(c))\lambda-\mu_a(c)}{2-\mu_a(c)}}\\
&\leq (1+\delta)\exp(-\frac{(1-c)((1-\mu_a(c))\lambda-\mu_a(c))}{2+\lambda}).
\end{align*}
Thus, for any $\delta>0$, and for $q(0)$ large enough,
\[||V^{m}||_{2,n}^2/||V^0||_{2,n}^2\leq (1+\delta)(\frac{2-\mu_a(c)}{2+\lambda})^{1-c}\exp(-\frac{(1-c)((1-\mu_a(c))\lambda-\mu_a(c))}{2+\lambda}). \] Notice that the bound on the right-hand side does not depend on $q(0)$ or $m$.

As defined in the statement of this lemma, $\zeta(c,\lambda)=\frac{\frac{(1-c)((1-\mu_a(c))\lambda-\mu_a(c))}{2+\lambda}}{\log(\frac{2+\lambda}{2-\mu_a(c)})}+1-c$, where $\frac{2+\lambda}{2-\mu_a(c)}=\frac{q(0)+n_0^*}{q(0)}$ by definition.

So for any $\delta>0$, and for $q(0)$ large enough, $$||V^{m}||_{2,n}^2\leq ||V^0||_{2,n}^2 (\frac{q(0)}{q(0)+n^*_0})^{\zeta(c,\lambda)-\delta}.$$
\end{proof}

\begin{proof}[Proof of theorem \ref{theorem:appr}]

If $q(0)<Q(\delta)$ where $Q(\delta)$ is defined in Lemma \ref{Lemma:FR}, we can treat $\beta$ as if there are additional infitestimony coefficients so that $q(0)=Q(\delta)$.

Let $\lambda^*$ be the maximizer of $\zeta(c,\lambda)$ given $c\in (0,1)$.
For any small $\delta>0$, define a sequence $m_0,m_1,\ldots$ according to the following rule:
$$m_0=s,m_{i+1}=\lceil m_i+\lambda^*n_i \rceil$$, $i=1,2,\ldots,$ with the sequence $n_0,n_1,\ldots$ being defined as:
$$n_{i+1}=\lfloor n_i+(1+\delta)\frac{1}{2-\mu_a(c)}(m_{i+1}-m_i+\mu_a(c)n_i)\rfloor,$$
with $n_0=s$.

It is easy to see that:
By Lemma \ref{Lemma:FR},

(1). $1<c_{\lambda^*}<m_{i+1}/m_i\leq C_{\lambda^*}$, for some constant $c_{\lambda^*},C_{\lambda^*}$ that only depends on $\lambda^*$ and $i\geq I(\delta)$, where $I(\delta)$ is a fixed real number depending on $\delta$.

(2). $c_n\leq n_i/m_i\leq C_n$, for $i\geq I(\delta)$, with $c_n, C_n$ being generic constants.

(3). $n_i\geq q(m_i)$, for $i\geq I(\delta)$.

And by Lemma \ref{lemma:appr-ori},

(4).  \begin{align*}
    ||V^{m_{i+1}}||_{2,n}^2/||V^{m_{i}}||_{2,n}^2&\leq (\frac{q(m_{i})}{q(m_i)+\frac{1}{2-\mu_a(c)}(m_{i+1}-m_i+\mu_a(c)q(m_i))})^{\zeta^*(c)-\delta}\\
    &\leq (\frac{n_i}{n_i+\frac{1}{2-\mu_a(c)}(m_{i+1}-m_i+\mu_a(c)n_i)})^{\zeta^*(c)-\delta}\\
    &\leq (\frac{n_i}{n_{i+1}})^{\zeta^*(c)-\delta},
    \end{align*} for all $i\geq I(\delta)$.
So, according to statements 1--4, we are able to conclude that:
\begin{equation}\label{eq:main_app_bound}
||V^{m_i}||_{2,n}^2\precsim C||V^0||_{2,n}^2 (\frac{s}{n_i})^{\zeta^*(c)-\delta}\precsim C||V^0||_{2,n}^2 (\frac{s}{m_i+s})^{\zeta^*(c)-\delta},
\end{equation}
for all $i\geq I(\delta)$, with $C$ being a constant.

For any $m>0$, $m<M_0$, since $m_0,m_1,...$ is an increasing sequence of positive integers, there exists $i$ such that $m_i\leq m<m_{i+1}$. So $\frac{m}{m_i}\leq \frac{m_{i+1}}{m_i}\leq C_{\lambda^*}$. Also, for $m$ large enough, $i$ must be sufficiently large that $i\geq Q(\delta)$.
Therefore, $||V^m||_{2,n}^2\leq ||V^{m_i}||_{2,n}^2\precsim ||V^0||_{2,n}^2 (\frac{s}{s+m_i})^{\zeta^*(c)-\delta}\precsim ||V^0||_{2,n}^2(\frac{s}{s+m})^{\zeta^*(c)-\delta}$.
\end{proof}

\section{Proofs for Section 4}

The two lemmas below state several basic properties of the $L_2$Boosting algorithm that will be useful in deriving the main results.

\begin{lemma}\label{lemma:uv}
$||U^{m+1}||_{2,n}^2=||U^m||_{2,n}^2-<U^m,X_{j_m}>_{n}^2=||U^m||_{2,n}^2(1-\rho^2(U^m,X_{j_m}))$, and
$||V^{m+1}||_{2,n}^2=||V^m||_{2,n}^2-2<V^m,\gamma^m_{j_m}X_{j_m}>_n+(\gamma^m_{j_m})^2,$
where $\gamma^m_{j_m}=<U^m,X_{j_m}>_n$.

Moreover, since $V^m=U^m-\varepsilon$,
$||V^{m+1}||_{2,n}^2=||V^m||_{2,n}^2-2<U^m,X_{j_m}>_n<\varepsilon,X_{j_m}>_n-<U^m,X_{j_m}>_n^2=||V^m||_{2,n}^2-2\gamma^m_{j_m}<\varepsilon,X_{j_m}>_n-(\gamma^m_{j_m})^2.$
\end{lemma}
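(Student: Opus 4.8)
The plan is purely computational: every identity follows from the explicit update rule in the $L_2$Boosting algorithm (Algorithm 1) together with the standardization $\mathbb{E}_n[x_{i,j}^2]=1$. First I would record the two one-step identities. Since $\beta^{m+1}=\beta^m+\gamma^m_{j_m}e_{j_m}$, we have $U^{m+1}=Y-X\beta^{m+1}=U^m-\gamma^m_{j_m}X_{j_m}$; and since $\alpha^{m+1}=\beta-\beta^{m+1}=\alpha^m-\gamma^m_{j_m}e_{j_m}$, we have $V^{m+1}=X\alpha^{m+1}=V^m-\gamma^m_{j_m}X_{j_m}$. A useful preliminary observation is that the standardization gives $||X_{j_m}||_{2,n}=1$, so the denominator in the definition of $\gamma^m_j$ equals $1$ and hence $\gamma^m_{j_m}=<U^m,X_{j_m}>_n$.

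For the first assertion I would expand $||U^{m+1}||_{2,n}^2=||U^m-\gamma^m_{j_m}X_{j_m}||_{2,n}^2$ by bilinearity of $<\cdot,\cdot>_n$, obtaining $||U^m||_{2,n}^2-2\gamma^m_{j_m}<U^m,X_{j_m}>_n+(\gamma^m_{j_m})^2||X_{j_m}||_{2,n}^2$. Substituting $||X_{j_m}||_{2,n}^2=1$ and $\gamma^m_{j_m}=<U^m,X_{j_m}>_n$ collapses the last two terms into $-<U^m,X_{j_m}>_n^2$, which gives $||U^{m+1}||_{2,n}^2=||U^m||_{2,n}^2-<U^m,X_{j_m}>_n^2$. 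The factored form is then just the definition of the empirical correlation: since $||X_{j_m}||_{2,n}=1$ we have $<U^m,X_{j_m}>_n^2=||U^m||_{2,n}^2\,\rho^2(U^m,X_{j_m})$.

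For the prediction error, the same expansion applied to $V^{m+1}=V^m-\gamma^m_{j_m}X_{j_m}$ yields $||V^{m+1}||_{2,n}^2=||V^m||_{2,n}^2-2<V^m,\gamma^m_{j_m}X_{j_m}>_n+(\gamma^m_{j_m})^2$, again using $||X_{j_m}||_{2,n}=1$; this is the second claimed identity. For the final display I would substitute $V^m=U^m-\varepsilon$ into the cross term, so that $<V^m,X_{j_m}>_n=<U^m,X_{j_m}>_n-<\varepsilon,X_{j_m}>_n=\gamma^m_{j_m}-<\varepsilon,X_{j_m}>_n$, and then collect the resulting terms, rewriting one of them via $\gamma^m_{j_m}=<U^m,X_{j_m}>_n$. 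There is no genuine obstacle: this is a bookkeeping lemma whose only content is the update rule and the normalization. The one point requiring care is to apply $||X_j||_{2,n}=1$ consistently (so the denominator in $\gamma^m_j$ genuinely disappears) and to track signs carefully when replacing $V^m$ by $U^m-\varepsilon$ in the cross term.
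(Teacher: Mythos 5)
Your proposal is correct and is exactly the routine verification intended here: the paper states this lemma without any proof, and the argument is precisely the one-step identities $U^{m+1}=U^m-\gamma^m_{j_m}X_{j_m}$ and $V^{m+1}=V^m-\gamma^m_{j_m}X_{j_m}$ combined with the normalization $||X_{j_m}||_{2,n}=1$, as you lay out. One remark on your closing caution about signs: carried out carefully, the substitution $<V^m,X_{j_m}>_n=\gamma^m_{j_m}-<\varepsilon,X_{j_m}>_n$ yields $||V^{m+1}||_{2,n}^2=||V^m||_{2,n}^2+2\gamma^m_{j_m}<\varepsilon,X_{j_m}>_n-(\gamma^m_{j_m})^2$, i.e.\ the cross term enters with a plus sign, whereas the printed lemma shows a minus; the paper's own later use of the identity (in the proof of Lemma \ref{Lemma:Bounds}, where it is written as $||V^{m+1}||_{2,n}^2=||V^m||_{2,n}^2-\gamma^m(\gamma^m-2<\varepsilon,x_{j^m}>_n)$) agrees with the plus sign, so this is a sign slip in the statement rather than a gap in your argument, and it is immaterial for the subsequent bounds since only $|<\varepsilon,X_{j_m}>_n|\leq\lambda_n$ is ever used.
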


\begin{lemma}\label{lemma: Zm}
Assuming that assumptions A.1-A.3 hold, and $m\leq M_0$. Let $Z_m=||U^m||^2_{2,n}-||V^m||^2_{2,n}$. Then, with probability $\geq 1-\alpha$ and uniformly in $m$, $|Z_m-\sigma_n^2|\leq \frac{2\sqrt{m+s}}{\sqrt{1-c}}\lambda_n ||V^m||_{2,n}$.
\end{lemma}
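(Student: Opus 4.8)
The plan is to first rewrite $Z_m$ in a transparent form. Since $Y=X\beta+\varepsilon$ and $U^m=Y-X\beta^m=X(\beta-\beta^m)+\varepsilon=V^m+\varepsilon$, expanding the empirical norm gives
\[
||U^m||_{2,n}^2=||V^m||_{2,n}^2+2<V^m,\varepsilon>_n+||\varepsilon||_{2,n}^2,
\]
so that, using $\sigma_n^2=\mathbb{E}_n[\varepsilon^2]=||\varepsilon||_{2,n}^2$, one gets the exact identity $Z_m-\sigma_n^2=2<V^m,\varepsilon>_n$. Hence the lemma reduces entirely to bounding the cross term $<V^m,\varepsilon>_n$ uniformly in $m$.

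Next I would exploit the sparsity of $\alpha^m=\beta-\beta^m$. By construction $\beta^m$ is supported on $\{j^0,\dots,j^{m-1}\}$ and $\beta$ on $T$, so $\mathrm{supp}(\alpha^m)\subseteq T^m$ with $|T^m|=q(m)\leq m+s$. Writing $V^m=X\alpha^m=\sum_{j\in T^m}\alpha^m_j X_j$ and applying Cauchy--Schwarz,
\[
|<V^m,\varepsilon>_n|\leq ||\alpha^m||\Big(\sum_{j\in T^m}<X_j,\varepsilon>_n^2\Big)^{1/2}\leq ||\alpha^m||\,\sqrt{q(m)}\,\sup_{1\leq j\leq p}|<X_j,\varepsilon>_n|.
\]
On the event of Assumption A.3 -- which has probability at least $1-\alpha$ and, crucially, is a single event that does not depend on $m$ -- the supremum is at most $\lambda_n$, so $|<V^m,\varepsilon>_n|\leq ||\alpha^m||\sqrt{m+s}\,\lambda_n$ simultaneously for all $m$. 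This is exactly where the uniformity in $m$ comes from.

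Finally I would convert $||\alpha^m||$ into the prediction norm via the restricted eigenvalue condition: since $\mathrm{supp}(\alpha^m)$ has at most $q(m)\leq m+s\leq M_0$ elements, Assumption SE gives $||V^m||_{2,n}^2=(\alpha^m)'\mathbb{E}_n[x_ix_i']\alpha^m\geq \phi_s(q(m),\mathbb{E}_n[x_ix_i'])\,||\alpha^m||^2\geq (1-c)||\alpha^m||^2$, i.e. $||\alpha^m||\leq ||V^m||_{2,n}/\sqrt{1-c}$. Combining with the previous display, on the event of A.3,
\[
|Z_m-\sigma_n^2|=2|<V^m,\varepsilon>_n|\leq \frac{2\sqrt{m+s}}{\sqrt{1-c}}\,\lambda_n\,||V^m||_{2,n},
\]
which is the assertion. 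There is no substantive obstacle here; the only two points requiring a word of care are (i) that the ``uniformly in $m$'' claim is genuinely free because A.3 is phrased as one high-probability event over all coordinates $j$, and (ii) that the index $q(m)\leq m+s$ must stay in the range $\leq M_0$ where $\phi_s\geq 1-c$ is guaranteed, which holds under the running convention that $M_0$ grows with $n$ and in every later invocation $m$ is kept below $M_0$ with $m+s$ of the same order.
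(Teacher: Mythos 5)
Your proposal is correct and follows essentially the same route as the paper: reduce to the identity $Z_m-\sigma_n^2=2<V^m,\varepsilon>_n$, bound the cross term by $\sqrt{m+s}\,\lambda_n\,\|\alpha^m\|$ using the support size and Assumption A.3, and convert $\|\alpha^m\|$ to $\|V^m\|_{2,n}/\sqrt{1-c}$ via the sparse eigenvalue condition. The only (immaterial) difference is that you obtain the identity by directly expanding $\|V^m+\varepsilon\|_{2,n}^2$, whereas the paper telescopes the recursion $Z_{m+1}=Z_m-2\gamma^m_{j^m}<\varepsilon,X_{j^m}>_n$; your version also correctly records the factor $1/\sqrt{1-c}$ that the statement requires.
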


Lemma \ref{lemma: Zm}  bounds the difference between $||U^m||^2_{2,n}$ and $||V^m||^2_{2,n}$. This difference is $\sigma^2(1-O_p(s/n))$ if $\beta^m=\beta$.

\begin{proof}[Proof of Lemma \ref{lemma: Zm}]

From Lemma 1, $Z_{m+1}=Z_m-2\gamma_{j^m}^m<\varepsilon,X_{j^m}>_n$, and
$Z_{m}=Z_0-2\sum_{k=0}^{m-1}\gamma_{j^k}^k <\varepsilon ,X_{j^k}>=Z_0-2<\varepsilon,X\beta-V^m>$.
$Z_0=||y||_{2,n}^2-||X\beta||_{2,n}^2=||\varepsilon||_{2,n}^2+2<\varepsilon,X\beta>$.

Then,
\begin{align*}
    Z_m&=||\varepsilon||_{2,n}^2+2<\varepsilon,V^m>\\
    &=||\varepsilon||_{2,n}^2+2<\varepsilon,\frac{V^m}{||V^m||_{2,n}}>||V^m||_{2,n}\leq ||V^m||_{2,n}\lambda_n (||\alpha^m||_1/||V^m||)\\
    &\leq ||V^m||_{2,n}\sqrt{m+s}\lambda_n(||\alpha^m||/||V^m||)
\end{align*}
since $|supp(V^m)|\leq m+s$. By assumption A.2, $||\alpha^m||/||V^m||\leq \frac{1}{1-c}$. Hence, the conclusion holds.
\end{proof}

\subsection{Proofs for $L_2$Boosting}

\begin{proof}[Proof of Lemma \ref{Lemma:Bounds}]

We assume that $\lambda_n\geq \max_{1\leq j\leq p} |<\epsilon,X_j>_n|$. This event occurs with probability $\geq 1-\alpha$.

According to our definition, ${m}^*+1$ is the first time $||V^m||_{2,n}\leq \eta\sqrt{m+s}\lambda_n$, where $\eta$ is a positive constant. We know that in high-dimensional settings, $||U^m||_{2,n}\rightarrow 0$, so $||V^m||_{2,n}\rightarrow \sigma^2$. Thus, by fixing $p$ and $n$, such an $m^*$ must exist.

First, we prove that for any $m<\widetilde{m}:=(m^*+1)\wedge M_0$, we have: $||V^{m+1}||_{2,n}^2\leq ||V^m||_{2,n}^2$, i.e., $||V^m||_{2,n}^2$ is non-increasing with $m$.

By Lemma 1, $||V^{m+1}||_{2,n}^2=||V^m||_{2,n}^2-\gamma^m(\gamma^m-2<\varepsilon,x_{j^m}>_n)$.

To show that $||V^m||_{2,n}^2$ is non-increasing with $m$, we only need to prove that $\gamma^m$ and $(\gamma^m-2<\varepsilon,x_{j^m}>_n)$ have the same sign, i.e., $|\gamma^m|>2|<\varepsilon,x_j>_n|$. It suffices to prove $|\gamma^m|>2\lambda_n$. We know that $|\gamma^m|\geq \sqrt{(1-c)}\frac{||V^m||_{2,n}}{\sqrt{m+s}}-\lambda_n\geq \lambda_n(\eta\sqrt{1-c}-1)$. Thus, for any $\eta>\frac{3}{\sqrt{1-c}}$, $||V^{m+1}||_{2,n}^2\leq ||V^m||_{2,n}^2$ for all $m<\widetilde{m}$.

Define $q(m)$ as in Section 3.1, with $q(0)=s$.

For any $m<M_0\wedge (m^*+1)$, by selecting a variable that is the most correlated with $V^m$, we are able to reduce $||V^m||_{2,n}^2$ by at least $\frac{1-c}{q(m)}||V^m||_{2,n}^2$, and thus $||U^m||_{2,n}^2-||U^{m+1}||_{2,n}^2= ||V^m||_{2,n}^2-||V^{m+1}||_{2,n}^2-2\gamma^m<X_{j^m},\epsilon>_n\geq (\gamma^m)^2-2\lambda_n|\gamma^m|$.

Define $\widetilde{\gamma}^m:=\frac{\sqrt{1-c}}{\sqrt{q(m)}}||V^m||_{2,n}-\lambda_n$. Consider the variable $j'$ that is most correlated with $V^m$, and define $\gamma'=<X_{j'},V^m>_n$. By Lemma 1, $|\gamma'|\geq \frac{\sqrt{1-c}}{\sqrt{q(m)}}||V^m||_{2,n}$. Consequently, $|<X_{j'},U^m>_n|=|\gamma'+<X_{j'},\epsilon>_n|\geq \widetilde{\gamma}^m$.

By definition, $|\gamma^m|\geq |<X_{j'},U^m>_n|\geq \widetilde{\gamma}^m$.

Since we assume that $||V^m||_{2,n}>\eta\sqrt{m+s}\lambda_n$, so $\widetilde{\gamma}^m:=\frac{\sqrt{1-c}}{\sqrt{q(m)}}||V^m||_{2,n}-\lambda_n>\lambda_n$. Therefore, $|\gamma^m|> \lambda_n$, and $(\gamma^m)^2-2\lambda_n|\gamma^m|\geq (\widetilde{\gamma}^m)^2-2\lambda_n\widetilde{\gamma}^m$.


By Lemma \ref{lemma:uv}, $||V^m||_{2,n}^2-||V^{m+1}||_{2,n}^2= ||U^m||_{2,n}^2-||U^{m+1}||_{2,n}-2\gamma^m<X_{j^m},\epsilon>_n\geq |\gamma^m|^2-2\lambda_n|\gamma^m|\geq |\widetilde{\gamma}^m|^2-2\lambda_n \widetilde{\gamma}^m$

$=\frac{1-c}{q(m)}||V^m||_{2,n}^2-4\frac{\sqrt{1-c}}{\sqrt{q(m)}}\lambda_n ||V^m||_{2,n}+3\lambda_n^2\geq \frac{1-c}{q(m)}||V^m||_{2,n}^2-4\frac{\sqrt{1-c}}{\sqrt{q(m)}}\lambda_n ||V^m||_{2,n}.$

Thus,
\begin{equation}\label{eq:Vm}
||V^{m}||_{2,n}^2-||V^{m+1}||_{2,n}^2\geq \frac{1-c}{q(m)}||V^m||_{2,n}^2-4\frac{\sqrt{1-c}}{\sqrt{q(m)}}\lambda_n ||V^m||_{2,n}.
\end{equation}

Plugging in $||V^k||_{2,n}>\eta\sqrt{k+s}\lambda_n$ to inequality (\ref{eq:Vm}), for any $k>0,k<M_0-1$, we obtain that $||V^{k+1}||^2_{2,n}\leq (1-\frac{1-c}{q(k)})||V^k||^2_{2,n}+4\frac{\sqrt{1-c}}{\sqrt{q(k)}}\lambda_n ||V^k||_{2,n}\leq (1-\frac{1-c}{q(k)})||V^k||^2_{2,n}+\frac{4\sqrt{1-c}}{\eta q(k)}||V^k||^2_{2,n}=\frac{1-c-\psi}{q(k)}||V^k||^2_{2,n},$
where $\psi=\frac{4\sqrt{1-c}}{\eta}$ can be an arbitrarily small constant when $\eta$ is large enough.

Similar to the above inequality, recall the definition of $N(m), R(m) \textrm{ and } \widetilde{N}(m)$. By the argument in Lemma \ref{Lemma:FR}, when $n$ is large enough, $\frac{1}{1-c}||V^m||_{2,n}^2\geq\sum_{k\in \widetilde{N}(m)}(\gamma^{k-1})^2\geq \sum_{k\in \widetilde{N}(m)}\frac{1-c}{q(k-1)}||V^{k-1}||_{2,n}^2-2\sum_{k\in \widetilde{N}(m)} \frac{\sqrt{1-c}}{\sqrt{q(k-1)}}\lambda_n ||V^{k-1}||_{2,n}\geq \sum_{k\in \widetilde{N}(m)}\frac{1-c-\psi}{q(k-1)}||V^{k-1}||_{2,n}^2$.

Thus, following the proof of Lemma \ref{Lemma:FR}, we can treat $1-c-\psi$ as the constant $1-c$ in Lemma \ref{Lemma:FR}, and we obtain:
\begin{equation}\label{eq:Vm dec}
||V^m||_{2,n}^2\precsim ||V^0||_{2,n}^2 (\frac{s}{m+s})^{\zeta^*(c)-\delta-\psi},
\end{equation}
for some small $\delta>0$, and for all $m< \widetilde{m}$. Define $\delta'=\delta+\psi$.

On the other hand, $||V^m||_{2,n}^2\geq (\eta \sqrt{m+s}\lambda_n)^2$ for all $m<\widetilde{m}$. Therefore, combining with (\ref{eq:Vm dec}), we get:
$${\frac{s\log(p)}{n}}\precsim ||V^0||^2_{2,n}(\frac{s}{\widetilde{m}-1+s})^{\zeta^*(c)-\delta'+1},$$
or equivalently,
$\widetilde{m}\precsim s(\frac{s\log(p)}{n ||V^0||^2_{2,n}})^{-\frac{1}{1+\zeta^*(c)-\delta'}}$.

By assumption, $\log(M_0/s)+(\xi+\frac{1}{1+\zeta^*(c)})\log(\frac{s\log(p)}{n ||V^0||_{2,n}^2})>0$ for some $\xi>0$. Thus, asymptotically, $\widetilde{m}=M_0\wedge (m^*+1)<M_0$, i.e., $m^*+1<M_0$.

Therefore, for $\delta'$ small enough, $m^*+1<M_0$. Thus, $m^*\precsim s(\frac{s\log(p)}{n})^{-\frac{1}{1+\zeta^*(c)-\delta'}}$, and $||V^{m^*+1}||_{2,n}^2\leq \eta\sqrt{m^*+1+s}\lambda_n\precsim ||V^0||_{2,n}^{\frac{2}{1+\zeta^*(c)-\delta'}}\left(\frac{s\log(p)}{n}\right)^{\frac{\zeta^*(c)-\delta'}{1+\zeta^*(c)-\delta'}}$, for any small $\delta'>0$.

\end{proof}

\begin{proof}[Proof of Theorem \ref{main}]

At the $(m^*_1+1)^{th}$ step, we have:
$$||U^{m^*_1+1}||^2_{2,n}> (1-c_u\log(p)/n)||U^{m^*_1}||^2_{2,n}.$$
It follows that $(\gamma^{m_1^*})^2<c_u\log(p)/n ||U^{m^*_1}||^2_{2,n}$, while $(\gamma^{m})^2\geq c_u\log(p)/n ||U^{m}||^2_{2,n}$ for all $m<m^*_1$.

Consider the $m^*$ defined in Lemma \ref{Lemma:Bounds} as a reference point.

(a) Suppose $m^*_1< m^*$: By the proof of Lemma \ref{Lemma:Bounds}, $||V^m||^2$ is decreasing when $m\leq m_1^*+1$.

By Lemma \ref{lemma: Zm}, $||U^{m_1^*}||^2_{2,n}\leq \sigma_n^2 +2\frac{\sqrt{m_1^*+s}}{\sqrt{1-c}}\lambda_n ||V^{m_1^*}||_{2,n}$.

It follows that
\begin{equation}\label{eq:ineq-main1}
(\gamma^{m_1^*})^2< c_u\log(p)/n ||U^{m_1^*}||^2_{2,n}<c_u\lambda_n+2c_u\log(p)/n \frac{\sqrt{m_1^*+s}}{\sqrt{1-c}}\lambda_n ||V^{m_1^*}||_{2,n}.
\end{equation}
Now we would like to form a lower bound for $(\gamma^{m_1^*})^2$.

$(\gamma^{m^*_1})^2 = ||U^{m^*_1}||^2_{2,n}-||U^{m^*_1+1}||^2_{2,n} = ||V^{m^*_1}||_{2,n}^2-||V^{m^*_1+1}||_{2,n}^2 - 2\gamma^{m^*_1}<X_{j^{m^*_1}},\epsilon>_n\geq ||V^{m^*_1}||_{2,n}^2-||V^{m^*_1+1}||_{2,n}^2-2\lambda_n|\gamma^{m^*_1}|.$ By inequality (\ref{eq:Vm}), $||V^{m^*_1}||_{2,n}^2-||V^{m^*_1+1}||_{2,n}^2\geq \frac{1-c}{q(m)}||V^{m^*_1}||_{2,n}^2-4\frac{\sqrt{1-c}}{\sqrt{q(m)}}\lambda_n ||V^{m^*_1}||_{2,n}$.

So, $(\gamma^{m^*_1})^2\geq \frac{1-c}{q(m)}||V^{m^*_1}||_{2,n}^2 - 2\lambda_n |\gamma^{m^*_1}| - 4\frac{\sqrt{1-c}}{\sqrt{q(m)}}\lambda_n ||V^{m^*_1}||_{2,n}.$

Consequently, \begin{equation}\label{eq:ineq-main2}(\gamma^{m^*_1})^2 \geq \frac{\sqrt{1-c}}{\sqrt{q(m^*_1)}}||V^{m^*_1}||_{2,n}-4\lambda_n\end{equation}

Plugging inequality (\ref{eq:ineq-main2}) in inequality (\ref{eq:ineq-main1}), it is easy to see that $||V^{m^*_1}||^2_{2,n}\leq K(m^*_1+s)\lambda_n^2\precsim (m^*\wedge s)\lambda_n^2$ for some $K>0$.

By Lemma \ref{Lemma:Bounds}, $(m^*\wedge s)\lambda_n^2\precsim_p ||V^0||_{2,n}^\frac{1}{1+\zeta^*(c)-\delta}\left(\frac{s\log(p)}{n}\right)^{\frac{\zeta^*(c)-\delta}{1+\zeta^*(c)-\delta}}$.

(b) Suppose $m^*_1\geq m^*$: it follows that $(\gamma^m_{j^m})^2\geq c_u\log(p)/n ||U^m||^2_{2,n}$ for all $m<m^*_1$. Since $||U^m||^2_{2,n}$ is a decreasing sequence, for $\delta$ small enough, there exists some $m_2$ such that $||U^m_2||^2_{2,n}> (1-\delta)\sigma_n^2$ for any $m\leq m_2$, and $||U^{m_2+1}||^2_{2,n}\leq (1-\delta)\sigma_n^2$.

For $\delta$ small enough and $m\leq m_2\wedge m^*_1$,  $||V^{m+1}||^2_{2,n}- ||V^m||^2_{2,n} =-(\gamma^m)^2-2\gamma^m<X_{j^m},\epsilon>_n\leq -(\gamma^m)^2 + 2\lambda_n |\gamma^m|$. Since $c_u>4$, so for $\delta$ small enough, $|\gamma^m|^2 \geq c_u\log(p)/n||U^m||^2_{2,n}\geq c_u (1-\delta)\lambda_n^2 \geq 4\lambda_n^2$, so $-(\gamma^m)^2 + 2\lambda_n |\gamma^m|<0$.

Case (b.1): Suppose $m^*_1\leq m_2$:\\ Then, $||V^{m_1^*}||_{2,n}^2\leq ||V^{m^*}||_{2,n}^2\precsim_p ||V^0||_{2,n}^\frac{1}{1+\zeta^*(c)-\delta}\left(\frac{s\log(p)}{n}\right)^{\frac{\zeta^*(c)-\delta}{1+\zeta^*(c)-\delta}}$.

Case (b.2): Suppose $m^*_1 > m_2$: 
We show that this leads to a contradiction.

First of all, we claim that $m_2\geq m^*+1$. We prove this contradiction:

We know that $||U^m||^2_{2,n}=\sigma_n^2+||V^m||^2_{2,n}+2<V^m,\epsilon>_n$. Since $||U^{m_2+1}||^2_{2,n}\leq (1-\delta)\sigma_n^2$, $2<V^{m_2},\epsilon>_n\leq ||V^{m_2}||_{2,n}^2+2<V^{m_2},\epsilon>_n\leq -\delta\sigma_n^2$. So $|2<V^{m_2},\epsilon>_n|\geq \delta\sigma_n^2$.

Suppose $m_2\leq m^*$, it follows that $m_2\leq m^*<M_0$.
 Since we know that $||V^{m_2}||^2_{2,n}$ is decreasing for all $m\leq m^*$, we have $||V^{m_2}||_{2,n}^2\geq ||V^{m^*}||_{2,n}^2\geq \eta(m^*+s)\lambda_n^2$.

Equivalently, $||V^{m^*}||_{2,n}\geq \eta\sqrt{m^*+s}\lambda_n$.

Therefore, $||V^{m_2}||_{2,n}^2+2<V^{m_2},\epsilon>_n\geq ||V^{m_2}||_{2,n}(||V^{m_2}||_{2,n}-2\frac{\sqrt{m_2+s}}{\sqrt{1-c}}\lambda_n)>0$, which is a contradiction to $||V^{m_2}||_{2,n}^2+2<V^{m_2},\epsilon>_n\leq -\delta\sigma_n^2$.

So it must hold that $m_2\geq m^*+1$. Therefore, $||V^{m}||^2_{2,n}\leq ||V^{m^*+1}||^2_{2,n}\leq c_u(m^*+s+1)\lambda_n^2$ for any $m^*+1\leq m\leq m_2$.

We also know that by assumption, $(\gamma^{m})^2\geq c_u(1-\delta)\lambda_n^2$, for any $m\leq m_2<m_1^*$.

Since $||V^m||^2_{2,n}-||V^{m+1}||^2_{2,n}=(\gamma^m)^2-2\gamma^m<X_{j^m},\epsilon>_n\geq c_{u_1}\lambda_n^2>0$, for some constant $c_{u_1}>0$ if $(1-\delta)c_u>2$, it follows that $||V^m||^2_{2,n}\geq ||V^{m+1}||^2_{2,n}$ for $m=m^*,m^*+1,....,m_2-1$. Consequently, $||V^{m_2}||^2_{2,n}\leq ||V^{m^*}||_{2,n}^2$.



By assumption, at the $(m_2+1)^{th}$ step, we know that $||U^{m_2+1}||^2_{2,n}\leq (1-\delta)\sigma_n^2$. It follows that:
\begin{equation}\label{eq:contradict}
|<V^{m_2},\epsilon>_n|\geq \delta'\sigma_n^2,
\end{equation}

for some positive constant $\delta'>0$.

However, $||V^{m_2}||^2_{2,n}\leq ||V^{m^*}||_{2,n}^2$, so $|<V^{m_2},\epsilon>_n|\leq ||V^{m_2}||_{2,n}\sigma_n\leq ||V^{m^*}||_{2,n}\sigma_n\rightarrow 0$, which contradicts (\ref{eq:contradict}).

By collecting all the results in (a), (b).1, (b).2, our conclusion holds.




\end{proof}

\subsection{Proofs for oBA}
\begin{proof}[Proof of Lemma \ref{lemma:lowerboundoga}]

By the sparse eigenvalue condition: $$||V^m_o||_{2,n}^2\geq  (1-c)||\beta_{S^m}||_{2}^2\geq \frac{1-c}{C}||X\beta_{S^m}||_{2,n}^2.$$

Similarly, for $U^m_o$, $||U^m_o||_{2,n}^2=||M_{P^m_o}\varepsilon+M_{P^m_o}(X\beta)||_{2,n}^2=||M_{P^m}\varepsilon||_{2,n}^2+||V^m_o||_{2,n}^2+2<\varepsilon,V^m_o>_n\geq \frac{n-m}{n}\sigma^2+||V^m_o||^{2}_{2,n}-2\lambda_n \sqrt{m+s}||V^m_o||_{2,n}$.
\end{proof}

\begin{proof}[Proof of Lemma \ref{OGA_roc}]

At step $Ks$, if $T^{Ks}\supset T$, then $||V^m_o||_{2,n}^2=||M_{P^m_o}V^m_o||_{2,n}^2=0$. The estimated predictor satisfies: $||x\beta-x\beta^m||_{2,n}\leq 2(1+\eta)\sigma\sqrt{ \frac{s\log(p)}{n}}$ with probability going to 1, where $\eta>0$ is a constant.

Let $A_1=\{T^{Ks}\supset T\}$.

Consider the event $A_1^c$. Then there exists a $j\in T$ which is never picked up in the process at $k=0,1,\ldots,Ks$.

At every step we pick a $j$ to maximize $|<X_j,U^m_o>_{2,n}|=|<X_j,V^m_o>_{2,n}+<X_j,\varepsilon>_2|$.

Let $W^m=X\beta_{S^m}$ and $\widetilde{W}^m=X\alpha^m_{S^m_c}$. Then $||V^m_o||_{2,n}^2\geq (1-c)(||\beta_{S^m}||_2^2+||\alpha_{S^m_c}||^2_2)\geq (1-c)\sum_{j\in S^m}|\beta_{S^m}|^2$.

Also $V^m_o=M_{P^m_o}X\beta_{S^m}$, so $<V^m_o,X\beta_{S^m}>_{2,n}=||M_{P^m_o}X\beta_{S^m}||_{2,n}^2=||X\beta_{S^m}-X_{S^m_c}\zeta||_{2,n}^2\geq (1-c)||\beta_{S^m}||_{2,n}^2$, where $X_{S^m_c}\zeta=P_{X_{S^m_c}} (X\beta_{S^m})$.

Thus, it is easy to see that $<V^m_o,X\beta_{S^m}>_{2,n}=\sum_{j\in S^m}\beta_j <V^m_o,X_j>_{2,n} \geq (1-c)\sum_{j \in S^m}\beta_j^2$.

Thus, there exists some $j^*$ such that $|<V^m_o,X_{j^*}>|\geq (1-c)|\beta_j|\geq cJ$.




We know that the optimal $j_m$ must satisfy: $|<U^m_o,X_{j_m}>_{2,n}|\geq |<V^m_o,X_{j^*}>_{2,n}-<\varepsilon,X_{j^*}>_{2,n}|\geq (1-c)J-\lambda_n$. Thus, $|<V^m_o,X_{j_m}>_{2,n}|> (1-c)J-2\lambda_n$.

Hence, $||V^{m+1}_o||_{2,n}^2=||V^m_o-\gamma_{j_m}X_{j_m}||_{2,n}^2=||V^m_o||_{2,n}^2-2\gamma_{j_m}<U^m_o,X_{j_m}>+2\gamma_j<\epsilon,X_{j_m}>+\gamma_{j_m}^2$

$\leq ||V^m_o||_{2,n}^2-\gamma_{j_m}^2+2\lambda_n|\gamma_{j_m}|\leq ||V^m_o||_{2,n}^2-((1-c)J-\lambda_n)^2+2((1-c)J-\lambda_n)\lambda_n\leq ||V^m_o||_{2,n}^2-((1-c)J)^2+4(1-c)J\lambda_n.$ Consequently, $||V^m_o||_{2,n}^2\leq ||V^m_0||_{2,n}^2-K((1-c)J)^2s+4K(1-c)Js\lambda_n$. Since $||V^0_o||_{2,n}^2\leq CJ'^2s$, so let $K>\frac{(1-c)^2J^2}{CJ'^2}$ and assuming $\lambda_n\rightarrow 0$ would lead to
a $||V^m_o||_{2,n}^2<0$ asymptotically. That said, our assumption that \textquotedblleft there exists a $j$ which is never picked up in the process at $k=0,1,2,\ldots,KS$\textquotedblright\ is incorrect with probability going to 1.

Thus, at time $Ks$, $A_1$ must happen with probability going to 1. And therefore, we know that $||U^m_o||_{2,n}^2=||M_{P^{Ks}_o}\varepsilon||_{2,n}^2\leq \hat\sigma^2= \sigma^2+O_p(\frac{1}{\sqrt{n}})$. By definition of $m^*$, $m^*\leq Ks$. Therefore, $||V^m_o||_{2,n}^2\leq ||U^m_o||_{2,n}^2-\sigma^2+m^*\lambda_n=O_p(s\lambda_n^2)$.

\end{proof}

\subsection{Proofs for post-BA}
\begin{proof}[proof of Lemma \ref{lemma:post-boosting}]

It is sufficient to show that $T^0\subset T^{m^*}-T^0$ for $m^*=Ks$ with $K$ being large enough. If there exists a $j\in T^0$ which is never revisited at steps $1,2,...,m^*$, then in each step, we can choose the variable $j$:
By assumption A.2, the optimal step size $\gamma_j^m:=<U^m,X_j>_n=<\epsilon,X_j>_n+<V^m,X_j>_n$ must satisfy: $|\gamma_j^m|\geq_p \sqrt{1-c}|\beta_j|-\lambda_n> \sqrt{1-c}J(1-o(1))$. Hence, each step $||U^m||_{2,n}^2$ must decrease for at least ${(1-c)}J^2(1-\delta)^2$ for any $\delta>0$ and $n$ large enough. However, $||U^0||^2_{2,n}\leq (1+c)s(J')^2$, which implies that $m^*=Ks\leq \frac{(1+c)s(J')^2}{{(1-c)}J^2(1-\delta)^2}$, i.e, $K\leq \frac{(1+c)(J')^2}{{(1-c)}J^2(1-\delta)^2}$.

So for any $K>\frac{(1+c)(J')^2}{(1-c)J^2}$, as $n\rightarrow\infty$ and for $\delta$ small enough, $m^*Ks> \frac{(1+c)s(J')^2}{{(1-c)}J^2(1-\delta)^2}$, which leads to a contradiction.

Thus, for any $K>\frac{(1+c)(J')^2}{(1-c)J^2}$, all variables in $T^0$ must be revisited at steps $1,2,...,m^*$ with probability going to 1. The rest of the results simply follow $T^0\subset T^{m^*}-T^0$.
\end{proof}

\vskip 0.2in
\bibliography{Literatur_NR}

\begin{thebibliography}{32}
\providecommand{\natexlab}[1]{#1}
\providecommand{\url}[1]{\texttt{#1}}
\expandafter\ifx\csname urlstyle\endcsname\relax
  \providecommand{\doi}[1]{doi: #1}\else
  \providecommand{\doi}{doi: \begingroup \urlstyle{rm}\Url}\fi

\bibitem[Barron et~al.(2008)Barron, Cohen, Dahmen, and DeVore]{ms:barron:2008}
Andrew~R. Barron, Albert Cohen, Wolfgang Dahmen, and Ronald~A. DeVore.
\newblock Approximation and learning by greedy algorithms.
\newblock \emph{Ann. Statist.}, 36\penalty0 (1):\penalty0 64--94, 02 2008.
\newblock \doi{10.1214/009053607000000631}.
\newblock URL \url{http://dx.doi.org/10.1214/009053607000000631}.

\bibitem[Belloni et~al.(2012)Belloni, Chen, Chernozhukov, and
  Hansen]{BCCH:2012}
A.~Belloni, D.~Chen, V.~Chernozhukov, and C.~Hansen.
\newblock Sparse models and methods for optimal instruments with an application
  to eminent domain.
\newblock \emph{Econometrica}, 80\penalty0 (6):\penalty0 2369--2429, 2012.
\newblock ISSN 1468-0262.
\newblock \doi{10.3982/ECTA9626}.
\newblock URL \url{http://dx.doi.org/10.3982/ECTA9626}.

\bibitem[Belloni and Chernozhukov(2013)]{belloni:2013}
Alexandre Belloni and Victor Chernozhukov.
\newblock Least squares after model selection in high-dimensional sparse
  models.
\newblock \emph{Bernoulli}, 19\penalty0 (2):\penalty0 521--547, 05 2013.
\newblock \doi{10.3150/11-BEJ410}.
\newblock URL \url{http://dx.doi.org/10.3150/11-BEJ410}.

\bibitem[Belloni et~al.(2010)Belloni, Chernozhukov, and
  Hansen]{BCH2011:InferenceGauss}
Alexandre Belloni, Victor Chernozhukov, and Christian Hansen.
\newblock Inference for high-dimensional sparse econometric models.
\newblock \emph{Advances in Economics and Econometrics. 10th World Congress of
  Econometric Society. August 2010}, III:\penalty0 245--295, 2010.
\newblock ArXiv, 2011.

\bibitem[Bickel et~al.(2009)Bickel, Ritov, and
  Tsybakov]{BickelRitovTsybakov2009}
P.~J. Bickel, Y.~Ritov, and A.~B. Tsybakov.
\newblock Simultaneous analysis of {L}asso and {D}antzig selector.
\newblock \emph{Annals of Statistics}, 37\penalty0 (4):\penalty0 1705--1732,
  2009.

\bibitem[Breiman(1996)]{nr:breiman:1996}
Leo Breiman.
\newblock Bagging predictors.
\newblock \emph{Machine Learning}, 24:\penalty0 123--140, 1996.
\newblock ISSN 0885-6125.
\newblock \doi{10.1007/BF00058655}.
\newblock URL \url{http://dx.doi.org/10.1007/BF00058655}.

\bibitem[Breiman(1998)]{nr:breiman:1998}
Leo Breiman.
\newblock Arcing classifiers.
\newblock \emph{The Annals of Statistics}, 26\penalty0 (3):\penalty0 801--824,
  1998.

\bibitem[B\"{u}hlmann(2006)]{nr:buhlmann:2006}
Peter B\"{u}hlmann.
\newblock Boosting for high-dimensional linear models.
\newblock \emph{The Annals of Statistics}, 34\penalty0 (2):\penalty0 559--583,
  2006.

\bibitem[B\"{u}hlmann and Hothorn(2007)]{nr:buhlmann.hothorn:2007}
Peter B\"{u}hlmann and Torsten Hothorn.
\newblock Boosting algorithms: Regularization, prediction and model fitting.
\newblock \emph{Statistical Science}, 22\penalty0 (4):\penalty0 477--505, 2007.
\newblock \doi{10.1214/07-STS242}.
\newblock URL \url{http://dx.doi.org/10.1214/07-STS242}.
\newblock with discussion.

\bibitem[B\"{u}hlmann and Yu(2003)]{nr:buhlmann.yu:2003}
Peter B\"{u}hlmann and Bin Yu.
\newblock Boosting with the $\uppercase{l}_2$ {L}oss: {R}egression and
  classification.
\newblock \emph{Journal of the American Statistical Association}, 98\penalty0
  (462):\penalty0 324--339, 2003.
\newblock ISSN 01621459.
\newblock URL \url{http://www.jstor.org/stable/30045243}.

\bibitem[B{\"u}hlmann et~al.(2014)B{\"u}hlmann, Kalisch, and Meier]{BKM:2014}
Peter B{\"u}hlmann, Markus Kalisch, and Lukas Meier.
\newblock High-dimensional statistics with a view toward applications in
  biology.
\newblock \emph{Annual Review of Statistics and Its Application}, 1\penalty0
  (1):\penalty0 255--278, 2014.
\newblock \doi{10.1146/annurev-statistics-022513-115545}.
\newblock URL \url{http://dx.doi.org/10.1146/annurev-statistics-022513-115545}.

\bibitem[Candes and Tao(2007)]{candes:2007}
Emmanuel Candes and Terence Tao.
\newblock The dantzig selector: statistical estimation when p is much larger
  than n.
\newblock \emph{The Annals of Statistics}, pages 2313--2351, 2007.

\bibitem[Chernozhukov et~al.(2014)Chernozhukov, Chetverikov, and
  Kato]{CCK:2014}
Victor Chernozhukov, Denis Chetverikov, and Kengo Kato.
\newblock Gaussian approximation of suprema of empirical processes.
\newblock \emph{The Annals of Statistics}, 42\penalty0 (4):\penalty0
  1564--1597, 2014.

\bibitem[Chernozhukov et~al.(2015)Chernozhukov, Hansen, and Spindler]{hdm}
Victor Chernozhukov, Christian Hansen, and Martin Spindler.
\newblock \emph{hdm: High-Dimensional Metrics}, 2015.
\newblock R package version 0.1.

\bibitem[Cortez and Silva(2008)]{corsil2008}
Paulo Cortez and Alice Maria~Gonçalves Silva.
\newblock Using data mining to predict secondary school student performance.
\newblock 2008.

\bibitem[de~la Pe{\~n}a et~al.(2009)de~la Pe{\~n}a, Lai, and Shao]{delapena}
Victor~H. de~la Pe{\~n}a, Tze~Leung Lai, and Qi-Man Shao.
\newblock \emph{Self-normalized processes}.
\newblock Probability and its Applications (New York). Springer-Verlag, Berlin,
  2009.
\newblock ISBN 978-3-540-85635-1.
\newblock Limit theory and statistical applications.

\bibitem[DeVore and Temlyakov(1996)]{DeVore1996}
R.~A. DeVore and V.~N. Temlyakov.
\newblock Some remarks on greedy algorithms.
\newblock \emph{Advances in Computational Mathematics}, 5\penalty0
  (1):\penalty0 173--187, 1996.
\newblock ISSN 1572-9044.
\newblock \doi{10.1007/BF02124742}.
\newblock URL \url{http://dx.doi.org/10.1007/BF02124742}.

\bibitem[Freund et~al.(2016+)Freund, Grigas, and Mazumder]{FGM15}
Robert~M. Freund, Paul Grigas, and Rahul Mazumder.
\newblock A new perspective on boosting in linear regression via subgradient
  optimization and relatives.
\newblock \emph{Annals of Statistics}, 2016+.

\bibitem[Freund and Schapire(1997)]{freund:1997}
Yoav Freund and Robert~E Schapire.
\newblock A decision-theoretic generalization of on-line learning and an
  application to boosting.
\newblock \emph{Journal of computer and system sciences}, 55\penalty0
  (1):\penalty0 119--139, 1997.

\bibitem[Friedman et~al.(2000)Friedman, Hastie, and
  Tibshirani]{nr:friedman.hastie.ea:2000}
J.~H. Friedman, T.~Hastie, and R.~Tibshirani.
\newblock Additive logistic regression: A statistical view of boosting.
\newblock \emph{The Annals of Statistics}, 28:\penalty0 337--407, 2000.
\newblock \doi{10.1214/aos/1016218223}.
\newblock URL \url{http://projecteuclid.org/euclid.aos/1016218223}.
\newblock with discussion.

\bibitem[Friedman(2001)]{nr:friedman:2001}
Jerome~H. Friedman.
\newblock Greedy function approximation: {A} gradient boosting machine.
\newblock \emph{The Annals of Statistics}, 29\penalty0 (5):\penalty0
  1189--1232, 2001.
\newblock ISSN 00905364.
\newblock URL \url{http://www.jstor.org/stable/2699986}.

\bibitem[Ing and Lai(2011)]{ms:inglai:2011}
Ching-Kang Ing and Tze~Leung Lai.
\newblock A stepwise regression method and consistent model selection for
  high-dimensional sparse linear models.
\newblock \emph{Statistica Sinica}, 21\penalty0 (4):\penalty0 1473--1513, 2011.

\bibitem[Jerome~Friedman(2010)]{glmnet:2010}
Robert~Tibshirani Jerome~Friedman, Trevor~Hastie.
\newblock Regularization paths for generalized linear models via coordinate
  descent.
\newblock \emph{Journal of Statistical Software}, 33(1):\penalty0 1--22, 2010.
\newblock URL \url{http://www.jstatsoft.org/v33/i01}.

\bibitem[{Konyagin} and {Temlyakov}(1999)]{Konyagin1999}
S.~V. {Konyagin} and V.~N. {Temlyakov}.
\newblock {Rate of convergence of pure greedy algorithm.}
\newblock \emph{{East J. Approx.}}, 5\penalty0 (4):\penalty0 493--499, 1999.
\newblock ISSN 1310-6236.

\bibitem[Lai and Yuan(2021)]{LaiYuan2021}
Tze~Leung Lai and Hongsong Yuan.
\newblock {Stochastic Approximation: From Statistical Origin to Big-Data,
  Multidisciplinary Applications}.
\newblock \emph{Statistical Science}, 36\penalty0 (2):\penalty0 291 -- 302,
  2021.
\newblock \doi{10.1214/20-STS784}.
\newblock URL \url{https://doi.org/10.1214/20-STS784}.

\bibitem[Livshitz and Temlyakov(2003)]{Livshitz2003}
E.D. Livshitz and V.N. Temlyakov.
\newblock Two lower estimates in greedy approximation.
\newblock \emph{Constructive Approximation}, 19\penalty0 (4):\penalty0
  509--523, 2003.
\newblock ISSN 1432-0940.
\newblock \doi{10.1007/s00365-003-0533-6}.
\newblock URL \url{http://dx.doi.org/10.1007/s00365-003-0533-6}.

\bibitem[Mallat and Zhang(1993)]{ms:mallatzhang:1993}
S.G. Mallat and Zhifeng Zhang.
\newblock Matching pursuits with time-frequency dictionaries.
\newblock \emph{Trans. Sig. Proc.}, 41\penalty0 (12):\penalty0 3397--3415, Dec
  1993.
\newblock \doi{10.1109/78.258082}.
\newblock URL \url{http://dx.doi.org/10.1109/78.258082}.

\bibitem[{R Core Team}(2014)]{R:2014}
{R Core Team}.
\newblock \emph{R: A Language and Environment for Statistical Computing}.
\newblock R Foundation for Statistical Computing, Vienna, Austria, 2014.
\newblock URL \url{http://www.R-project.org/}.

\bibitem[Temlyakov(2011)]{Temlyakov:2011}
Vladimir Temlyakov.
\newblock \emph{Greedy Approximation}.
\newblock Cambridge University Press, New York, NY, USA, 1st edition, 2011.
\newblock ISBN 1107003377, 9781107003378.

\bibitem[Temlyakov(2000)]{Temlyakov2000}
V.N. Temlyakov.
\newblock Weak greedy algorithms.
\newblock \emph{Advances in Computational Mathematics}, 12\penalty0
  (2):\penalty0 213--227, 2000.
\newblock ISSN 1572-9044.
\newblock \doi{10.1023/A:1018917218956}.
\newblock URL \url{http://dx.doi.org/10.1023/A:1018917218956}.

\bibitem[van~de Geer and B\"uhlmann(2009)]{vandegeer2009}
Sara~A. van~de Geer and Peter B\"uhlmann.
\newblock On the conditions used to prove oracle results for the lasso.
\newblock \emph{Electron. J. Statist.}, 3:\penalty0 1360--1392, 2009.
\newblock \doi{10.1214/09-EJS506}.
\newblock URL \url{http://dx.doi.org/10.1214/09-EJS506}.

\bibitem[Zhang and Yu(2005)]{nr:zhang.yu:2005}
T.~Zhang and B.~Yu.
\newblock Boosting with early stopping: Convergence and consistency.
\newblock \emph{The Annals of Statistics}, 33:\penalty0 1538--1579, 2005.
\newblock \doi{10.1214/009053605000000255}.
\newblock URL \url{http://projecteuclid.org/euclid.aos/1123250222}.

\end{thebibliography}

\end{document}